\theoremstyle{plain}
\newtheorem{theorem}{Theorem}
\newtheorem{lemma}[theorem]{Lemma}
\theoremstyle{definition}
\theoremstyle{remark}
\newtheorem{remark}{Remark}
\def\eqref#1{equation~\ref{#1}}
\def\1{\bm{1}}
\def\eps{{\epsilon}}
\DeclareMathAlphabet{\mathsfit}{\encodingdefault}{\sfdefault}{m}{sl}
\SetMathAlphabet{\mathsfit}{bold}{\encodingdefault}{\sfdefault}{bx}{n}
\newcommand{\E}{\mathbb{E}}
\newcommand{\R}{\mathbb{R}}
\DeclareMathOperator*{\argmin}{arg\,min}
\def\th{\theta}
\def\Th{\Theta}
\def\O{\Omega}
\def\gdv{\nabla V}
\def\l{\left}
\def\r{\right}
\def\qd{\quad}
\def\E{\mathbb{E}}
\def\V{\mathbb{V}}
\def\R{\mathbb{R}}
\def\st{*}
\def\pt{\partial}
\def\s{\sigma}
\def\S{\mathbb{S}}
\def\A{\mathbb{A}}
\def\d{\delta}
\def\nb{\nabla}
\def\bth{\bm{\th}}
\def\bTh{\bm{\Th}}
\def\k{\kappa}
\title{Why resampling outperforms reweighting for correcting sampling bias with stochastic gradients}
\author{Jing An, Lexing Ying and Yuhua Zhu \\
Stanford University\\
\texttt{\{jingan, lexing, yuhuazhu\}@stanford.edu} \\
}
\begin{document}

\maketitle

\begin{abstract}
A data set sampled from a certain population is biased if the subgroups of the population are
sampled at proportions that are significantly different from their underlying proportions. Training
machine learning models on biased data sets requires correction techniques to compensate for the
bias. We consider two commonly-used techniques, resampling and reweighting, that rebalance the
proportions of the subgroups to maintain the desired objective function. Though statistically
equivalent, it has been observed that resampling outperforms reweighting when combined with
stochastic gradient algorithms. By analyzing illustrative examples, we explain the reason behind
this phenomenon using tools from dynamical stability and stochastic asymptotics. We also present
experiments from regression, classification, and off-policy prediction to demonstrate that this is a
general phenomenon. We argue that it is imperative to consider the objective function design and the
optimization algorithm together while addressing the sampling bias.


\end{abstract}

\section{Introduction}\label{sec:intro}

A data set sampled from a certain population is called {\em biased} if the subgroups of the
population are sampled at proportions that are significantly different from their underlying
population proportions. Applying machine learning algorithms naively to biased training data can
raise serious concerns and lead to controversial results
\citep{sweeney2013discrimination,kay2015unequal,menon2020pulse}. In many domains such as demographic
surveys, fraud detection, identification of rare diseases, and natural disasters prediction, a model
trained from biased data tends to favor oversampled subgroups by achieving high accuracy there while
sacrificing the performance on undersampled subgroups. Although one can improve by diversifying and
balancing during the data collection process, it is often hard or impossible to eliminate the
sampling bias due to historical and operational issues.

In order to mitigate the biases and discriminations against the undersampled subgroups, a common
technique is to preprocess the data set by compensating the mismatch between {\em population
  proportion} and the {\em sampling proportion}. Among various approaches, two commonly-used choices
are {\em reweighting} and {\em resampling}. In reweighting, one multiplies each sample with a ratio
equal to its population proportion over its sampling proportion. In resampling, on the other hand,
one corrects the proportion mismatch by either generating new samples for the undersampled subgroups
or selecting a subset of samples for the oversampled subgroups. Both methods result in statistically
equivalent models in terms of the loss function (see details in Section \ref{sec:setup}). However,
it has been observed in practice that resampling often outperforms reweighting significantly, such as
boosting algorithms in classification \citep{galar2011review, seiffert2008resampling}, off-policy
prediction in reinforcement learning \citep{schlegel2019importance} and so on. The obvious question
is why.

\paragraph{Main contributions.}
Our main contribution is to provide an answer to this question: resampling outperforms reweighting
because of the stochastic gradient-type algorithms used for training. To the best of our knowledge,
our explanation is the first theoretical quantitative analysis for this phenomenon.  With
stochastic gradient descent (SGD) being the dominant method for model training, our analysis is
based on some recent developments for understanding SGD.  We show via simple and explicitly
analyzable examples why resampling generates expected results while reweighting performs
undesirably. Our theoretical analysis is based on two points of view, one from the dynamical
stability perspective and the other from stochastic asymptotics.



In addition to the theoretical analysis, we present experimental examples from three distinct
categories (classification, regression, and off-policy prediction) to demonstrate that resampling
outperforms reweighting in practice. This empirical study illustrates that this is a quite general
phenomenon when models are trained using stochastic gradient type algorithms.

Our theoretical analysis and experiments show clearly that adjusting only the loss functions is not
sufficient for fixing the biased data problem. The output can be disastrous if one overlooks the
optimization algorithm used in the training. In fact, recent understanding has shown that objective
function design and optimization algorithm are closely related, for example optimization algorithms
such as SGD play a key role in the generalizability of deep neural networks. Therefore in order to
address the biased data issue, we advocate for {\em considering data, model, and optimization as an
  integrated system.}

\paragraph{Related work.}
In a broader scope, resampling and reweighting can be considered as instances of preprocessing the
training data to tackle biases of machine learning algorithms. Though there are many well-developed
resampling \citep{mani2003knn, he2009learning, maciejewski2011local} and reweighting
\citep{kumar2010self,malisiewicz2011ensemble,chang2017active} techniques, we only focus on the
reweighting approaches that do not change the optimization problem. It has been well-known that
training algorithms using disparate data can lead to algorithmic discrimination
\citep{bolukbasi2016man,caliskan2017semantics}, and over the years there have been growing efforts
to mitigate such biases, for example see
\citep{amini2019uncovering,kamiran2012data,calmon2017optimized,zhao2019conditional,lopez2013insight}. We
also refer to \citep{haixiang2017learning,he2013imbalanced,krawczyk2016learning} for a comprehensive
review of this growing research field.

Our approaches for understanding the dynamics of resampling and reweighting under SGD are based on
tools from numerical analysis for stochastic systems. Connections between numerical analysis and
stochastic algorithms have been rapidly developing in recent years. The dynamical stability
perspective has been used in \citep{wu2018sgd} to show the impact of learning rate and batch size in
minima selection. The stochastic differential equations (SDE) approach for approximating stochastic
optimization methods can be traced in the line of work
\citep{li2017stochastic,li2019stochastic,rotskoff2018parameters, shi2019acceleration}, just to
mention a few.


\section{Problem setup}\label{sec:setup}

Let us consider a population that is comprised of two different groups, where a proportion $a_1$ of
the population belongs to the first group, and the rest with the proportion $a_2= 1-a_1$ belongs to
the second (i.e., $a_1,a_2>0$ and $a_1+a_2=1$). In what follows, we shall call $a_1$ and $a_2$ the
{\em population proportions}. Consider an optimization problem for this population over a parameter
$\th$. For simplicity, we assume that each individual from the first group experiences a loss
function $V_1(\th)$, while each individual from the second group has a loss function of type
$V_2(\th)$. Here the loss function $V_1(\th)$ is assumed to be identical across all members of the
first group and the same for $V_2(\th)$ across the second group, however it is possible to extend
the formulation to allow for loss function variation within each group. Based on this setup, a
minimization problem over the whole population is to find
\begin{equation}
  \th^* = \argmin_{\th} V(\th), \quad \text{ where }~ V(\th)\equiv a_1 V_1(\th) + a_2V_2(\th).
  \label{eqn:prm}
\end{equation}
For a given set $\Omega$ of $N$ individuals sampled uniformly from the population, the empirical
minimization problem is
\begin{equation}
  \th^* = \argmin_{\th} \frac{1}{N} \sum_{r\in\Omega} V_{i_r}(\th),
  \label{eqn:erm}
\end{equation}
where $i_r \in \{1,2\}$ denotes which group an individual $r$ belongs to. When $N$ grows, the
empirical loss in (\ref{eqn:erm}) is consistent with the population loss in (\ref{eqn:prm}) as there
are approximately $a_1$ fraction of samples from the first group and $a_2$ fraction of samples from
the second.

However, the sampling can be far from uniformly random in reality. Let $n_1$ and $n_2$ with
$n_1+n_2=N$ denote the number of samples from the first and the second group, respectively. It is
convenient to define $f_i, i= 1,2$ as the sampling proportions for each group, i.e., $f_1=n_1/N$ and
$f_2=n_2/N$ with $f_1+f_2=1$. The data set is biased when the sampling proportions $f_1$ and $f_2$
are different from the population proportions $a_1$ and $a_2$. In such a case, the empirical loss is
$f_1 V_1(\th) + f_2 V_2(\th)$, which is clearly wrong when compared with (\ref{eqn:prm}).


Let us consider two basic strategies to adjust the model: reweighting and resampling. In
reweighting, one assigns to each sample $r\in\Omega$ a weight $a_{i_r}/f_{i_r}$ and the reweighting
loss function is
\begin{equation}
  \label{eq: reweight}
  V_w(\th) \equiv \frac{1}{N} \sum_{r\in\Omega} \frac{a_{i_r}}{f_{i_r}} V_{i_r}(\th)= a_1 V_1(\th) + a_2 V_2(\th).
\end{equation}
In resampling, one either adds samples to the minority group (i.e., oversampling) or removing
samples from the majority group (i.e., undersampling). Although the actual implementation of
oversampling and undersampling could be quite sophisticated in order to avoid overfitting or loss of
information, mathematically we interpret the resampling as constructing a new set of samples
of size $M$, among which $a_1 M$ samples are of the first group and $a_2 M$ samples of the
second. The resampling loss function is
\begin{equation}
  \label{eq: resample}
  V_s(\th) \equiv \frac{1}{M} \sum_{s} V_{i_s}(\th) = a_1 V_1(\th) + a_2 V_2(\th).
\end{equation}
Notice that both $V_w(\th)$ and $V_s(\th)$ are consistent with the population loss function
$V(\th)$. This means that, under mild conditions on $V_1(\th)$ and $V_2(\th)$, a deterministic
gradient descent algorithm from a generic initial condition converges to similar solutions for
$V_w(\th)$ and $V_s(\th)$. For a stochastic gradient descent algorithm, the expectations of the stochastic gradients of $V_w(\th)$ and $V_s(\th)$ also agree at any $\th$ value. However, as we shall explain below, the training behavior can be drastically different for a stochastic gradient algorithm. The key reason is that the variances
experienced for $V_w(\th)$ and $V_s(\th)$ can be drastically different: computing the variances of gradients for resampling and reweighting reveals that 
\begin{equation}
\label{eq: variance}
\begin{aligned}
    & \V\l[ \nabla \hat{V}_s(\th) \r] = a_1 \nabla V_1(\th) \nabla V_1(\th)^T  +
  a_2 \nabla V_2(\th) \nabla V_2(\th)^T  - (\E[\nabla \hat{V}_s(\th)])^2,\\
  & \V\l[ \nabla \hat{V}_w(\th) \r] = \frac{a_1^2}{f_1} \nabla V_1(\th) \nabla V_1(\th)^T  +
  \frac{a_2^2}{f_2} \nabla V_2(\th) \nabla V_2(\th)^T  - (\E[\nabla \hat{V}_w(\th)])^2.
\end{aligned}    
\end{equation}
These formulas indicate that, when $f_1/f_2$ is significantly misaligned with $a_1/a_2$, the variance of reweighting can be much larger. Without knowing the optimal learning rates a priori, it is difficult to select an efficient learning rate for reliable and stable performance for stiff problems, when only reweighting is used. In comparison, resampling is more favorable especially when the choice of learning rates is restrictive.



\section{Stability analysis}\label{sec:stb}

Let us use a simple example to illustrate why resampling outperforms reweighting under SGD, from the
viewpoint of stability. Consider two loss functions $V_1$ and $V_2$ with disjoint supports,
\begin{equation}
  \label{eq: simple_eq1}
  V_1(\th)=\begin{cases}
  \frac{1}{2}(\th+1)^2-\frac{1}{2}, & \th\le 0\\
  0, & \th>0,
  \end{cases}
  \quad
  V_2(\th)=\begin{cases}
  0, & \th\le 0\\
  \frac{1}{2}(\th-1)^2-\frac{1}{2}, & \th>0,
  \end{cases}
\end{equation}
each of which is quadratic on its support. The population loss function is
$V(\th)=a_1V_1(\th)+a_2V_2(\th)$, with two local minima at $\th=-1$ and $\th=1$. The gradients for
$V_1$ and $V_2$ are
\[
\nabla V_1(\th)= \begin{cases}
\th+1, & \th\le 0\\
0, & \th>0.
\end{cases},\quad
\nabla V_2(\th)=\begin{cases}
0, & \th\le 0\\
\th-1, & \th>0.
\end{cases}
\]
Suppose that the population proportions satisfy $a_2>a_1$, then $\th=1$ is the global minimizer and it
is desired that SGD should be stable near it. However, as shown in Figure \ref{fig:large}, when the
sampling proportion $f_2$ is significantly less than the population proportion $a_2$, for
reweighting $\th=1$ can easily become unstable: even if one starts near the global minimizer $\th=1$,
the trajectories for reweighting always gear towards $\th=-1$ after a few steps (see Figure
\ref{fig:large}(1)). On the other hand, for resampling $\th=1$ is quite stable (see Figure
\ref{fig:large}(2)).

\begin{figure}[h!]
  \centering
  \begin{tabular}{cc}
    \includegraphics[scale=0.3]{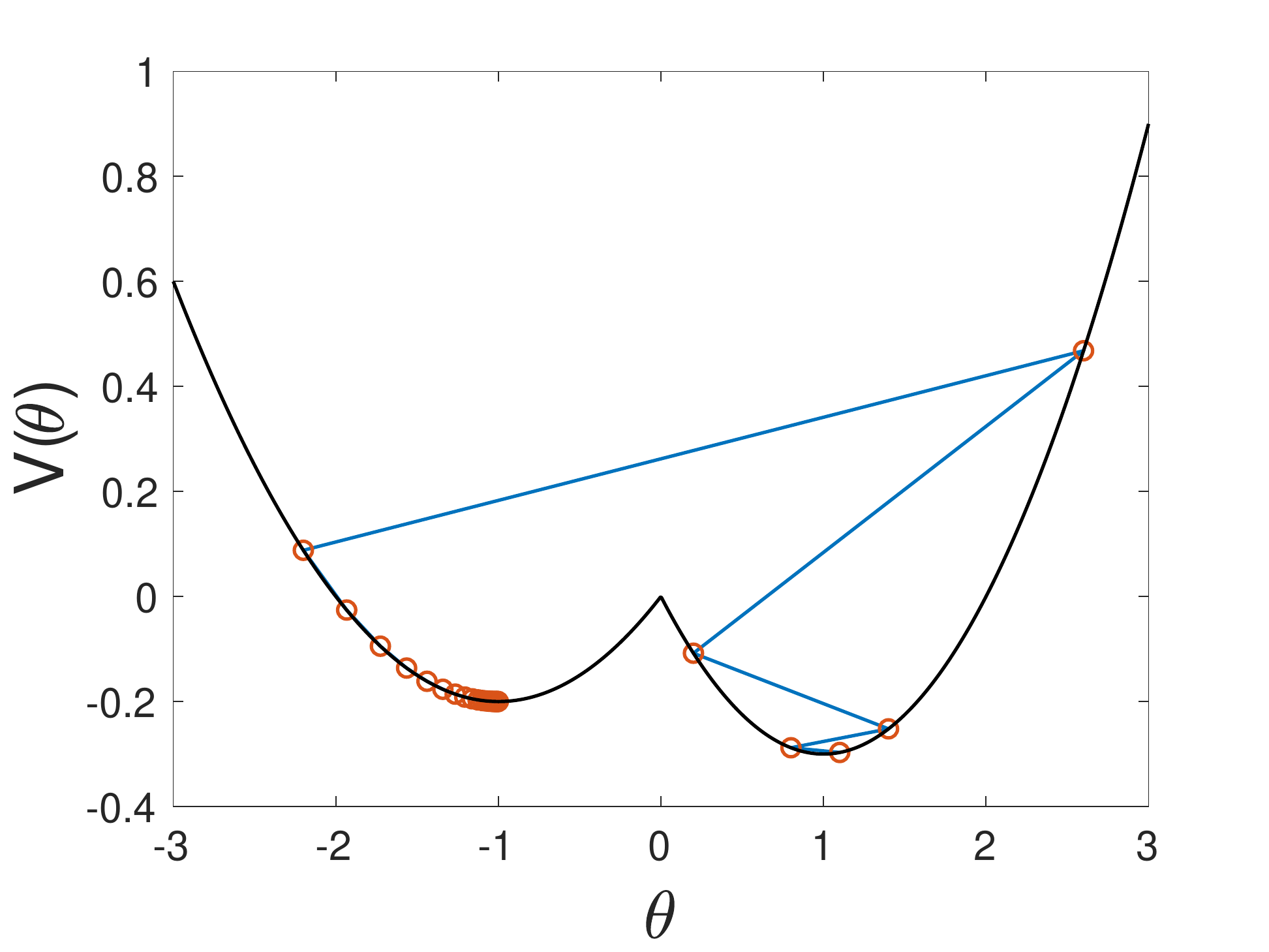} &   \includegraphics[scale=0.3]{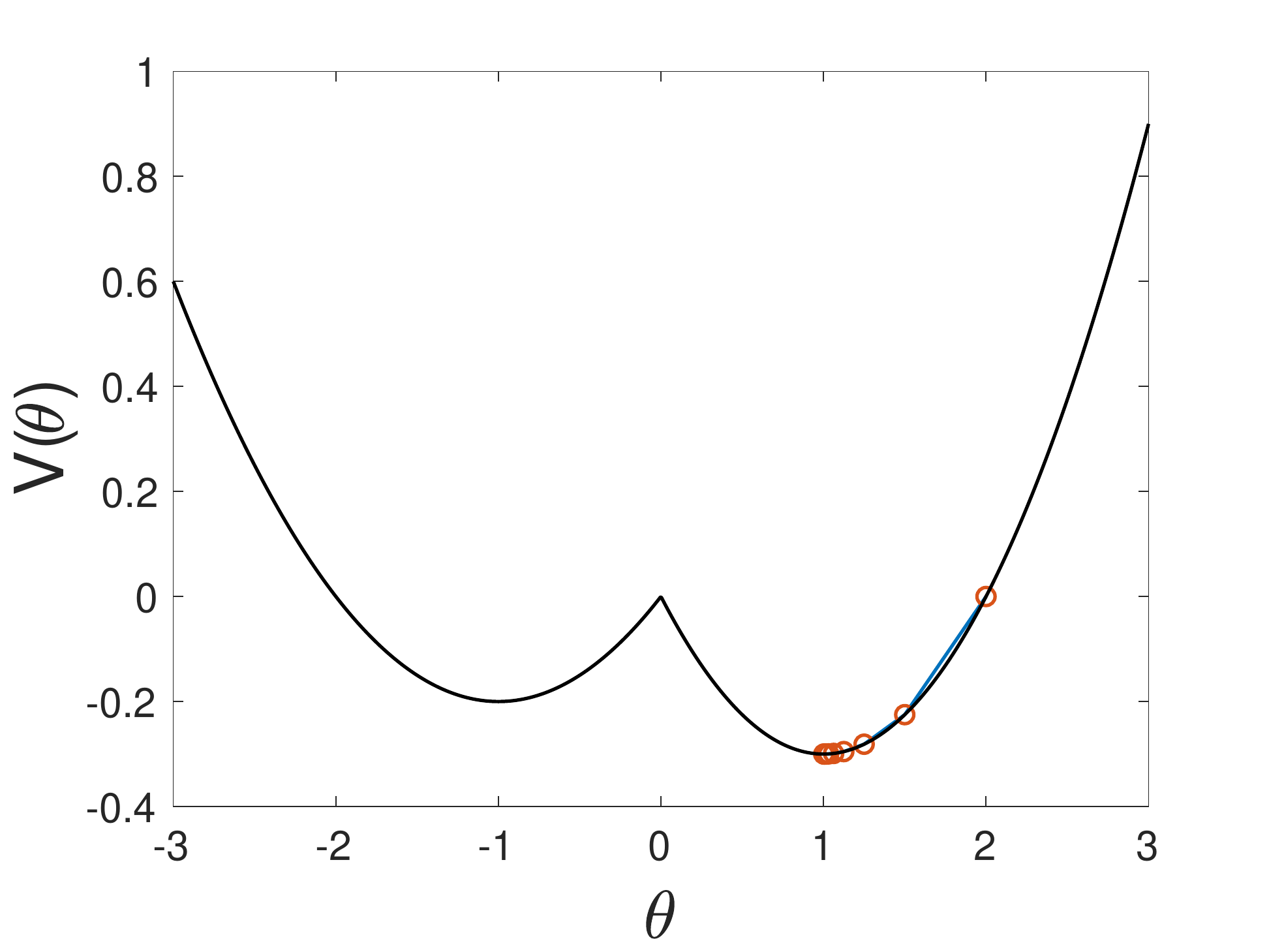}\\
    (1) Reweighting & (2) Resampling
  \end{tabular}
  \caption{Comparison of reweighting and resampling with $a_1/a_2=0.4/0.6$ and $f_1/f_2=0.9/0.1$ at
    the learning rate $\eta=0.5$. The resampling strategy here is to randomly select the sub-population $i$ with the probability $a_i$ with replacement in each iteration. (1) For reweighting, the
    trajectory starting from $\theta_0 = 1.1$ can end up at $\th=-1$ after a few iterations, but $\th=-1$ is not the global
    minimizer. (2) For resampling, the trajectory starting from $\theta_0=2.0$ stays close to the desired minimizer $\th=1$.
    Hence resampling is more reliable than reweighting. We include more comparisons with various learning rates in Appendix \ref{sec: moreplots} to show that resampling is stable for a wider range of $\eta$.}
  \label{fig:large}
\end{figure}




The expectations of the stochastic gradient are the same for both methods. It is the difference in
the second moment that explains why trajectories near the two minima exhibit different
behaviors. Our explanation is based on the stability analysis framework used in
\citep{wu2018sgd}. By definition, a stationary point $\th^*$ is {\em stochastically stable} if there exists a uniform constant $0 < C \leq 1$ such that 
$\E[\lVert \th_k - \th^*\rVert^2] \leq C \lVert \th_0 -
\th^* \rVert^2$, where $\th_k$ is the $k$-th iterate of SGD. The stability conditions for resampling
and reweighting are stated in the following two lemmas, in which we use $\eta$ to denote the learning rate.


\begin{lemma}
  \label{lemma: eg1_1}
  For resampling, the conditions for the SGD to be stochastically stable around
  $\th=-1$ and $\th=1$ are respectively
  \[
  (1-\eta a_1)^2 + \eta^2 a_1 a_2  \le 1,\quad
  (1-\eta a_2)^2 + \eta^2 a_1 a_2  \le 1.
  \]
\end{lemma}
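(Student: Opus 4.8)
The plan is to track the second moment of the SGD iterate under the resampling scheme, where at each step the stochastic gradient equals $\nabla V_1(\th_k)$ with probability $a_1$ and $\nabla V_2(\th_k)$ with probability $a_2$. I would analyze the two stationary points separately, exploiting that each $V_i$ is \emph{exactly} quadratic on its support, so that near each minimizer the iteration is exactly affine (no linearization error) as long as the iterate stays on the correct side of the origin. First I would focus on $\th=1$ and work in the region $\th>0$, where $\nabla V_1\equiv 0$ and $\nabla V_2(\th)=\th-1$. Writing $u_k=\th_k-1$, the update $\th_{k+1}=\th_k-\eta\,\nabla V_{i_k}(\th_k)$ becomes $u_{k+1}=u_k$ with probability $a_1$ and $u_{k+1}=(1-\eta)u_k$ with probability $a_2$.

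The key computation is the one-step conditional second moment. I would first compute the conditional mean $\E[u_{k+1}\mid u_k]=(a_1+a_2(1-\eta))u_k=(1-\eta a_2)u_k$, and then the conditional variance, which collapses to $\V[u_{k+1}\mid u_k]=\eta^2 a_1 a_2\, u_k^2$ after using $a_1+a_2=1$. Combining these through $\E[u_{k+1}^2\mid u_k]=(\E[u_{k+1}\mid u_k])^2+\V[u_{k+1}\mid u_k]$ yields $\E[u_{k+1}^2\mid u_k]=\big((1-\eta a_2)^2+\eta^2 a_1 a_2\big)u_k^2$, which is exactly the contraction factor in the lemma. This mean-squared-plus-variance split is the conceptual heart: the $(1-\eta a_2)^2$ term is the deterministic (mean) contraction, while $\eta^2 a_1 a_2$ is the stochastic inflation caused by the randomness of which group is sampled at each step.

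Taking full expectations and iterating gives $\E[u_k^2]=\rho^k u_0^2$ with $\rho=(1-\eta a_2)^2+\eta^2 a_1 a_2>0$, so the stochastic stability requirement $\E[\lVert\th_k-1\rVert^2]\le C\lVert\th_0-1\rVert^2$ with $0<C\le 1$ holds (taking $C=1$) precisely when $\rho\le 1$. The condition around $\th=-1$ then follows by the symmetry $\th\mapsto-\th$, $a_1\leftrightarrow a_2$: in the region $\th<0$ one has $\nabla V_2\equiv 0$ and $\nabla V_1(\th)=\th+1$, and the identical computation (with the roles of $a_1$ and $a_2$ exchanged) produces the recursion factor $(1-\eta a_1)^2+\eta^2 a_1 a_2$.

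The step I expect to be the main obstacle is justifying that the analysis stays local, i.e.\ that the iterates do not cross into the other quadratic branch where the gradient formulas change; the recursion is only exactly affine while $\th_k>0$ (resp.\ $\th_k<0$). I would handle this either by restricting to a neighborhood of the minimizer small enough that the contraction keeps the trajectory on the correct side, or, following the linearized-stability convention of \citep{wu2018sgd}, by interpreting the lemma as the stability criterion for the exact local recursion encoded by the second-moment factor. Once this is granted, all remaining steps are elementary algebra and a one-line induction.
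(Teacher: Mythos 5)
Your proposal is correct and follows essentially the same route as the paper: both reduce the resampling SGD near each minimizer to the exact affine random recursion $u_{k+1}=(1-\eta W)u_k$ with $W$ Bernoulli, compute the one-step second-moment factor $(1-\eta a_i)^2+\eta^2 a_1a_2$, and iterate. Your explicit mean-plus-variance split and your remark about the iterates needing to stay on the correct side of the origin are just a more careful writeup of the same computation the paper performs.
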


\begin{lemma}
  \label{lemma: eg1_2}
  For reweighting, the condition for the SGD to be stochastically stable around
  $\th=-1$ and $\th=1$ are respectively
  \[
  (1-\eta a_1)^2 + \eta^2 f_1 f_2 \left(\frac{a_1}{f_1}\right)^2 \le 1, \quad
  (1-\eta a_2)^2 + \eta^2 f_1 f_2 \left(\frac{a_2}{f_2}\right)^2 \le 1.
\]
\end{lemma}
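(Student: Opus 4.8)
The plan is to reduce both lemmas to one scalar second-moment computation for a random linear recursion and then iterate it. First I would localize around each stationary point. Because $V_1$ and $V_2$ have disjoint supports, in the region $\th>0$ containing $\th=1$ one has $\nabla V_1\equiv0$ and $\nabla V_2(\th)=\th-1$, while in the region $\th<0$ containing $\th=-1$ one has $\nabla V_2\equiv0$ and $\nabla V_1(\th)=\th+1$. Writing $u_k=\th_k-\th^\st$ for the signed deviation, the SGD step $\th_{k+1}=\th_k-\eta g_k$ then collapses, on the active side, to a purely multiplicative recursion $u_{k+1}=m_k u_k$ with $m_k=1-\eta\gamma_k$, where $\gamma_k$ is the (possibly reweighted) gradient coefficient of whichever loss is nonzero there.

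The heart of the argument is to write down the two-point law of $\gamma_k$ and compute $\E[m_k^2]$. Exactly one group is drawn per iteration, so near $\th=1$: for resampling $\gamma_k=1$ with probability $a_2$ and $\gamma_k=0$ with probability $a_1$; for reweighting $\gamma_k=a_2/f_2$ with probability $f_2$ and $\gamma_k=0$ with probability $f_1$. Near $\th=-1$ the same holds with the roles of groups $1$ and $2$ interchanged. Since the fresh draw defining $m_k$ is independent of $u_k$, conditioning gives $\E[u_{k+1}^2\mid u_k]=\E[m_k^2]\,u_k^2$, and iterating yields $\E[u_k^2]=(\E[m_k^2])^k\,u_0^2$. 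Hence the contraction $\E[\lVert\th_k-\th^\st\rVert^2]\le C\lVert\th_0-\th^\st\rVert^2$ with $0<C\le1$ holds (with $C=1$) precisely when the one-step factor $\rho:=\E[m_k^2]\le1$, which is the content of the two lemmas.

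To match $\rho$ with the stated expressions I would use the decomposition $\E[m_k^2]=(\E[m_k])^2+\V[m_k]=(1-\eta\,\E[\gamma_k])^2+\eta^2\,\V[\gamma_k]$. Both correction schemes are constructed so that $\E[\gamma_k]=a_2$ near $\th=1$ (resp.\ $a_1$ near $\th=-1$), so the mean term $(1-\eta a_2)^2$ is identical for the two methods and the entire difference lives in the variance: $\V[\gamma_k]=a_1a_2$ for resampling versus $\V[\gamma_k]=f_1f_2(a_2/f_2)^2$ for reweighting. These are exactly the single-group second-moment contributions appearing in \eqref{eq: variance}, so substituting them produces the four inequalities of Lemmas~\ref{lemma: eg1_1} and~\ref{lemma: eg1_2} simultaneously.

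\textbf{Main obstacle.} The delicate point is justifying the localization, namely that $u_{k+1}=m_ku_k$ is the exact dynamics and not merely a linearization. The recursion is valid only while the iterate stays on the correct side of the origin, so that the inactive loss keeps its vanishing gradient. I would resolve this by observing that whenever $|m_k|\le1$ (which holds for $\eta$ in the stable range) the magnitude $|u_k|$ is nonincreasing, so a sufficiently small initial deviation keeps $\th_k$ on the same side for every $k$ and the recursion holds verbatim. When $|m_k|>1$ the iterate can instead be ejected across the origin into the competing basin, which is exactly the drift toward the wrong minimizer observed for reweighting in Figure~\ref{fig:large}(1); thus $\rho\le1$ is the genuine local stability threshold.
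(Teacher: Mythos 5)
Your proposal is correct and follows essentially the same route as the paper: model the local update as the multiplicative recursion $(\th_{k+1}-\th^\st)=(1-\eta W)(\th_k-\th^\st)$ with $W$ a two-point (Bernoulli-type) random gradient coefficient, compute $\E[(1-\eta W)^2]=(1-\eta\,\E W)^2+\eta^2\,\V[W]$ with $\E W=a_i$ and $\V[W]=f_1f_2(a_i/f_i)^2$, iterate, and require the one-step factor to be at most $1$. Your additional remark justifying the localization (that $\E[m_k^2]\le1$ here forces $|m_k|\le1$ almost surely, so iterates never cross the origin) is a small but welcome refinement that the paper leaves implicit.
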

Note that the stability conditions for resampling are independent of the sampling proportions
$(f_1,f_2)$, while the ones for reweighting clearly depend on $(f_1,f_2)$. We defer the detailed
computations to Appendix \ref{sec:sec3}.

Lemma \ref{lemma: eg1_2} shows that reweighting can incur a more stringent stability criterion. Let us consider the case $a_1 = \frac12 - \eps, a_2=\frac{1}{2}+\eps$ with a small
constant $\eps>0$ and $f_2/f_1 \ll 1$. For reweighting, the global minimum $\th=1$ is stochastically
stable only if $\eta(1+f_1/f_2) \leq 4+O(\eps)$. This condition becomes rather stringent in terms of
the learning rate $\eta$ since $f_1/f_2 \gg 1$. On the other hand, the local minimizer $\th=-1$ is
stable if $\eta(1+f_2/f_1) \leq 4+O(\eps)$, which could be satisfied for a broader range of $\eta$
because $f_2/f_1 \ll 1$. In other words, for a fixed learning rate $\eta$, when the ratio $f_2/f_1$
between the sampling proportions is sufficiently small, the desired minimizer $\th=1$ is no longer
statistically stable with respect to SGD.




\section{SDE analysis}\label{sec:sde}

The stability analysis can only be carried for a learning rate $\eta$ of a finite size. However, even
for a small learning rate $\eta$, one can show that the reweighting method is still unreliable
from a different perspective. This section applies stochastic differential equation analysis to
demonstrate it. 

Let us again use a simple example to illustrate the main idea. Consider the following two loss
functions,
\[
V_1(\th)=\begin{cases}
|\th+1|-1, & \th\le 0\\
\eps \th, & \th>0
\end{cases},
\quad
V_2(\th)=\begin{cases}
-\eps \th, & \th\le 0\\
|\th-1|-1, & \th>0
\end{cases},
\]
with $0 < \eps \ll 1$. The population loss function is $V(\th)=a_1V_1(\th)+a_2V_2(\th)$ with local
minimizers $\th=-1$ and $\th=1$. Note that the $O(\eps)$ terms are necessary. Without it, if the SGD
starts in $(-\infty,0)$, all iterates will stay in this region because there is no drift from
$V_2(\th)$. Similarly, if the SGD starts in $(0,\infty)$, no iterates will move to
$(-\infty,0)$. That means the result of SGD only depends on the initialization when $O(\eps)$ term
is absent.

In Figure \ref{fig:small}, we present numerical simulations of the resampling and reweighting
methods for the designed loss function $V(\th)$. If $a_2>a_1$, then the global minimizer of $V(\th)$
is $\th=1$ (see the Figure \ref{fig:small}(1)). Consider a setup with population proportions
$a_1/a_2=0.4/0.6$ along sampling proportions $f_1/f_2=0.9/0.1$, which are quite different.  Figures
\ref{fig:small}(2) and (3) show the dynamics under the reweighting and resampling methods,
respectively. The plots show that, while the trajectory for resampling is stable across
time, the trajectory for reweighting quickly escapes to the (non-global) local minimizer $\th=-1$
even when it starts near the global minimizer $\th=1$.

\begin{figure}[h!]
  \centering
  \begin{tabular}{ccc}
    \includegraphics[scale=0.23]{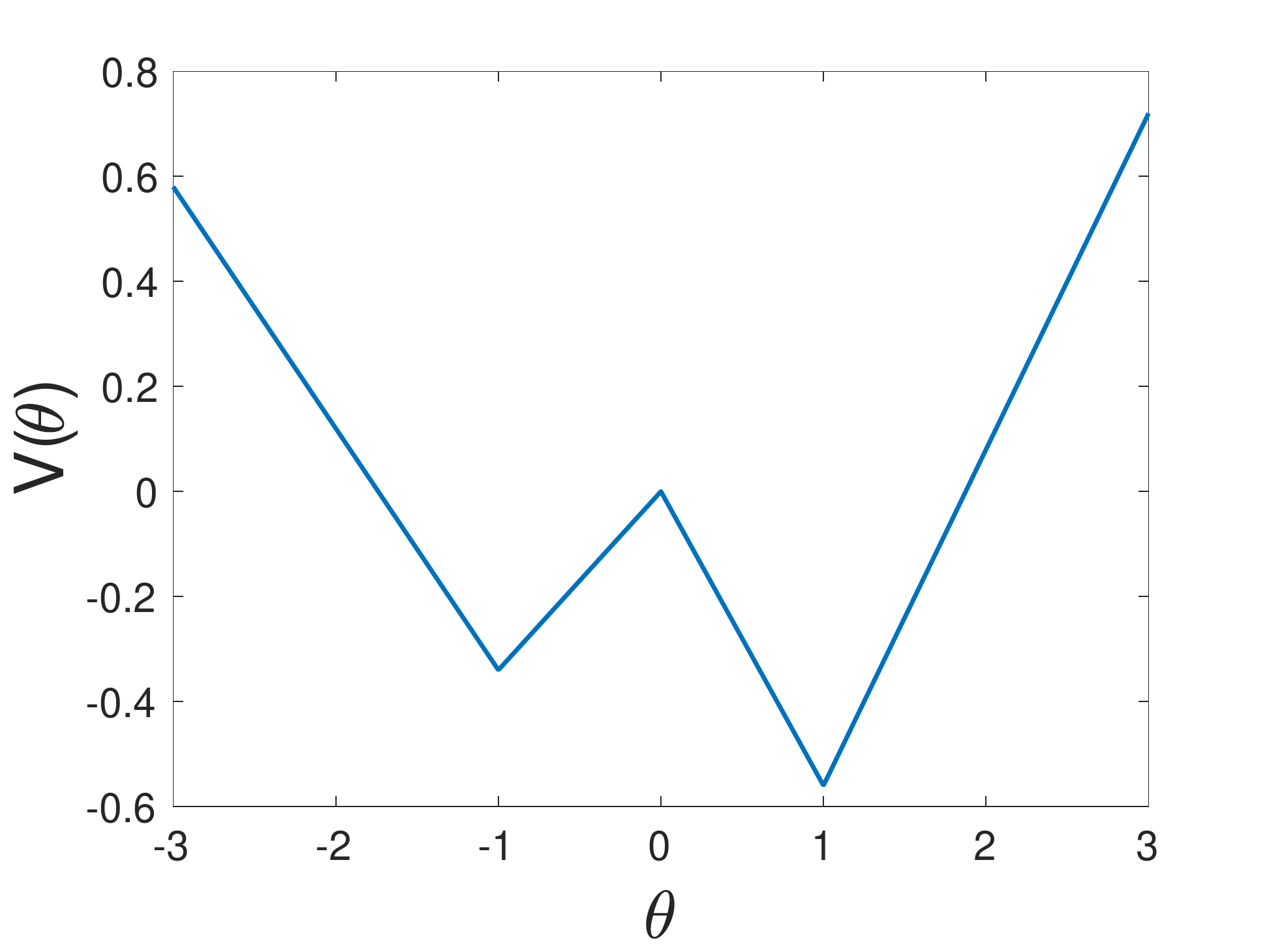}&
    \includegraphics[scale=0.23]{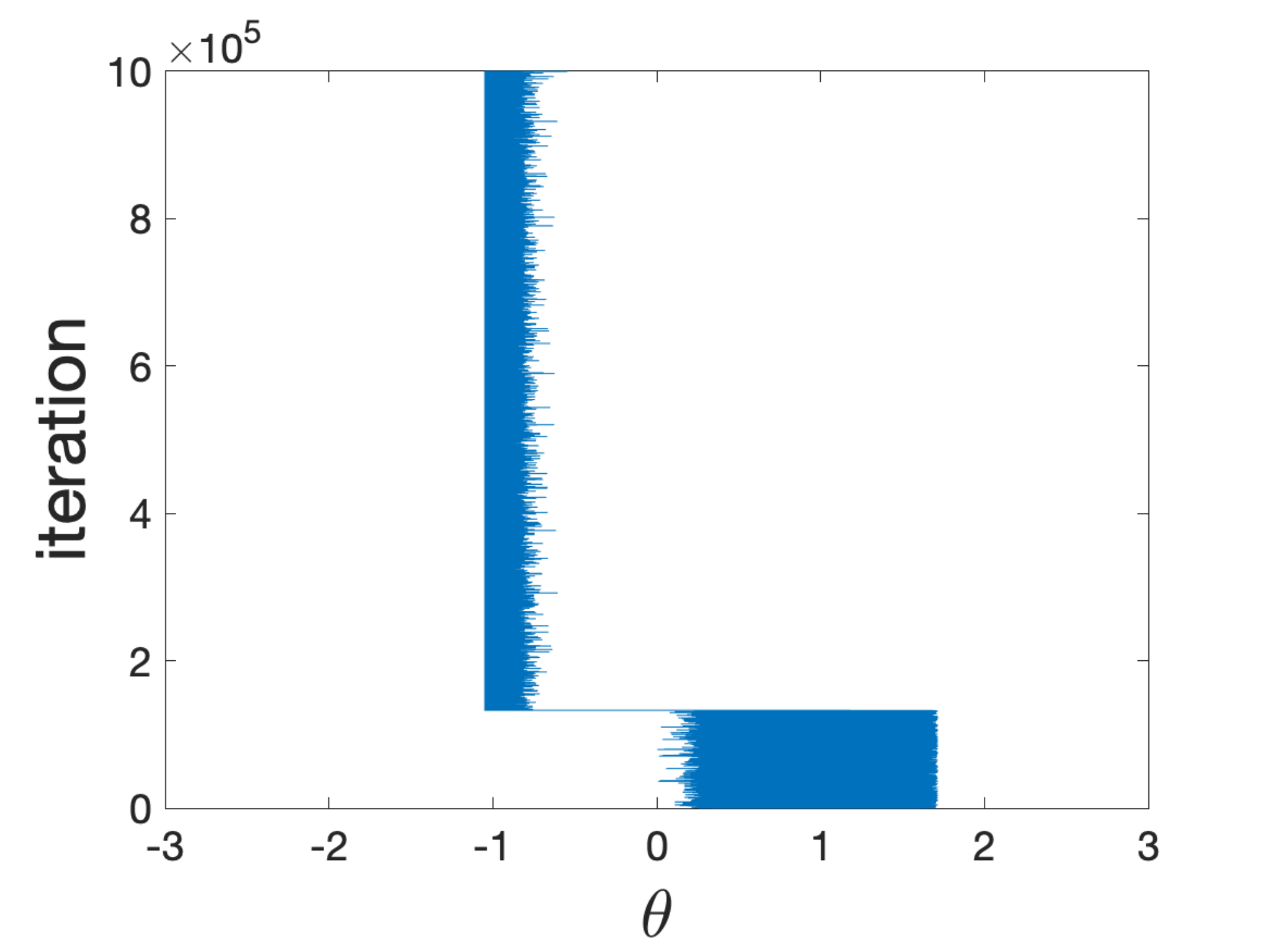} &
    \includegraphics[scale=0.23]{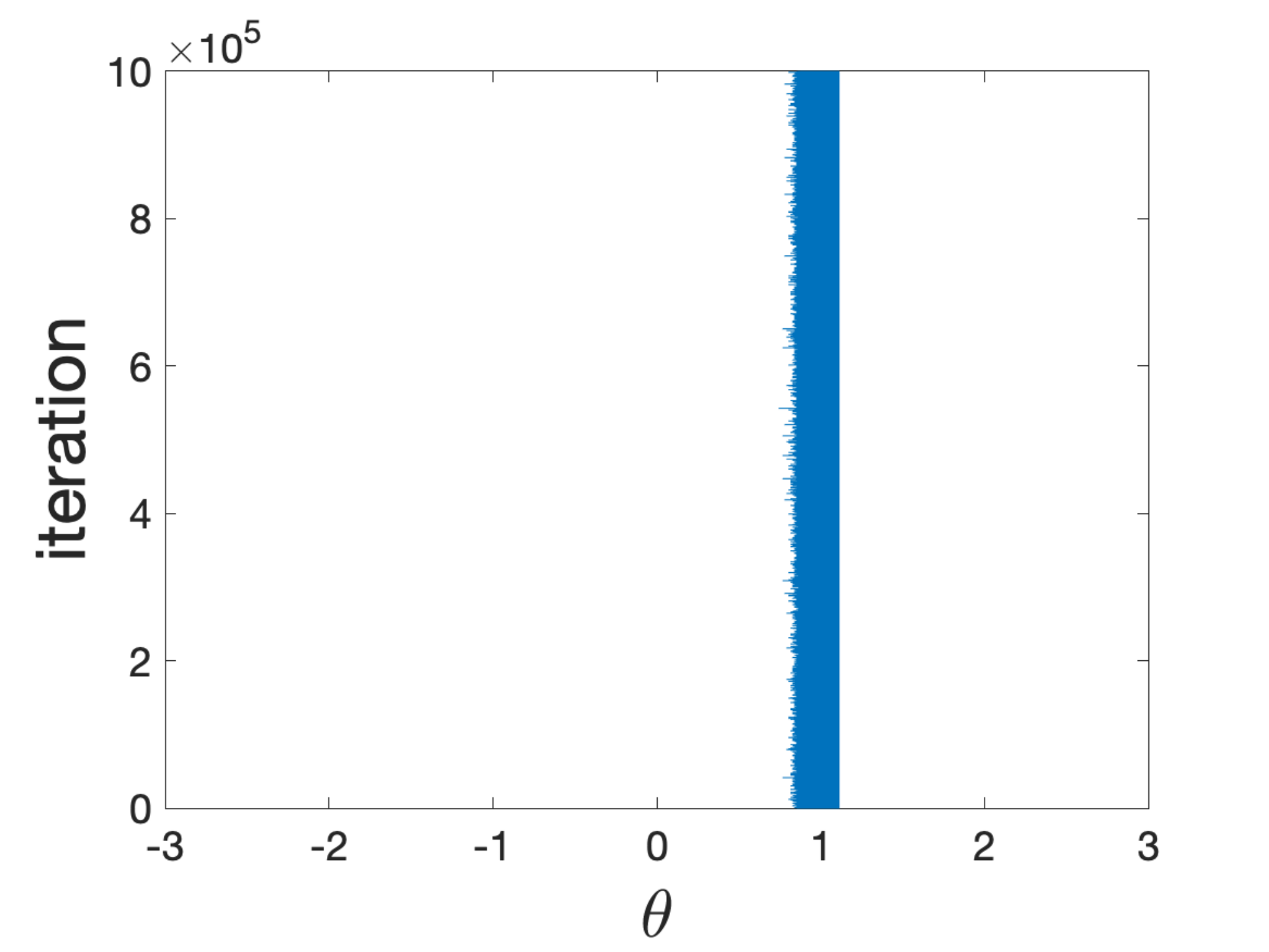}\\
    (1) Loss function $V(\th)$ & (2) Reweighting & (3) Resampling
  \end{tabular}
  \caption{Comparison of reweighting and resampling with learning rate $\eta= 0.12$. We set
    $a_1/a_2=0.4/0.6$, $f_1/f_2=0.9/0.1$ and $\epsilon= 0.1$. Both experiments start at
    $\th_0=0.9$. The resampling strategy here is to randomly select the sub-population $i$ with the probability $a_i$ with replacement in each iteration. In (2) where reweighting is used, the trajectory skips to the local minimizer $\th=-1$
    later. In (3) where resampling is used, it stabilizes at the global minimizer $\th=1$ all the
    time. We include more comparisons with various learning rates in Appendix \ref{sec: moreplots} to show that resampling is more reliable for a wider range of $\eta$.} 
  \label{fig:small}
\end{figure}


When the learning rate is sufficiently small, one can approximate the SGD by an SDE. Such a SDE approximation, first introduced in \citep{li2017stochastic}, involves a data-dependent
covariance coefficient for the diffusion term and is justified in the weak sense with an error of
order $O(\sqrt{\eta})$. More specifically, the dynamics can be approximated by
\begin{equation}
  \label{eq: sde}
  d\Th = -\gdv(\Th) dt + \sqrt{\eta}\Sigma(\Th)^{1/2}dB,
\end{equation}
where $\Th(t=k \eta) \approx \th_{k}$ for the step $k$ parameter $\th_k$, $\eta$ is the learning
rate, and $\Sigma(\Th)$ is the covariance of the stochastic gradient at location $\Th$. In the SDE
theory, the drift term $\gdv(\cdot)$ is usually assumed to be Lipschitz. However, in machine
learning (for example neural network training with non-smooth activation functions), it is common to
encounter non-Lipschitz gradients of loss functions (as in the example presented in Section
\ref{sec:stb}). To fill this gap, we provide in Appendix \ref{sec: SDE} a justification of SDE
approximation for the drift with jump discontinuities, based on the proof presented in
\citep{muller2020performance}.

In this piece-wise linear loss example, SGD can be approximated by a Langevin dynamics with a piecewise constant
mobility. In particular when the dynamics reaches equilibrium, the stationary distribution of the
stochastic process is approximated by a Gibbs distribution, which gives the probability densities at
the stationary points. Let us denote $p_s(\th)$ and $p_w(\th)$ as the stationary distribution over
$\th$ under resampling and reweighting, respectively. Following lemmas quantitatively summarize the results.



  
\begin{lemma}
  \label{lemma: eg2_1}
  When $a_2>a_1$, $V(1) < V(-1)$. The stationary distribution for resampling satisfies the relationship
  \[
  \frac{p_s(1)}{p_s(-1)} = \exp\left(-\frac{2}{a_1a_2\eta}(V(1) - V(-1))\right) +O\left(\eps\right) >
  1.
  \]
\end{lemma}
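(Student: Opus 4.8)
The plan is to feed the resampling dynamics into the SDE approximation in \eqref{eq: sde} and then exploit the fact that, for resampling, the diffusion coefficient is constant up to $O(\eps)$ corrections. This turns \eqref{eq: sde} into a Langevin dynamics whose stationary law is an explicit Gibbs-type distribution, from which the ratio $p_s(1)/p_s(-1)$ can be read off directly.

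First I would write the resampling covariance from \eqref{eq: variance} in the compact scalar form $\Sigma_s(\th) = \V[\nabla \hat V_s(\th)] = a_1 a_2\,(\nabla V_1(\th) - \nabla V_2(\th))^2$, which holds because the stochastic gradient equals $\nabla V_1$ with probability $a_1$ and $\nabla V_2$ with probability $a_2$. The real line splits into four intervals $(-\infty,-1)$, $(-1,0)$, $(0,1)$, $(1,\infty)$ on which both $\nabla V_1$ and $\nabla V_2$ are constant, taking values in $\{\pm 1,\pm\eps\}$. On each interval $\nabla V_1 - \nabla V_2$ equals $\pm(1\pm\eps)$, so $\Sigma_s(\th) = a_1 a_2(1\pm\eps)^2 = a_1 a_2 + O(\eps)$ everywhere; in particular the mobility is piecewise constant and, crucially, takes the single value $a_1 a_2(1+\eps)^2$ on the entire bridge $(-1,1)$ joining the two minima (the interior jump at $\th=0$ cancels).

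Next I would solve the stationary Fokker--Planck equation for the one-dimensional process $d\Th = -\nabla V(\Th)\,dt + \sqrt{\eta\,\Sigma_s(\Th)}\,dB$. Imposing zero flux gives the stationary density $p_s(\th) \propto \Sigma_s(\th)^{-1}\exp\!\big(-\int^{\th}\tfrac{2\nabla V(s)}{\eta\,\Sigma_s(s)}\,ds\big)$. Because $\Sigma_s$ is the constant $a_1 a_2(1+\eps)^2$ on $(-1,1)$, the integral between the minima collapses to $\tfrac{2}{\eta a_1 a_2(1+\eps)^2}\big(V(1)-V(-1)\big)$, while the prefactor ratio $\Sigma_s(-1)/\Sigma_s(1)$ is $1+O(\eps)$. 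Expanding $(1+\eps)^{-2}=1+O(\eps)$ at fixed $\eta$ then yields $\tfrac{p_s(1)}{p_s(-1)} = \exp\!\big(-\tfrac{2}{\eta a_1 a_2}(V(1)-V(-1))\big)+O(\eps)$. For positivity I evaluate $V(1)=a_1\eps - a_2$ and $V(-1)=-a_1+a_2\eps$, so $V(1)-V(-1)=(a_1-a_2)(1+\eps)<0$ when $a_2>a_1$, making the exponent positive and the ratio greater than $1$.

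The main obstacle is that this example has both a non-Lipschitz drift (at $\th=0,\pm 1$) and a \emph{discontinuous} diffusion coefficient, so I cannot simply quote the textbook constant-mobility Gibbs formula $p\propto e^{-2V/(\eta a_1 a_2)}$. Instead I must solve the stationary equation with a genuinely position-dependent (piecewise-constant) mobility and justify the matching of $p_s$ across the jump points $\th=\pm 1$, the natural condition being continuity of $\Sigma_s p_s$ that comes from the flux form of the Fokker--Planck equation. The saving grace that keeps the computation clean is precisely that the mobility is constant along the bridge $(-1,1)$, so its position dependence enters only through the $1+O(\eps)$ prefactor and the harmless $(1+\eps)^{-2}$ factor, both absorbed into the stated $O(\eps)$ error once $\eta$ is held fixed and $\eps\to 0$. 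I would also invoke the non-Lipschitz SDE-approximation result deferred to Appendix \ref{sec: SDE} to license replacing SGD by \eqref{eq: sde} in the first place.
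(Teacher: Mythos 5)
Your proposal is correct and follows essentially the same route as the paper: approximate the resampling SGD by the SDE of \eqref{eq: sde}, observe that the piecewise-constant gradient variance equals $a_1a_2+O(\eps)$ on every interval, write down the Gibbs-type stationary density, and evaluate the ratio at $\th=\pm1$ using $V(1)-V(-1)=(a_1-a_2)(1+\eps)<0$. Your extra care with the flux-matching condition (continuity of $\Sigma_s p_s$) at the $O(\eps)$ jumps of the diffusion coefficient is a harmless refinement of the paper's argument, which simply absorbs those jumps into the $O(\eps)$ error from the start (and reserves the matching argument for the reweighting case, where the variance jump is order one).
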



\begin{lemma}
  \label{lemma: eg2_2}
  With $a_2>a_1$, $V(1) < V(-1) < 0$. Under the condition
  $\frac{f_2}{f_1}\leq\frac{a_2}{a_1}\sqrt{\frac{V(-1)}{V(1)}}$ for the sampling proportions, the
  stationary distribution for reweighting satisfies the relationship
  \[
  \frac{p_w(1)}{p_w(-1)} = \frac{ a_1^2/f_1^2}{ a_2^2/f_2^2} \exp\left(-\frac{2f_2/f_1}{a_2^2\eta}
  V(1)+\frac{2f_1/f_2}{ a_1^2\eta} V(-1) \right) + O(\eps) < 1.
  \]
\end{lemma}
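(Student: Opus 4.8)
The plan is to determine the stationary density $p_w$ of the reweighting version of the SDE approximation \eqref{eq: sde} directly, taking advantage of the loss being piecewise linear so that both the drift $-\gdv$ and the diffusion coefficient are piecewise constant. First I would evaluate the gradient variance $\Sigma_w(\th):=\V[\nabla\hat{V}_w(\th)]$ from \eqref{eq: variance} on each of the four linear pieces. Using $V_1'\in\{-1,1,\eps\}$, $V_2'\in\{-\eps,-1,1\}$ and $\E[\nabla\hat{V}_w]=V'$, a one-line expansion gives, to leading order in $\eps$,
\[
\Sigma_w(\th)=\begin{cases}\dfrac{a_1^2 f_2}{f_1}+O(\eps), & \th<0,\\ \dfrac{a_2^2 f_1}{f_2}+O(\eps), & \th>0,\end{cases}
\]
so $\Sigma_w$ is constant on $\{\th<0\}$ and on $\{\th>0\}$ with a single jump at $\th=0$; the gradients of the ``inactive'' group enter only at order $\eps$ and are discarded.

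Next I would solve the stationary Fokker--Planck equation. On the whole line the stationary probability current vanishes, i.e. $-V'(\th)p_w-\tfrac12\eta\,\pt_\th(\Sigma_w p_w)=0$. On each region where $\Sigma_w$ is constant this is a linear first-order ODE whose solution is the Gibbs form $p_w(\th)\propto\exp(-2V(\th)/(\eta\Sigma_w))$; since $V$ is continuous with $V(0)=0$ this produces the two branches $p_w=C_-\exp(-2f_1 V/(\eta a_1^2 f_2))$ on $\{\th<0\}$ and $p_w=C_+\exp(-2f_2 V/(\eta a_2^2 f_1))$ on $\{\th>0\}$. The crucial step is the matching at the discontinuity: because the vanishing-current identity keeps $\pt_\th(\Sigma_w p_w)$ free of any Dirac mass at $\th=0$ (its left-hand side is bounded there), it is $\Sigma_w p_w$ --- not $p_w$ itself --- that is continuous across $\th=0$. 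Evaluating at $\th=0$ (where both exponentials equal $1$) gives $C_+/C_-=\Sigma_w(0^-)/\Sigma_w(0^+)=(a_1^2/f_1^2)/(a_2^2/f_2^2)$, and substituting $\th=\pm1$ into the two branches yields exactly the claimed identity for $p_w(1)/p_w(-1)$.

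Finally, for the inequality $p_w(1)/p_w(-1)<1$ I would treat the prefactor and the exponential separately. The prefactor $(a_1^2/f_1^2)/(a_2^2/f_2^2)=(a_1 f_2)^2/(a_2 f_1)^2$ is strictly below $1$ whenever $f_2/f_1<a_2/a_1$, which the hypothesis guarantees because $\sqrt{V(-1)/V(1)}<1$ (recall $V(1)<V(-1)<0$, so $|V(1)|>|V(-1)|$). For the exponent, its sign is that of the bracket $B=-\tfrac{f_2/f_1}{a_2^2}V(1)+\tfrac{f_1/f_2}{a_1^2}V(-1)$; multiplying $B$ by $f_2/f_1>0$ shows $B\le0$ is equivalent to $\tfrac{1}{a_1^2}V(-1)\le\tfrac{(f_2/f_1)^2}{a_2^2}V(1)$, which --- both losses being negative --- rearranges precisely to the hypothesis $f_2/f_1\le(a_2/a_1)\sqrt{V(-1)/V(1)}$. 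Thus under the stated condition the prefactor is $<1$ and the exponential is $\le1$, so their product is $<1$, up to the $O(\eps)$ error inherited from approximating $\Sigma_w$ as piecewise constant.

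I expect the main obstacle to be the matching condition at $\th=0$: identifying continuity of $\Sigma_w p_w$ (rather than of $p_w$) is what generates the nontrivial prefactor, and this prefactor --- which is absent for resampling, where $\Sigma_s\approx a_1 a_2$ is continuous and Lemma \ref{lemma: eg2_1} applies cleanly --- is exactly what can push the stationary mass onto the wrong minimizer $\th=-1$. A second, more technical point, deferred to the SDE-approximation appendix, is justifying the Gibbs/Fokker--Planck description when both the drift $\gdv$ and the diffusion $\Sigma_w$ are discontinuous.
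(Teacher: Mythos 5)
Your proposal is correct and follows essentially the same route as the paper's proof: piecewise-constant gradient variance $a_1^2f_2/f_1$ and $a_2^2f_1/f_2$ up to $O(\eps)$, Gibbs-form stationary densities on each half-line, the matching condition that $\Sigma_w p_w$ (not $p_w$) is continuous at $\th=0$ via the stationary Fokker--Planck equation, and the same algebraic reduction of the exponent's sign and the prefactor bound to the hypothesis $f_2/f_1\le(a_2/a_1)\sqrt{V(-1)/V(1)}$. No substantive differences to report.
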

The proofs of the above two lemmas can be found in Appendix \ref{sec:sec4}. Lemma \ref{lemma: eg2_1}
shows that for resampling it is always more likely to find $\th$ at the global minimizer $1$ than at
the local minimizer $-1$. Lemma \ref{lemma: eg2_2} states that for reweighting it is more likely to
find $\th$ at the local minimizer $-1$ when
$\frac{f_2}{f_1}\leq\frac{a_2}{a_1}\sqrt{\frac{V(-1)}{V(1)}}$. Together, they explain the
phenomenon shown in Figure \ref{fig:small}.

To better understand the condition in Lemma \ref{lemma: eg2_2}, let us consider the case $a_1 =
\frac12 - \eps, a_2=\frac{1}{2}+\eps$ with a small constant $\eps>0$. Under this setup,
$V(-1)/V(1)\approx 1$. Whenever the ratio of the sampling proportions $f_2/f_1$ is significantly
less than the ratio of the population proportions $a_2/a_1 \approx 1$, reweighting will lead to the
undesired behavior. The smaller the ratio $f_2/f_1$ is, the less likely the global minimizer will be
visited.

\paragraph{Piecewise Convex results.}

The reason for constructing the above piecewise linear loss function is to obtain an approximately explicitly solvable SDE with a constant coefficient for the noise.
One can further extend the results in 1D for piecewise strictly convex function with two local minima (See Lemmas \ref{lemma: general_1d_s} and \ref{lemma: general_1d_w} in Appendix \ref{appedix: general 1D}). Here we present the most general results in 1D, that is, piecewise strictly convex function with finite number of local minima. One may consider the population loss function $V(\th) = \sum_{i=1}^k a_iV_i(\th)$ with $V_i(\th)=h_i(\th)$ for $\th_{i-1} <\th \leq \th_i$ and $V_i(\th)=O(\eps)$ otherwise,
where $h_i(\th)$ are strictly convex functions and continuously differentiable, $O(\eps)$ term is sufficiently small and smooth.
Here $\{\th_i\}_{i=1}^{k-1}$ are $k-1$ disjoint points, and $\th_0 = -\infty, \th_k = \infty$.  We assume that $V(\th)$ has $k$ local minimizers $\th_i^*$ for $\th_i^*\in (\th_{i-1}, \th_i)$. We present the following two lemmas with suitable assumptions (See Appendix \ref{appedix: general 1D} for details of assumptions and the proof).

\begin{lemma}
\label{lemma: 1d_finite_s}
The stationary distribution for resampling at any two local minizers $\th_p^*, \th_q^*$ with $p>q$ satisfies the relationship
\begin{equation*}
\frac{p_s(\th_p^*)}{p_s(\th_q^*)} = \exp\l[\frac2\eta \int_{\th_p^*}^{\th_p} \frac{1}{h_p'(\th)}d\th \, \l(\frac{1}{1-a_p} - \frac1{1-a_q}\r)\r] + O(\eps) = \l\{
\begin{aligned}
  &>1, \quad\text{if } a_p>a_q;\\
  &<1,  \quad\text{if } a_p<a_q,
\end{aligned}
\r.
\end{equation*}
\end{lemma}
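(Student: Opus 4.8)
The plan is to follow the SDE route already used for Lemma \ref{lemma: eg2_1}, but now with a spatially varying diffusion coefficient. First I would compute the covariance of the resampling stochastic gradient region by region. On $(\th_{j-1},\th_j)$ the loss is $V(\th)=a_jh_j(\th)+O(\eps)$, only group $j$ carries an $O(1)$ gradient, and sampling group $i$ with probability $a_i$ gives, mirroring the resampling computation in (\ref{eq: variance}),
\[
\Sigma(\th)=a_j(1-a_j)\,h_j'(\th)^2+O(\eps),\qquad \nabla V(\th)=a_jh_j'(\th)+O(\eps).
\]
Feeding these into the SDE (\ref{eq: sde}) produces, in each region, a one–dimensional diffusion whose stationary density is the classical
\[
p_s(\th)\ \propto\ \frac{1}{\Sigma(\th)}\exp\!\left(-\frac{2}{\eta}\int^{\th}\frac{\nabla V(s)}{\Sigma(s)}\,ds\right),
\]
and the key simplification is that $\nabla V/\Sigma$ collapses to $\tfrac{1}{(1-a_j)h_j'(\th)}$, the factor $a_j$ cancelling. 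As a sanity check, for the piecewise-linear $V$ of Section \ref{sec:sde} this reproduces exactly the exponent of Lemma \ref{lemma: eg2_1}, since there $h_j'\equiv\pm1$ and $\Sigma\equiv a_1a_2$ is constant.

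Next I would assemble the ratio $p_s(\th_p^*)/p_s(\th_q^*)$ by transporting the density from basin $q$ to basin $p$. Writing $\log\!\big(p_s(\th_p^*)/p_s(\th_q^*)\big)$ as a prefactor $\log\!\big(\Sigma(\th_q^*)/\Sigma(\th_p^*)\big)$ plus $-\tfrac{2}{\eta}\int_{\th_q^*}^{\th_p^*}\nabla V/\Sigma$, the integral splits region by region into transformed increments $\tfrac{1}{1-a_j}\int\tfrac{d\th}{h_j'}$. The endpoint region $q$ contributes its right half $\int_{\th_q^*}^{\th_q}\tfrac{d\th}{h_q'}$, the endpoint region $p$ contributes its left half $\int_{\th_{p-1}}^{\th_p^*}\tfrac{d\th}{h_p'}$, and each intermediate region contributes its full across-minimizer increment. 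Here the appendix hypotheses enter: a symmetry and equal-transformed-width condition on the $h_j$ forces each full-region increment to vanish (so every intermediate region drops out), ties each left half to minus the corresponding right half, and identifies all right halves with the single quantity $I=\int_{\th_p^*}^{\th_p}\tfrac{d\th}{h_p'}$. The surviving terms are then $\tfrac{I}{1-a_q}$ from region $q$ and $-\tfrac{I}{1-a_p}$ from region $p$, which after the overall $-\tfrac{2}{\eta}$ yields exactly $\tfrac{2}{\eta}I\big(\tfrac{1}{1-a_p}-\tfrac{1}{1-a_q}\big)$, the claimed exponent; the prefactor and the smooth perturbations are collected into the additive $O(\eps)$.

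The sign statement is then immediate and robust: strict convexity gives $h_p'>0$ on $(\th_p^*,\th_p)$, so $I>0$, while $\tfrac{1}{1-a_p}-\tfrac{1}{1-a_q}$ carries the sign of $a_p-a_q$; hence the exponent is positive iff $a_p>a_q$, giving the ratio $>1$ or $<1$ accordingly. This is the part I expect to be clean regardless of the finer constants, and it is what delivers the qualitative message that resampling favors the minimizer of the larger subgroup.

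I anticipate the main obstacle to be that the resampling diffusion $\Sigma$ degenerates (vanishes to $O(\eps)$) precisely at each minimizer, so each transformed integral $\int d\th/h_j'$ carries a logarithmic singularity there and the prefactor $1/\Sigma(\th_j^*)$ blows up. Making the density ratio well defined therefore requires treating the $O(\eps)$ smoothing as a genuine regularizer that keeps $\Sigma$ bounded away from zero, and one must verify that the singular contributions at $\th_q^*$ and $\th_p^*$ cancel against $\log\big(\Sigma(\th_q^*)/\Sigma(\th_p^*)\big)$ up to $O(\eps)$, together with checking that the jump of $\Sigma$ across each region boundary is absorbed by the continuity (zero-flux) matching of the piecewise stationary density. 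These are the technical points I would defer to Appendix \ref{appedix: general 1D}, relying also on the SDE approximation for non-Lipschitz drift established in Appendix \ref{sec: SDE}.
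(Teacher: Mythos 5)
Your proposal follows essentially the same route as the paper's own proof: the region-wise computation of the resampled gradient's mean $a_j h_j'(\th)$ and variance $a_j(1-a_j)h_j'(\th)^2+O(\eps)$, the Fokker--Planck stationary density with the key cancellation $\mu/\Sigma = \frac{1}{(1-a_j)h_j'(\th)}$, the telescoping of the transformed increments $\int \frac{d\th}{h_j'}$ across intermediate regions using the equal-geometry assumptions $g_i(\th_{i-1})=g_i(\th_i)=g_{i+1}(\th_i)$ and $g_p(\th_p^*)=g_q(\th_q^*)$, and the final sign argument from strict convexity of $h_p$. The only difference is in bookkeeping: you carry the $1/\Sigma$ prefactor explicitly and impose continuity of $\Sigma p$ at the region boundaries (correctly flagging the degeneracy of $\Sigma$ at the minimizers as the delicate point), whereas the paper absorbs the same factor into the $\log\sigma$ term of $F$ and cancels it via the assumptions $h_p'(\th_p^*)=h_q'(\th_q^*)$ and $h_i'(\th_i)=h_{i+1}'(\th_i)$ --- both treatments leave the leading $\exp\bigl(\frac{2}{\eta}I(\frac{1}{1-a_p}-\frac{1}{1-a_q})\bigr)$ term, and hence the stated conclusion, unchanged.
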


\begin{lemma}
\label{lemma: 1d_finite_w}
The stationary distribution for reweighting at any two local minizers $\th_p^*, \th_q^*$ with $p>q$ satisfies the relationship
\[
\frac{p_w(\th_p^*)}{p_w(\th_q^*)} = \exp\l[\frac2\eta\int_{\th_p^*}^{\th_p} \frac{1}{h_p'(\th)}d\th \,\l(\frac{f_p}{a_p(1-f_p)} - \frac{f_q}{a_q(1-f_q)}\r)\r] + O(\eps).
\]
\end{lemma}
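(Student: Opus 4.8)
The plan is to follow the same route as the resampling counterpart, Lemma~\ref{lemma: 1d_finite_s}: approximate SGD by the SDE in~\eqref{eq: sde}, solve the associated one-dimensional stationary Fokker--Planck equation, and read off the ratio of the stationary density at the two minimizers. The only place where reweighting differs from resampling is the diffusion coefficient $\Sigma(\th)$ of the noise, so the bulk of the argument is identical and the whole proof reduces to recomputing that coefficient. First I would localize: on the interval $(\th_{p-1},\th_p)$ only $V_p=h_p$ contributes at leading order, the remaining $V_i$ being $O(\eps)$, so the drift is $\gdv(\th)=a_p h_p'(\th)+O(\eps)$.

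For reweighting the stochastic gradient draws group $i$ with the sampling probability $f_i$ and rescales it by $a_i/f_i$; its mean is $\sum_i a_i\nabla V_i=\gdv(\th)$, matching the deterministic gradient, while its second moment is $\sum_i (a_i^2/f_i)(\nabla V_i)^2$. Subtracting $(\gdv)^2$ and localizing to interval $p$ gives $\Sigma_w(\th)=a_p^2\,(h_p'(\th))^2\,\frac{1-f_p}{f_p}+O(\eps)$, to be contrasted with $\Sigma_s(\th)=a_p(1-a_p)(h_p'(\th))^2$ for resampling. This single computation is the crux; everything downstream is bookkeeping. Solving the stationary Fokker--Planck equation for \eqref{eq: sde} with the zero-flux condition in 1D yields the generalized Gibbs form $p_w(\th)\propto \Sigma_w(\th)^{-1}\exp\l(-\frac{2}{\eta}\int^{\th}\frac{\gdv(s)}{\Sigma_w(s)}\,ds\r)$, and with the localized coefficient $\frac{\gdv}{\Sigma_w}=\frac{f_p}{a_p(1-f_p)\,h_p'(\th)}$. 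This produces exactly the factor $\frac{f_p}{a_p(1-f_p)}$ in place of the resampling factor $\frac{1}{1-a_p}$, the integrand $1/h_p'$ being common to both. Matching the piecewise normalization constants at the interfaces $\th_1,\dots,\th_{k-1}$ --- the small $O(\eps)$ drift is what couples the intervals and renders the global stationary measure well defined --- and evaluating at $\th_p^*$ and $\th_q^*$ then assembles the claimed expression, mirroring the resampling calculation term by term.

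The hard part is not this algebra but two structural issues that I would import rather than reprove. The drift $\gdv$ has jump discontinuities at the interfaces and vanishes degenerately at each minimizer, so the classical weak SDE-approximation theorem does not apply; here I would invoke the justification for drifts with jump discontinuities given in Appendix~\ref{sec: SDE}. Moreover, since $\Sigma_w\to 0$ at each $\th_i^*$, both $p_w(\th_i^*)$ and $\int_{\th_p^*}^{\th_p} 1/h_p'\,d\th$ are genuinely singular, so the displayed identity must be read as capturing the leading exponential behavior; its operational content is the \emph{sign} of $\frac{f_p}{a_p(1-f_p)}-\frac{f_q}{a_q(1-f_q)}$, which decides near which minimizer the dynamics concentrates. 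The payoff --- and the point of contrasting the two lemmas --- is that this threshold depends on the sampling proportions $f$, unlike the resampling threshold $\frac1{1-a_p}-\frac1{1-a_q}$, which depends only on the population proportions, so that reweighting can be driven to favor the wrong minimizer.
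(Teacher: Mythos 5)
Your proposal follows the paper's own route essentially step for step: you compute the same localized drift $a_p h_p'+O(\eps)$ and variance $a_p^2(h_p')^2(1-f_p)/f_p+O(\eps)$ for the reweighted stochastic gradient, solve the same one-dimensional stationary Fokker--Planck equation to get $p_w\propto \Sigma_w^{-1}\exp\bigl(-\tfrac{2}{\eta}\int \gdv/\Sigma_w\bigr)$, and glue the piecewise expressions at the interfaces using the paper's geometric assumptions, which is exactly the paper's $F_i(\th)=\frac{2f_i}{\eta a_i(1-f_i)}\int\frac{1}{h_i'}+\log\bigl(\frac{(1-f_i)a_i^2}{f_i}(h_i')^2\bigr)$ bookkeeping. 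The argument is correct and matches the paper's proof; your added caveat about the near-degeneracy of $h_p'$ at the minimizers is a fair observation but does not change the approach.
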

We first note that $\int_{\th_p^*}^{\th_p}\frac{1}{h'_p(\th)}d\th > 0$ due to the strictly convexity of $h_p$. Therefore, one can see from Lemma \ref{lemma: 1d_finite_s} that for resampling, the stationary solution always has the highest probability at the global minimizer. On the other hand, for the stationary solution of reweighting in Lemma \ref{lemma: 1d_finite_w}, let us consider the case when $a_p > a_q$. In this case, $V(\th_p^*) < V(\th_q^*)$, therefore, one expects the above ratio larger than $1$, which implies that $\frac{f_p}{a_p(1-f_p)} - \frac{f_q}{a_q(1-f_q)}>0$. Note that if $f_p = a_p, f_q = a_q$, then this term is always larger than $0$, but when $f_p, f_q$ are significantly different from $a_p, a_q$ in the sense that $f_p < f_q$ and $f_p<a_p, f_q>a_q$, then $\frac{f_p}{a_p(1-f_p)} - \frac{f_q}{a_q(1-f_q)}<0$, which will lead to $\frac{p_s(\th_p^*)}{p_s(\th_q^*)} < 1$, i.e., higher probability of converging to $\th_q^*$, which is not desirable. To sum up, Lemma \ref{lemma: 1d_finite_w} shows that for reweighting, the stationary solution will not have the highest probability at the global minimizer  if the empirical proportion is significantly different from the population proportion.

\paragraph{Multi-dimensional results.}
It is in fact not clear how to extend Lemmas \ref{lemma: 1d_finite_s} and \ref{lemma: 1d_finite_w} to multi-dimension. As far as we know, it is still an open problem how the stochastic process behaves when the covariance matrix of (\ref{eq: sde}) depends on $\Th$ in high dimensions. Instead, we focus on the case where the covariance matrix is piecewise constant. We divide the whole space into a finite number of disjoint convex regions $\R^d = \cup_{i=1}^k\O_i$. The loss function $V(\bth) = \sum_{i=1}^ka_iV_i(\bth)$ with $V_i(\bth) = \k_i\l\lVert \bth - \bth_i^* \r\rVert_1 - \beta_i$ for $\bth\in \O_i$  and $V_i(\bth) = O(\eps)$ otherwise, where $\l\lVert \bth \r\rVert_1 = \sum_{j=1}^d|\th_j|$. The loss function has $k$ local minimizers $\bth_i^\st \in \O_i$.  The following Lemma summarizes the results for the multi-dimensional case.

\begin{lemma}
\label{lemma: multi-D}
The stationary distribution for resampling and reweighting at any two local minimizers $\bth_p^*, \bth_q^*$ satisfies the relationship
\begin{equation*}
\begin{aligned}
    &\frac{p_s(\bth_p^*)}{p_s(\bth_q^*)} = \exp\l[\frac2\eta\l( \frac{\beta_p}{(1-a_p)\k_p^2} -  \frac{\beta_q}{(1-a_q)\k_q^2}\r) \r]+ O(\eps),\\
    &\frac{p_w(\bth_p^*)}{p_w(\bth_q^*)} =  \exp\l[\frac2\eta\l( \frac{f_p\beta_p}{a_p(1-f_p)\k_p^2} -  \frac{f_q\beta_q}{a_q(1-f_q)\k_q^2}\r) \r]+ O(\eps),
\end{aligned}
\end{equation*}
respectively.
\end{lemma}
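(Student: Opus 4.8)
The plan is to reduce the $d$-dimensional computation to a family of one-dimensional Langevin problems by exploiting the coordinatewise separability of the $\ell_1$-type losses, and then to reuse the stationary-distribution machinery already developed for the one-dimensional Lemmas \ref{lemma: eg2_1} and \ref{lemma: eg2_2}. First I would write down the SDE (\ref{eq: sde}) region by region. Near the minimizer $\bth_p^*$ only $V_p$ is active, so $V(\bth) = -a_p\beta_p + a_p\k_p\l\lVert \bth - \bth_p^* \r\rVert_1 + O(\eps)$, and the drift $-\gdv(\bth)$ is, coordinatewise, $-a_p\k_p\sign(\th_j - (\bth_p^*)_j)$, hence piecewise constant on each orthant about $\bth_p^*$. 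Specializing the variance formulas (\ref{eq: variance}) to the $k$-group case, the per-coordinate variance of the stochastic gradient inside $\O_p$ is $a_p(1-a_p)\k_p^2$ for resampling and $\tfrac{a_p^2(1-f_p)}{f_p}\k_p^2$ for reweighting, the cross contributions being $O(\eps)$ because the other $V_i$ are flat there. Since the drift jumps across orthant boundaries, I would invoke the jump-discontinuity SDE justification of Appendix \ref{sec: SDE} (following \citep{muller2020performance}) to legitimize the approximation.

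The key simplification is that $V_p(\bth) = \k_p\sum_j |\th_j - (\bth_p^*)_j| - \beta_p$ is a sum over coordinates, so once the effective diffusion is coordinatewise the stationary Fokker--Planck equation factorizes into $d$ independent one-dimensional problems, each identical in structure to the piecewise-linear example of Section \ref{sec:stb}. Solving the zero-flux stationary equation on each piece yields the local Gibbs form $p_\bullet(\bth) \propto \exp\!\l(-2V(\bth)/(\eta\,\sigma_{\bullet,p}^2)\r)$ valid inside $\O_p$, with $\sigma_{s,p}^2 = a_p(1-a_p)\k_p^2$ (resampling) and $\sigma_{w,p}^2 = a_p^2(1-f_p)\k_p^2/f_p$ (reweighting). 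Evaluating at $\bth_p^*$, where $V = -a_p\beta_p$, and forming the ratio with $\bth_q^*$ reproduces the stated exponents: the factor $a_p$ in $V(\bth_p^*)$ cancels one power of $a_p$ in $\sigma_{\bullet,p}^2$, giving $\beta_p/((1-a_p)\k_p^2)$ for resampling and $f_p\beta_p/(a_p(1-f_p)\k_p^2)$ for reweighting.

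The main obstacle is passing from the raw SGD covariance, which is a rank-one outer product $\k_p^2\,\sign(\bth-\bth_p^*)\sign(\bth-\bth_p^*)^T$ and therefore degenerate, to the coordinatewise (diagonal) diffusion that the factorization in the previous paragraph requires. I expect to resolve this either by symmetry of the $\ell_1$ geometry around each $\bth_p^*$ — averaging the rank-one sign outer product over the orthants yields the identity, so the effective stationary diffusion is $\sigma_{\bullet,p}^2\, I$ — or, more conservatively, by positing a diagonal piecewise-constant covariance as part of the model, which is consistent with the restriction announced in the text that the covariance be piecewise constant. This is precisely the point at which the paper flags the $\Th$-dependent high-dimensional case as open, so the separable $\ell_1$ structure is what makes the reduction go through.

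The remaining work is to glue the region-wise Gibbs solutions into a single global stationary density and to control the matching prefactors. Here the smooth $O(\eps)$ bridge terms play a dual role: they supply the weak drift that lets the process cross between the regions $\O_i$, and they enforce the continuity of density and probability flux on each boundary $\pt\O_i$. The matching will introduce multiplicative constants, and the technical core of the proof is to show these are $1 + O(\eps)$ so that the leading-order ratios are as claimed; this parallels, in higher dimension, the boundary-matching already carried out for Lemmas \ref{lemma: eg2_1}--\ref{lemma: eg2_2}, and I expect the bulk of the effort to lie there.
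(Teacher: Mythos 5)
Your proposal is correct and follows essentially the same route as the paper's proof in Appendix \ref{appedix: multi-D}: a region-wise SDE with piecewise-constant scalar diffusion ($a_p(1-a_p)\k_p^2$ for resampling, $a_p^2(1-f_p)\k_p^2/f_p$ for reweighting), the Gibbs form $p\propto\exp\l(-2V/(\eta\sigma^2)\r)$, evaluation at the minimizers where $V=-a_p\beta_p+O(\eps)$, and a continuity-of-$\sigma^2 p$ matching condition at the region boundaries. Your explicit flagging of the rank-one sign outer-product covariance versus the diagonal diffusion actually needed is a point the paper passes over silently by writing the noise as $\s_s(\bTh)^{1/2}I_d\,dB$, so your second, more conservative resolution (taking the diagonal piecewise-constant covariance as part of the model) is exactly what the paper does.
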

The proof of the above lemma together with the interpretation can be found in Appendix \ref{appedix: multi-D}.




\section{Experiments} \label{sec:exp}

This section examines the empirical performance of resampling and reweighting for problems from
classification, regression, and reinforcement learning. As mentioned in the previous sections, the noise of stochastic gradient algorithms makes optimal learning rate selections much more restrictive for reweighting, when the data sampling is highly biased. In order to achieve good learning efficiency and reasonable performance in a neural network training, adaptive stochastic gradient methods such as Adam \citep{kingma2014adam} are applied in the first two experiments. We observe that resampling consistently outperforms reweighting with various sampling ratios when combined with these adaptive learning methods.

\paragraph{Classification.}
\begin{wrapfigure}{R}{0.5\textwidth}
  \centering
    \includegraphics[width=.45\textwidth]{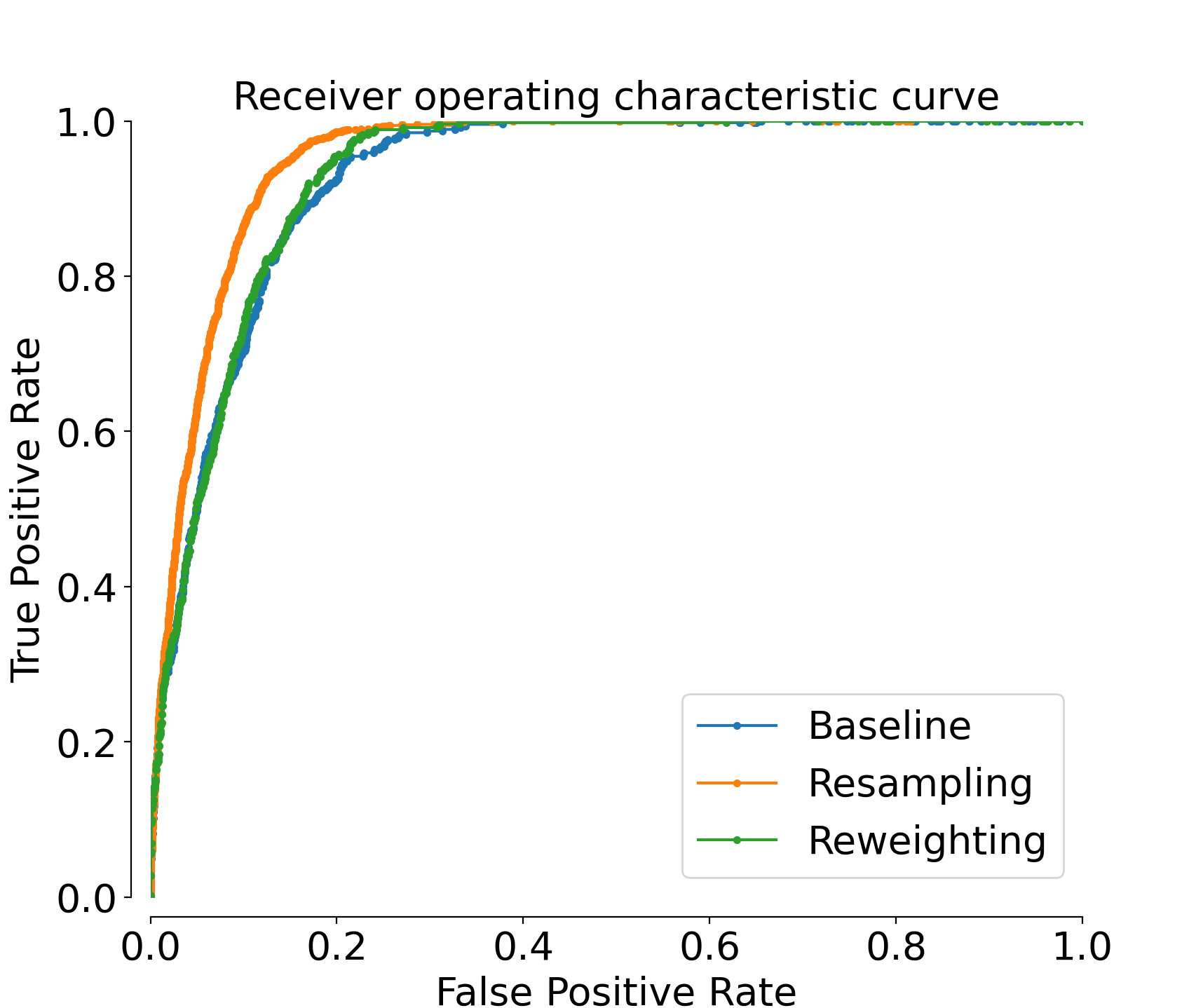}
  \caption{The ROC curve comparisons show that the resampling has the largest area under the curve.}
  \label{fig:ROC}
\end{wrapfigure}
This experiment uses the Bank Marketing data set from \citep{moro2014data} to predict if a client will subscribe a term deposit. After
preprocessing, the provided data distribution over the variable ``y" that indicates the subscription, is highly skewed: the ratio of ``yes" and ``no" is
$f_1/f_2=4640/36548 \approx 1/7.88$. We assume that the underlying population distribution is $a_1/a_2 =1$. We setup a 3-layer neural network with the binary cross-entropy loss function and train with the default Adam optimizer. The training and testing data set is obtained using train\_test\_split provided in sklearn\footnote{\url{https://scikit-learn.org/stable}}. The training takes 5 epochs with the batch-size equal to 100. The performance is compared among the baseline (i.e. trained without using either resampling or reweighting), resampling (oversample the minority group uses the  \textit{sample} with replacement), and reweighting. We run the experiments 10 times for each case, and then compute and plot results by averaging.
  
\begin{table}[h!]
\centering
    \begin{tabular}{| c | c | c | c |}
    \hline
   & Baseline & Resampling &  Reweighting \\ \hline\hline
    training loss & 0.3221  & \textbf{0.2602} & 0.2831 \\\hline
    roc\_auc\_score & 0.9277 & \textbf{0.9516} & 0.9312 \\ \hline
    \end{tabular}
    \caption{The loss takes the binary cross-entropy with a 3-layer neural network. We see that in average of 10 trials, the resampling method (oversampling) achieves the lowest training loss and highest ROC-AUC score over testing data among all tested cases.}
\label{table:0}
\end{table}
To estimate the performance, rather than using the classification accuracy that can be misleading
for biased data, we use the metric that computes the area under the receiver operating
characteristic curve (ROC-AUC) from the prediction scores. The ROC curves plots the true positive
rate on the $y$-axis versus the false positive rate on the $x$-axis. As a result, a larger area
under the curve indicates a better performance of a classifier. From both Table \ref{table:0} and Figure \ref{fig:ROC}, we see that the oversampling has the best performance compared to others. We choose oversampling rather than undersampling for the resampling method, because if we naively down sample the majority group, we throw away many information that could be useful for the prediction.

\paragraph{Nonlinear Regression.}  This experiment uses the California Housing Prices data
  set\footnote{\url{https://www.kaggle.com/camnugent/california-housing-prices}} to predict the
  median house values. The target median house values, ranging from $15$k to $500$k, are distributed quite non-uniformly. We select subgroups with median house values $>400$k ($1726$ in total) and
  $<200$k ($11767$ in total) and combine them to make our dataset. In the pre-processing
  step, we drop the ``ocean proximity" feature and randomly set $30\%$ of the data to be the test
  data. The remaining training data set with $8$ features is fed into a 3-layer neural network. The population proportion of two subgroups is assumed to be $a_1/a_2\approx1$, while resampling and reweighting are tested with various sampling ratios $f_1/f_2$ near $11767/1726$. Their performance of is compared also with the baseline. In each test, the mean squared error (MSE) is chosen as the loss function
  and Adam is used as the optimizer in the model. The batch-size is $32$ and the number of epochs is $400$ for each case. As shown in Table \ref{table:1}, resampling significantly outperforms reweighting
  for all sampling ratios in terms of a lower averaged MSE, and its good stability is reflected
  in its lowest standard deviation for multiple runs.
\begin{table}[h!]
\centering
    \begin{tabular}{| c | c | c | c | c| c |}
    \hline
    MSE & Baseline & RS & RW ($f_1/f_2=7$) & RW ($f_1/f_2=9$) & RW ($f_1/f_2=12$)\\ \hline\hline
    mean & 1.0386e+05  & \textbf{7.9679e+04} & 9.3567e+04 & 9.0436e+04 & 9.1949e+04 \\\hline
    std & 8.0371e+03 & \textbf{1.8620e+03} &  3.8044e+03 & 2.4692e+03 &  3.0341e+03\\ \hline
    \end{tabular}
    \caption{Mean squared errors (MSE) for nonlinear regression problems. RS stands for resampling and RW for reweighting. The weights used in reweighting are $\frac{a_1}{f_1}$ and $\frac{a_2}{f_2}$, respectively. For each case, we run experiments for 10 times and compute the corresponding mean and standard deviation. Resampling (oversampling the minor group) achieves the lowest mean and standard deviation of MSE among all tested cases.}
\label{table:1}
\end{table}

\paragraph{Off-policy prediction.}
In the off-policy prediction problem in reinforcement learning, the objective is to find the value
function of policy $\pi$ using the trajectory $\{(a_t,s_t,s_{t+1})\}_{t=1}^T$ generated by a
behavior policy $\mu$. To achieve this, the standard approach is to update the value function based
on the behavior policy's temporal difference (TD) error $\d(s_t)=R(s_t)+\gamma V(s_{t+1})-V(s_t)$ with an
importance weight
$\E_\pi[\d|s_t=s]=\sum_{a\in\A}\frac{\pi(a|s)}{\mu(a|s)}\E[\d|s_t=s,a_t=a]\mu(a|s)$, where the summation is taken over the action space $\A$.
The resulting reweighting TD learning for policy $\pi$ is
\[
V_{t+1}(s_t) = V_t(s_t) + \eta\frac{\pi(a_t|s_t)}{\mu(a_t|s_t)}  (R(s_t) + \gamma V_t(s_{t+1}) - V_t(s_t)),
\]
where $\eta$ is the learning rate. This update rule is an example of reweighting. On the other hand,
the expected TD error can also be written in the resampling form,
$\E_\pi[\d|s_t=s]=\sum_{a\in\A}\E[\d|s_t=s, a_t = a] \pi(a|s) = \sum_{a\in\A}
\sum_{j=1}^{\pi(a|s) N} \E[\d^j|s_t=s, a_t = a]$, where $N$ is the total number of samples
for $s_t=s$. This results to a resampling TD learning algorithm: at step $t$,
\[
V_{t+1}(s_t) =  V_t(s_t) + \eta(R(s_k) + \gamma V_t(s_{k+1}) - V_t(s_k)),
\]
where $(a_k,s_k,s_{k+1})$ is randomly chosen from the data set $\{(a_j,s_j,s_{j+1})\}_{s_j = s_t}$
with probability $\pi(a_k|s_t)$.

Consider a simple example with discrete state space $\S = \{i\}_{i=0}^{n-1}$, action space $\A=\{\pm
1\}$, discount factor $\gamma = 0.9$ and transition dynamics $s_{t+1} = \text{mod}(s_t + a_t, n)$, where the operator $\mod(m,n)$
gives the remainder of $m$ divided by $n$. Figure \ref{fig: off policy} shows the results of the
off-policy TD learning by these two approaches, with the choice of $n = 32$ and $r(s) = 1+ \sin(2\pi
s/n)$ and learning rate $\eta = 0.1$. The target policy is $\pi(a_i|s) = \frac12$ while the behavior policy is $\mu(a_i|s) =\frac12
+ c a_i$. The difference between the two policies becomes larger as the constant $c\in[0,1/2]$
increases. From the previous analysis, if one group has much fewer samples as it should have, then
the minimizer of the reweighting method is highly affected by the sampling bias. This is verified in
the plots: as $c$ becomes larger, the performance of reweighting deteriorates, while resampling is
rather stable and almost experiences no difference with the on-policy prediction in this example.

\begin{figure}[h!]
  \centering
  \includegraphics[width=0.8\textwidth, height=4cm]{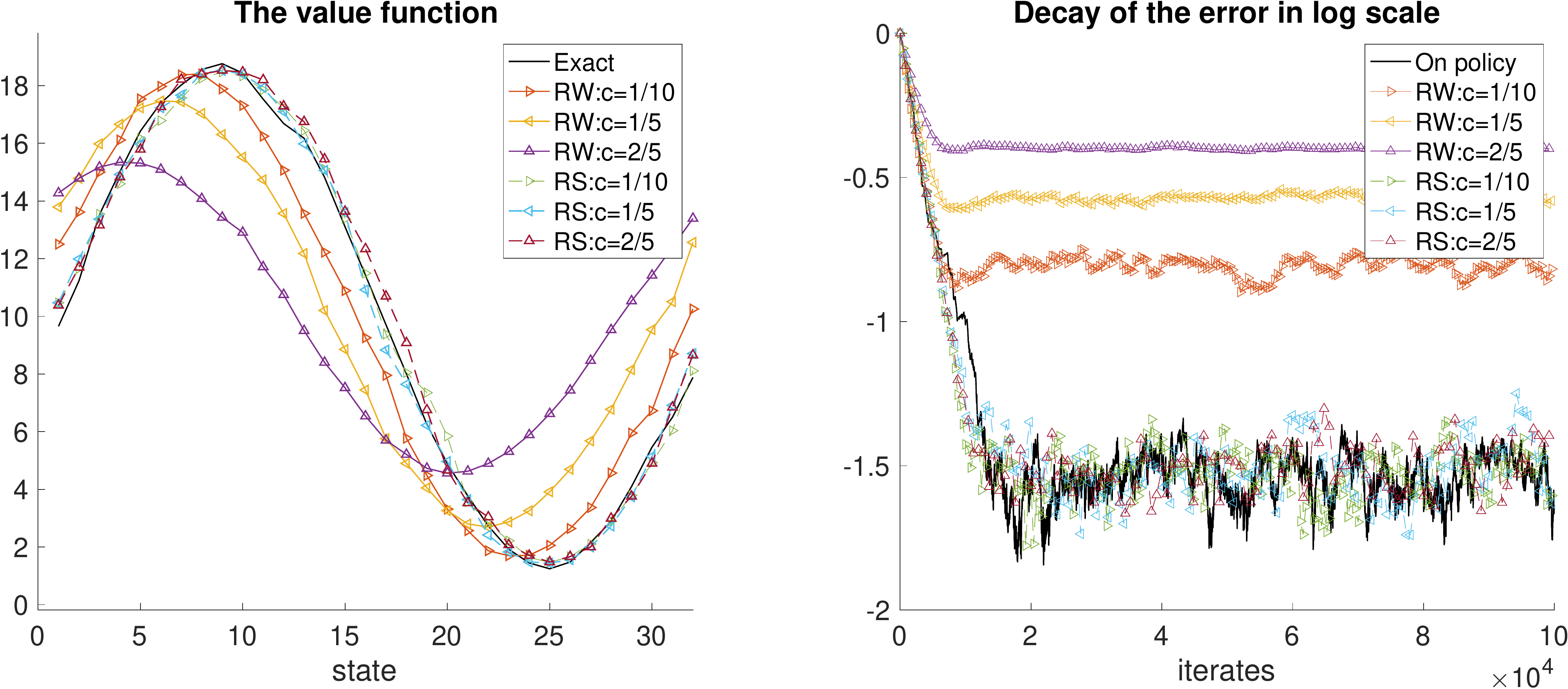}
  \caption{The left plot shows the approximate value function obtained by the two methods. The right
    plot is the evolution of the relative error $\log(\frac{{e}_t}{{e}_0})$, where the absolute
    error ${e}_t = \lVert V_t(s) - V^\pi(s) \rVert^2_2.$ RW and RS in the upright corner represent reweighting and resampling, respectively. $c$ determines the behavior policy $\mu(a_i|s) = \frac12+ca_i$. The value function is trained on a trajectory with length $10^5$ generated by the behavior policy. The value function obtained from resampling
    is fairly close to the exact value function, while the results of reweighting gets worse as the
    behavior policy gets further from the target policy. }
  \label{fig: off policy}
\end{figure}

\section{Discussions} \label{sec:disc}

This paper examines the different behaviors of reweighting and resampling for training on biasedly
sampled data with the stochastic gradient descent. From both the dynamical stability and stochastic
asymptotics viewpoints, we explain why resampling is numerically more stable and robust than
reweighting. Based on this theoretical understanding, we advocate for considering data, model, and
optimization as an integrated system, while addressing the bias.

An immediate direction for future work is to apply the analysis to more sophisticated stochastic training algorithms and understand their impact on resampling and reweighting. Another direction is to extend our analysis to unsupervised learning problems. For example, in the principal component analysis one computes the dominant eigenvectors of the covariance matrix of a data set. When the data set consists of multiple subgroups sampled with biases and a stochastic algorithm is applied to compute the eigenvectors, then an interesting question is how resampling or reweighting would affect the result.

\section*{Acknowledgements}
\vspace{-5px}
The work of L.Y. and Y.Z. is partially supported by the U.S. Department of Energy via Scientific Discovery through Advanced Computing (SciDAC) program and also by the National Science Foundation under award DMS-1818449. J.A. is supported by Joe Oliger Fellowship from Stanford University.

\bibliography{iclr2021_conference}

\begin{thebibliography}{33}
\providecommand{\natexlab}[1]{#1}
\providecommand{\url}[1]{\texttt{#1}}
\expandafter\ifx\csname urlstyle\endcsname\relax
  \providecommand{\doi}[1]{doi: #1}\else
  \providecommand{\doi}{doi: \begingroup \urlstyle{rm}\Url}\fi

\bibitem[Amini et~al.(2019)Amini, Soleimany, Schwarting, Bhatia, and
  Rus]{amini2019uncovering}
Alexander Amini, Ava~P Soleimany, Wilko Schwarting, Sangeeta~N Bhatia, and
  Daniela Rus.
\newblock Uncovering and mitigating algorithmic bias through learned latent
  structure.
\newblock In \emph{Proceedings of the 2019 AAAI/ACM Conference on AI, Ethics,
  and Society}, pp.\  289--295, 2019.

\bibitem[Berglund(2011)]{berglund2011kramers}
Nils Berglund.
\newblock Kramers' law: Validity, derivations and generalisations.
\newblock \emph{Markov Processes Relat. Fields}, 19\penalty0 (3):\penalty0
  459--490, 2011.

\bibitem[Bolukbasi et~al.(2016)Bolukbasi, Chang, Zou, Saligrama, and
  Kalai]{bolukbasi2016man}
Tolga Bolukbasi, Kai-Wei Chang, James~Y Zou, Venkatesh Saligrama, and Adam~T
  Kalai.
\newblock Man is to computer programmer as woman is to homemaker? debiasing
  word embeddings.
\newblock In \emph{Advances in neural information processing systems}, pp.\
  4349--4357, 2016.

\bibitem[Bovier et~al.(2004)Bovier, Eckhoff, Gayrard, and
  Klein]{bovier2004metastability}
Anton Bovier, Michael Eckhoff, V{\'e}ronique Gayrard, and Markus Klein.
\newblock Metastability in reversible diffusion processes i. sharp asymptotics
  for capcities and exit times.
\newblock \emph{Journal of the European Mathematical Society}, 6\penalty0
  (4):\penalty0 399--424, 2004.

\bibitem[Bovier et~al.(2005)Bovier, Gayrard, and
  Klein]{bovier2005metastability}
Anton Bovier, V{\'e}ronique Gayrard, and Markus Klein.
\newblock Metastability in reversible diffusion processes ii: Precise
  asymptotics for small eigenvalues.
\newblock \emph{Journal of the European Mathematical Society}, 7\penalty0
  (1):\penalty0 69--99, 2005.

\bibitem[Caliskan et~al.(2017)Caliskan, Bryson, and
  Narayanan]{caliskan2017semantics}
Aylin Caliskan, Joanna~J Bryson, and Arvind Narayanan.
\newblock Semantics derived automatically from language corpora contain
  human-like biases.
\newblock \emph{Science}, 356\penalty0 (6334):\penalty0 183--186, 2017.

\bibitem[Calmon et~al.(2017)Calmon, Wei, Vinzamuri, Ramamurthy, and
  Varshney]{calmon2017optimized}
Flavio Calmon, Dennis Wei, Bhanukiran Vinzamuri, Karthikeyan~Natesan
  Ramamurthy, and Kush~R Varshney.
\newblock Optimized pre-processing for discrimination prevention.
\newblock In \emph{Advances in Neural Information Processing Systems}, pp.\
  3992--4001, 2017.

\bibitem[Chang et~al.(2017)Chang, Learned-Miller, and
  McCallum]{chang2017active}
Haw-Shiuan Chang, Erik Learned-Miller, and Andrew McCallum.
\newblock Active bias: Training more accurate neural networks by emphasizing
  high variance samples.
\newblock In \emph{Advances in Neural Information Processing Systems}, pp.\
  1002--1012, 2017.

\bibitem[Galar et~al.(2011)Galar, Fernandez, Barrenechea, Bustince, and
  Herrera]{galar2011review}
Mikel Galar, Alberto Fernandez, Edurne Barrenechea, Humberto Bustince, and
  Francisco Herrera.
\newblock A review on ensembles for the class imbalance problem: bagging-,
  boosting-, and hybrid-based approaches.
\newblock \emph{IEEE Transactions on Systems, Man, and Cybernetics, Part C
  (Applications and Reviews)}, 42\penalty0 (4):\penalty0 463--484, 2011.

\bibitem[Guo et~al.(2017)Guo, Li, Shang, Gu, Huang, and
  Gong]{haixiang2017learning}
Haixiang Guo, Yijing Li, Jennifer Shang, Mingyun Gu, Yuanyue Huang, and Bing
  Gong.
\newblock Learning from class-imbalanced data: Review of methods and
  applications.
\newblock \emph{Expert Systems with Applications}, 73:\penalty0 220--239, 2017.

\bibitem[He \& Garcia(2009)He and Garcia]{he2009learning}
Haibo He and Edwardo~A Garcia.
\newblock Learning from imbalanced data.
\newblock \emph{IEEE Transactions on knowledge and data engineering},
  21\penalty0 (9):\penalty0 1263--1284, 2009.

\bibitem[He \& Ma(2013)He and Ma]{he2013imbalanced}
Haibo He and Yunqian Ma.
\newblock \emph{Imbalanced learning: foundations, algorithms, and
  applications}.
\newblock John Wiley \& Sons, 2013.

\bibitem[Kamiran \& Calders(2012)Kamiran and Calders]{kamiran2012data}
Faisal Kamiran and Toon Calders.
\newblock Data preprocessing techniques for classification without
  discrimination.
\newblock \emph{Knowledge and Information Systems}, 33\penalty0 (1):\penalty0
  1--33, 2012.

\bibitem[Kay et~al.(2015)Kay, Matuszek, and Munson]{kay2015unequal}
Matthew Kay, Cynthia Matuszek, and Sean~A Munson.
\newblock Unequal representation and gender stereotypes in image search results
  for occupations.
\newblock In \emph{Proceedings of the 33rd Annual ACM Conference on Human
  Factors in Computing Systems}, pp.\  3819--3828, 2015.

\bibitem[Kingma \& Ba(2014)Kingma and Ba]{kingma2014adam}
Diederik~P Kingma and Jimmy Ba.
\newblock Adam: A method for stochastic optimization.
\newblock \emph{arXiv preprint arXiv:1412.6980}, 2014.

\bibitem[Krawczyk(2016)]{krawczyk2016learning}
Bartosz Krawczyk.
\newblock Learning from imbalanced data: open challenges and future directions.
\newblock \emph{Progress in Artificial Intelligence}, 5\penalty0 (4):\penalty0
  221--232, 2016.

\bibitem[Kumar et~al.(2010)Kumar, Packer, and Koller]{kumar2010self}
M~Pawan Kumar, Benjamin Packer, and Daphne Koller.
\newblock Self-paced learning for latent variable models.
\newblock In \emph{Advances in neural information processing systems}, pp.\
  1189--1197, 2010.

\bibitem[Li et~al.(2017)Li, Tai, and Weinan]{li2017stochastic}
Qianxiao Li, Cheng Tai, and E~Weinan.
\newblock Stochastic modified equations and adaptive stochastic gradient
  algorithms.
\newblock In \emph{International Conference on Machine Learning}, pp.\
  2101--2110, 2017.

\bibitem[Li et~al.(2019)Li, Tai, and Weinan]{li2019stochastic}
Qianxiao Li, Cheng Tai, and E~Weinan.
\newblock Stochastic modified equations and dynamics of stochastic gradient
  algorithms i: Mathematical foundations.
\newblock \emph{J. Mach. Learn. Res.}, 20:\penalty0 40--1, 2019.

\bibitem[L{\'o}pez et~al.(2013)L{\'o}pez, Fern{\'a}ndez, Garc{\'\i}a, Palade,
  and Herrera]{lopez2013insight}
Victoria L{\'o}pez, Alberto Fern{\'a}ndez, Salvador Garc{\'\i}a, Vasile Palade,
  and Francisco Herrera.
\newblock An insight into classification with imbalanced data: Empirical
  results and current trends on using data intrinsic characteristics.
\newblock \emph{Information sciences}, 250:\penalty0 113--141, 2013.

\bibitem[Maciejewski \& Stefanowski(2011)Maciejewski and
  Stefanowski]{maciejewski2011local}
Tomasz Maciejewski and Jerzy Stefanowski.
\newblock Local neighbourhood extension of smote for mining imbalanced data.
\newblock In \emph{2011 IEEE symposium on computational intelligence and data
  mining (CIDM)}, pp.\  104--111. IEEE, 2011.

\bibitem[Malisiewicz et~al.(2011)Malisiewicz, Gupta, and
  Efros]{malisiewicz2011ensemble}
Tomasz Malisiewicz, Abhinav Gupta, and Alexei~A Efros.
\newblock Ensemble of exemplar-svms for object detection and beyond.
\newblock In \emph{2011 International conference on computer vision}, pp.\
  89--96. IEEE, 2011.

\bibitem[Mani \& Zhang(2003)Mani and Zhang]{mani2003knn}
Inderjeet Mani and I~Zhang.
\newblock knn approach to unbalanced data distributions: a case study involving
  information extraction.
\newblock In \emph{Proceedings of workshop on learning from imbalanced
  datasets}, volume 126, 2003.

\bibitem[Menon et~al.(2020)Menon, Damian, Hu, Ravi, and Rudin]{menon2020pulse}
Sachit Menon, Alexandru Damian, Shijia Hu, Nikhil Ravi, and Cynthia Rudin.
\newblock Pulse: Self-supervised photo upsampling via latent space exploration
  of generative models.
\newblock In \emph{Proceedings of the IEEE/CVF Conference on Computer Vision
  and Pattern Recognition}, pp.\  2437--2445, 2020.

\bibitem[Moro et~al.(2014)Moro, Cortez, and Rita]{moro2014data}
S{\'e}rgio Moro, Paulo Cortez, and Paulo Rita.
\newblock A data-driven approach to predict the success of bank telemarketing.
\newblock \emph{Decision Support Systems}, 62:\penalty0 22--31, 2014.

\bibitem[M{\"u}ller-Gronbach et~al.(2020)M{\"u}ller-Gronbach, Yaroslavtseva,
  et~al.]{muller2020performance}
Thomas M{\"u}ller-Gronbach, Larisa Yaroslavtseva, et~al.
\newblock On the performance of the euler--maruyama scheme for sdes with
  discontinuous drift coefficient.
\newblock In \emph{Annales de l'Institut Henri Poincar{\'e}, Probabilit{\'e}s
  et Statistiques}, volume~56, pp.\  1162--1178. Institut Henri Poincar{\'e},
  2020.

\bibitem[Rotskoff \& Vanden-Eijnden(2018)Rotskoff and
  Vanden-Eijnden]{rotskoff2018parameters}
Grant Rotskoff and Eric Vanden-Eijnden.
\newblock Parameters as interacting particles: long time convergence and
  asymptotic error scaling of neural networks.
\newblock In \emph{Advances in neural information processing systems}, pp.\
  7146--7155, 2018.

\bibitem[Schlegel et~al.(2019)Schlegel, Chung, Graves, Qian, and
  White]{schlegel2019importance}
Matthew Schlegel, Wesley Chung, Daniel Graves, Jian Qian, and Martha White.
\newblock Importance resampling for off-policy prediction.
\newblock In \emph{Advances in Neural Information Processing Systems}, pp.\
  1799--1809, 2019.

\bibitem[Seiffert et~al.(2008)Seiffert, Khoshgoftaar, Van~Hulse, and
  Napolitano]{seiffert2008resampling}
Chris Seiffert, Taghi~M Khoshgoftaar, Jason Van~Hulse, and Amri Napolitano.
\newblock Resampling or reweighting: A comparison of boosting implementations.
\newblock In \emph{2008 20th IEEE International Conference on Tools with
  Artificial Intelligence}, volume~1, pp.\  445--451. IEEE, 2008.

\bibitem[Shi et~al.(2019)Shi, Du, Su, and Jordan]{shi2019acceleration}
Bin Shi, Simon~S Du, Weijie Su, and Michael~I Jordan.
\newblock Acceleration via symplectic discretization of high-resolution
  differential equations.
\newblock In \emph{Advances in Neural Information Processing Systems}, pp.\
  5744--5752, 2019.

\bibitem[Sweeney(2013)]{sweeney2013discrimination}
Latanya Sweeney.
\newblock Discrimination in online ad delivery.
\newblock \emph{Queue}, 11\penalty0 (3):\penalty0 10--29, 2013.

\bibitem[Wu et~al.(2018)Wu, Ma, and E]{wu2018sgd}
Lei Wu, Chao Ma, and Weinan E.
\newblock How sgd selects the global minima in over-parameterized learning: A
  dynamical stability perspective.
\newblock In \emph{Advances in Neural Information Processing Systems}, pp.\
  8279--8288, 2018.

\bibitem[Zhao et~al.(2019)Zhao, Coston, Adel, and Gordon]{zhao2019conditional}
Han Zhao, Amanda Coston, Tameem Adel, and Geoffrey~J Gordon.
\newblock Conditional learning of fair representations.
\newblock In \emph{International Conference on Learning Representations}, 2019.

\end{thebibliography}
\bibliographystyle{iclr2021_conference}

\appendix

\section{Proofs in section 3}\label{sec:sec3}

\subsection{Proof of Lemma \ref{lemma: eg1_1}}
\begin{proof}
In resampling, near $\th=-1$ the gradient is $\th+1$ with probability $a_1$ and $0$ with probability
$a_2$. Let us denote the random gradient at each step by $W(\th+1)$, where $W$ is a Bernoulli random
variable with mean $\E(W)=a_1$ and variance $\V(W)=a_1a_2$. At the learning rate $\eta$, the
iteration can be written as
\[
(\th_{k+1}+1) = (1-\eta W)(\th_k+1).
\]
The first and second moments of the iterates are
\begin{equation}
  \begin{aligned}
    \E[\th_k+1] &= (1-\eta a_1)^k (\th_0+1),\\
    \E[(\th_k+1)^2] &= ((1-\eta a_1)^2 + \eta^2 a_1 a_2)^k (\th_0+1)^2.
  \end{aligned}
\end{equation}
According to the definition of the stochastic stability, SGD is stable around $\th=-1$ if the
multiplicative factor of the second equation is bounded by $1$, i.e.
\begin{align}
  (1-\eta a_1)^2 + \eta^2 a_1 a_2  \le 1.
\end{align}

Consider now the stability around $\th=1$, the iteration can be written as
\[
(\th_{k+1}-1)=(1-\eta W)(\th_k-1),
\]
where $W$ is again a Bernoulli random variable with $\E(W)=a_2$ and $\V(W)=a_1a_2$. The same
computation shows that the second moment follows
\[
\E[(\th_k-1)^2]=((1-\eta a_2)^2 + \eta^2 a_1 a_2)^k (\th_0-1)^2.
\]
Therefore, the condition for the SGD to be stable around $\th=1$ is
\begin{align}
  (1-\eta a_2)^2 + \eta^2 a_1 a_2  \le 1.
\end{align}
\end{proof}

\subsection{Proof of Lemma \ref{lemma: eg1_2}}
\begin{proof}
In reweighting, near $\th=-1$ the gradient is $\frac{a_1}{f_1}(\th+1)$ with probability $f_1$ and
$0$ with probability $f_2$. Let us denote the random gradient at each step by $W(\th+1)$, where $W$
is a Bernoulli random variable with $\E(W)=a_1$ and $\V(W)=f_1f_2\left(\frac{a_1}{f_1}\right)^2$. At
the learning rate $\eta$, the iteration can be written as
\[
(\th_{k+1}+1) \leftarrow (1-\eta W)(\th_k+1).
\]
Hence the second moments of the iterates are given by
\[
\E[(\th_k+1)^2] = ((1-\eta a_1)^2 + \eta^2 f_1 f_2 (a_1/f_1)^2)^k (\th_0+1)^2.
\]
Therefore, the condition for the SGD to be stable around $\th=-1$ is
\[
(1-\eta a_1)^2 + \eta^2 f_1 f_2 \left(\frac{a_1}{f_1}\right)^2 \le 1.
\]

Consider now the stability around $\th=1$, the gradient is $0$ with probability $f_1$ and
$\frac{a_2}{f_2}(\th-1)$ with probability $f_2$. An analysis similar to the case $\th=-1$ shows that
the condition for the SGD to be stable around $\th=1$ is
\[
(1-\eta a_2)^2 + \eta^2 f_1 f_2 \left(\frac{a_2}{f_2}\right)^2  \le 1.
\]
\end{proof}

\section{Proofs in section 4}\label{sec:sec4}

\subsection{Proof of Lemma \ref{lemma: eg2_1}}
\begin{proof}
  In resampling, with probability $a_1$ the gradients over the four intervals $(-\infty,-1)$,
  $(-1,0)$, $(0,1)$, and $(1,\infty)$ are $-1$, $1$, $\eps$, and $\eps$. With probability $a_2$,
  they are $-\eps$, $-\eps$, $-1$, and $1$ across these four intervals.  The variances of the
  gradients are $a_1a_2(1-\eps)^2$, $a_1a_2 (1 +\eps)^2$, $a_1a_2(1+\eps)^2$, $a_1a_2(1-\eps)^2$,
  respectively, across the same intervals.

  Since $\eps\ll 1$, the variance can be written as $a_1a_2+O(\eps)$ across all intervals. Then the
  SGD dynamics with learning rate $\eta$ can be approximated by
  \[
  \th_{k+1} \leftarrow \th_k - \eta \left(V'(\th_k) + \sqrt{a_1 a_2 +O(\eps)} W\right),
  \]
  where $W\sim \mathcal{N}(0,1)$ is a normal random variable. When $\eta$ is small, one can
  approximate the dynamics by a stochastic differential equation of form
  \[
  d\Th = -V'(\Th) dt + \sqrt{\eta}\sqrt{a_1 a_2 +O(\eps)} dB
  \]
  by identifying $\th_k\approx \Th(t=k\eta)$ (see Appendix \ref{sec: SDE} for details). The
  stationary distribution of this stochastic process is
  \[
  p_s(\th) = \frac{1}{Z} \exp\left(-\frac{2}{(a_1 a_2+O(\eps)) \eta} V(\th)\right),
  \]
  where $Z$ is a normalization constant. Plugging in $\th=-1,1$ results in
  \begin{equation*}
    \begin{aligned}
      \frac{p_s(1)}{p_s(-1)} =&\exp\left(-\frac{2}{(a_1 a_2+O(\eps)) \eta} \left( V(1) - V(-1) \right)\right)
      =\exp\left(-\frac{2}{a_1 a_2\eta}(V(1)-V(-1)) + O(\eps)\right)\\
      =&\exp\left(-\frac{2}{a_1 a_2\eta}(V(1)-V(-1))\right) +O(\eps).
    \end{aligned}
  \end{equation*}
  Under the assumption that $\eps \ll 1$, the last term is negligible. When $a_2>a_1$, $V(\th)$ is
  minimized at $\th=1$, which implies $-(V(1) - V(-1)) > 0$. Hence, this ratio is larger than
  1.
\end{proof}

\subsection{Proof of Lemma \ref{lemma: eg2_2}}
\begin{proof}
  In reweighting, with probability $f_1$ the gradients are $-\frac{a_1}{f_1}$, $\frac{a_1}{f_1}$,
  $\frac{a_1}{f_1}\eps$, and $\frac{a_1}{f_1}\eps$ over the four intervals $(-\infty,-1)$, $(-1,0)$,
  $(0,1)$, and $(1,\infty)$, respectively. With probability $f_2$, they are $-\frac{a_2}{f_2}\eps$,
  $-\frac{a_2}{f_2}\eps$, $-\frac{a_2}{f_2}$, and $\frac{a_2}{f_2}$. The variances of the gradients
  are $f_1f_2 (\frac{a_1}{f_1} -\frac{a_2}{f_2}\eps)^2$,
  $f_1f_2(\frac{a_1}{f_1}+\frac{a_2}{f_2}\eps)^2$, $f_1f_2(\frac{a_1}{f_1}\eps+\frac{a_2}{f_2})^2$,
  and $f_1f_2 (\frac{a_1}{f_1}\eps - \frac{a_2}{f_2})^2$, respectively, across the same
  intervals.

  Since $\eps\ll 1$, the variance can be written as $f_1 f_2 \frac{a_1^2}{f_1^2}+O(\eps)$ for
  $\th<0$ and $f_1 f_2 \frac{a_2^2}{f_2^2}+O(\eps)$ for $\th>0$.

  With $\th_k \approx \Th(k\eta)$, the approximate SDE for $\th<0$ is given by
  \[
  d\Th = -V'(\Th) dt + \sqrt{\eta}\sqrt{f_1 f_2 \frac{a_1^2}{f_1^2} +O(\eps)} dB
  \]
  while the one for $\th>0$ is
  \[
  d\Th = -V'(\Th) dt + \sqrt{\eta}\sqrt{f_1 f_2 \frac{a_2^2}{f_2^2} +O(\eps)} dB
  \]
  (see Appendix \ref{sec: SDE} for the SDE derivations). The stationary distributions for $\th<0$ and
  $\th>0$ are, respectively,
  \[
  \frac{1}{Z_1} \exp\left(-\frac{2}{\left(f_1 f_2 \frac{a_1^2}{f_1^2}+O(\eps)\right) \eta} V(\th)\right),\qd \frac{1}{Z_2} \exp\left(-\frac{2}{\left(f_1 f_2 \frac{a_2^2}{f_2^2} +O(\eps)\right)\eta} V(\th)\right).
  \]
  Plugging in $\th=-1,1$ results in
  \begin{equation}
    \label{eq: res1}
    \begin{aligned}
    \frac{p_w(1)}{p_w(-1)} = &
    \frac{Z_1}{Z_2}\exp\left(-\frac{2}{\left(f_1 f_2 \frac{a_2^2}{f_2^2} +O(\eps)\right)\eta} V(1)+\frac{2}{\left(f_1 f_2 \frac{a_1^2}{f_1^2}+O(\eps)\right) \eta} V(-1)\right)\\
    =&\frac{Z_1}{Z_2}\exp\left(-\frac{2f_2/f_1}{a_2^2\eta} V(1)+\frac{2f_1/f_2}{ a_1^2\eta} V(-1) + O(\eps)\right).
    \end{aligned}
  \end{equation}

  The next step is to figure out the relationship between $Z_1$ and $Z_2$. Consider an SDE with
  non-smooth diffusion $d\Th = -V'(\Th) dt + \sigma dB$. The Kolmogorov equation for the stationary
  distribution is
  \begin{equation}
    \label{fokkerplanck}
    0 = p_t = \left(V'(\th)p + \left(\frac{\sigma^2}{2}p\right)_\th \right)_\th.
  \end{equation}
  This suggests that $\sigma^2 p$ is continuous at the discontinuity $\th=0$. In our setting, since
  $V(0)=0$, this simplifies to
  \[
  \left(f_1 f_2 \frac{a_1^2}{f_1^2}+O(\eps)\right) \eta\cdot \frac{1}{Z_1} =
  \left(f_1 f_2 \frac{a_2^2}{f_2^2} +O(\eps)\right)\eta\cdot \frac{1}{Z_2}.
  \]
  This simplifies to 
  \[
  \frac{Z_1}{Z_2} = \frac{f_1 f_2 \frac{a_1^2}{f_1^2}+O(\eps)}{f_1 f_2 \frac{a_2^2}{f_2^2} +O(\eps)} = \frac{ a_1^2/f_1^2}{ a_2^2/f_2^2} + O(\eps).
  \]
  Inserting this into (\ref{eq: res1}) results in
  \begin{equation*}
    \begin{aligned}
      \frac{p_w(1)}{p_w(-1)} =&\left(\frac{ a_1^2/f_1^2}{ a_2^2/f_2^2} + O(\eps)\right)\exp\left(-\frac{2f_2/f_1}{a_2^2\eta} V(1)+\frac{2f_1/f_2}{ a_1^2\eta} V(-1) + O(\eps)\right) \\
      =& \frac{ a_1^2/f_1^2}{ a_2^2/f_2^2}  \exp\left(-\frac{2f_2/f_1}{a_2^2\eta} V(1)+\frac{2f_1/f_2}{ a_1^2\eta} V(-1) \right) + O(\eps).
    \end{aligned}
  \end{equation*}
  By the assumption
  $\frac{f_2}{f_1} \leq \frac{a_2}{a_1}\sqrt{\frac{V(-1)}{V(1)}}$ and $V(1) < V(-1) < 0$, one has $\left(\frac{ a_1}{a_2}\right)^2\left(\frac{f_2}{ f_1}\right)^2 \leq  \frac{V(-1)}{V(1)} < 1$ and $-\frac{f_2/f_1}{a_2^2}V(1) \leq -\frac{f_1/f_2}{a_1^2}V(-1)$. Hence the above ratio is less than $1$.
\end{proof}

\subsection{Extended results for $1$-dimension}
\label{appedix: general 1D}
Let us consider the population loss function $V(\th) = a_1V_1(\th) + a_2V_2(\th)$ with,
\[
V_1(\th)=\begin{cases}
h_1(\th), & \th\le 0\\
\eps \th , & \th>0
\end{cases},
\quad
V_2(\th)=\begin{cases}
-\eps\th, & \th\le 0\\
h_2(\th), & \th>0
\end{cases},
\]
where $h_1, h_2$ are strictly convex functions and continuously differentiable. We assume $V(\th)$ has two local minimizers $\th_1 < 0, \th_2 > 0$ and the values are negative at local minima. Therefore, when $a_2>a_1$, $\th_2$ should be the global minimizer.  In addition, we assume that the geometries of $h_1, h_2$ at two local minimizers are similar, i.e., $h_1(\th_1) = h_2(\th_2)$, $h_1'(\th_1) = h_2'(\th_2)$; if we set $g_i(\th)$ to be the anti-derivative of  $1/h'_i(\th)$, then $g_1(\th_1) = g_2(\th_2)$. Moreover, we assume that the two disjoint convex functions are smooth at the disjoint point, i.e., $h_1'(0) = h_2'(0)$ and $g_1(0) = g_2(0)$.  
The following two lemmas extend Lemmas \ref{lemma: eg2_1} and \ref{lemma: eg2_2} to piecewise strictly convex function based on the above assumptions. 

\begin{lemma}
\label{lemma: general_1d_s}
When $a_2>a_1$, $V(\th_2) < V(\th_1)$. The stationary distribution for resampling satisfies the relationship
  \[
  \frac{p_s(\th_2)}{p_s(\th_1)} = \exp\l(\frac{2}{\eta}\l( \frac{1}{a_1} - \frac{1}{a_2}\r) \int_{\th_1}^0\frac{1}{h_1'(\th)}d\th  \r)+ O(\eps) > 1 .
  \]
\end{lemma}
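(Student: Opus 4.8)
The plan is to mirror the proof of Lemma~\ref{lemma: eg2_1}, replacing its piecewise-constant diffusion coefficient by a position-dependent one and then carefully tracking the extra prefactors this generates. First I would record the law of the resampling stochastic gradient on each half-line: for $\th<0$ it equals $h_1'(\th)$ with probability $a_1$ and $-\eps$ with probability $a_2$, while for $\th>0$ it equals $\eps$ with probability $a_1$ and $h_2'(\th)$ with probability $a_2$. The mean reproduces $V'(\th)$, and the two-point variance identity $p(1-p)(u-v)^2$ yields
\[
\s^2(\th)=
\begin{cases}
a_1a_2\,(h_1'(\th)+\eps)^2 = a_1a_2\,h_1'(\th)^2+O(\eps), & \th<0,\\
a_1a_2\,(h_2'(\th)-\eps)^2 = a_1a_2\,h_2'(\th)^2+O(\eps), & \th>0.
\end{cases}
\]
As in Section~\ref{sec:sde}, for small $\eta$ the discrete dynamics is approximated by the SDE $d\Th=-V'(\Th)\,dt+\sqrt{\eta}\,\s(\Th)\,dB$, with the jump of $\s$ at $\th=0$ covered by the justification in Appendix~\ref{sec: SDE}.

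Next I would solve the stationary Fokker--Planck equation with this \emph{variable} diffusion coefficient. Setting the stationary flux to zero gives $V'(\th)p+\tfrac{\eta}{2}\partial_\th(\s^2 p)=0$, whose solution is $p_s(\th)\propto \s^{-2}(\th)\exp\!\l(-\tfrac2\eta\int \tfrac{V'}{\s^2}\r)$. On $\th<0$ one has $V'/\s^2=\tfrac{1}{a_2 h_1'(\th)}+O(\eps)$, so the exponent is $\tfrac1{a_2}g_1(\th)$ with $g_1'=1/h_1'$; on $\th>0$ it is $\tfrac{1}{a_1 h_2'(\th)}+O(\eps)$, giving exponent $\tfrac1{a_1}g_2(\th)$. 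Absorbing the constant $a_1a_2$ into the normalization, this produces
\[
p_s(\th)=\frac{C_-}{h_1'(\th)^2}\exp\!\l(-\tfrac{2}{\eta a_2}g_1(\th)\r)\quad(\th<0),\qquad
p_s(\th)=\frac{C_+}{h_2'(\th)^2}\exp\!\l(-\tfrac{2}{\eta a_1}g_2(\th)\r)\quad(\th>0),
\]
with possibly distinct constants on the two branches.

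I would then pin down $C_-/C_+$ by the matching rule inherited from the Fokker--Planck equation, namely that $\s^2 p_s$ is continuous at the discontinuity $\th=0$ (the same device as in the proof of Lemma~\ref{lemma: eg2_2}). Since the $h_i'(0)^2$ prefactors cancel against $\s^2$ at $\th=0$, this reduces to $C_-\exp(-\tfrac{2}{\eta a_2}g_1(0))=C_+\exp(-\tfrac{2}{\eta a_1}g_2(0))$, and the smoothness assumption $g_1(0)=g_2(0)$ gives $C_+/C_-=\exp\!\l(\tfrac{2g_1(0)}{\eta}(\tfrac1{a_1}-\tfrac1{a_2})\r)$. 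Forming the ratio $p_s(\th_2)/p_s(\th_1)$ and invoking the two geometric assumptions---$h_1'(\th_1)=h_2'(\th_2)$ to cancel the prefactors and $g_1(\th_1)=g_2(\th_2)$ to combine the exponents---collapses everything to $\exp\!\l(\tfrac2\eta(\tfrac1{a_1}-\tfrac1{a_2})(g_1(0)-g_1(\th_1))\r)$, which is the claimed formula since $g_1(0)-g_1(\th_1)=\int_{\th_1}^0\tfrac{1}{h_1'}\,d\th$. Finally $a_2>a_1$ forces $\tfrac1{a_1}-\tfrac1{a_2}>0$, and strict convexity of $h_1$ keeps $h_1'>0$ on $(\th_1,0)$ so the integral is positive, giving a ratio exceeding $1$.

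The step I expect to be most delicate is the normalization matching together with the cancellation of the prefactors: unlike the piecewise-linear examples, here $\s^2$ vanishes to $O(\eps^2)$ near each minimizer, so the factors $h_i'^{-2}$ are individually of size $O(\eps^{-2})$ and only the assumption $h_1'(\th_1)=h_2'(\th_2)$ renders their ratio harmless. Care is also required to confirm that the near-singular integrals $\int V'/\s^2$ are the correct objects and that the $O(\eps)$ error terms genuinely remain subleading after exponentiation.
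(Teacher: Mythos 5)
Your proposal is correct and follows essentially the same route as the paper's proof: compute the piecewise drift and variance of the resampling gradient, pass to the SDE, solve the stationary Fokker--Planck equation to get $p_s\propto \s^{-2}\exp\l(-\tfrac2\eta\int V'/\s^2\r)$ on each half-line, match the two branches at $\th=0$, and cancel the prefactors and exponents using the geometric assumptions $h_1'(\th_1)=h_2'(\th_2)$, $g_1(\th_1)=g_2(\th_2)$, $h_1'(0)=h_2'(0)$, $g_1(0)=g_2(0)$. Your matching condition (continuity of $\s^2 p_s$, as in the proof of Lemma~\ref{lemma: eg2_2}) coincides with the paper's construction of a continuous $F$ precisely because $h_1'(0)=h_2'(0)$ is assumed, so the two yield the same constant ratio and the same final formula.
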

\begin{proof}
In resampling, with probability $a_1$  the gradients in the two intervals $(-\infty,0), (0,\infty)$ are $h_1'(\th), \eps$ respectively; with probability $a_2$ the gradients are $-\eps, h_2'(\th)$ respectively. Therefore, the expectation of the gradients $\mu(\th)$ is $a_1h'_1(\th) + O(\eps)$ in $(-\infty, 0)$ and $a_2h'_2(\th) + O(\eps)$ in $(0,\infty)$. The variance of the gradients $\sigma(\th)$ is $a_1a_2h_1'(\th)^2 + O(\eps)$ in $(-\infty, 0)$ and $a_1a_2h'_2(\th)^2 + O(\eps)$ in $(0,\infty)$.
The p.d.f $p_s(t,\th)$ satisfies 
\[
\pt_tp_s  = \pt_\th\l(\mu p_s + \frac{\eta}{2}{\pt_\th(\sigma p_s)}\r),
\]
therefore, the stationary distribution $p_s(\th)$ satisfies
\[
\l(\mu + \frac\eta2\pt_\th \sigma\r) p_s + \frac{\eta\sigma}2 \pt_\th p_s  = 0,\quad \text{or equivalently, }\quad 
\l(\frac{2\mu}{\eta\sigma} + \frac{\pt_\th\sigma}{\sigma}\r) p_s + \pt_\th p_s = 0,
\] 
which implies $p_s(\th) = \frac{1}{Z}e^{-F(\th)}$ with normalization constant $Z = \int_{-\infty}^\infty e^{-F(\th)}$, where
\begin{equation}
\label{eq: 1d_s}
F(\th) = \int_{-\infty}^\th \frac{2\mu(\xi)}{\eta\sigma(\xi)} + \frac{\pt_\xi\sigma(\xi)}{\sigma(\xi)}d\xi = \l\{
\begin{aligned}
  &F_1(\th) - F_1(-\infty), \quad \th\leq 0,\\
  &F_2(\th) - F_2(0) + F_1(0) - F_1(-\infty), \quad \th > 0.
\end{aligned}
\r.
\end{equation}
By inserting $\mu, \sigma$ in different intervals, one has
\begin{equation*}
 \l\{
\begin{aligned}
  &F_1(\th) =  \frac{2}{\eta a_2}\int \frac{1}{h_1'}d\th + \log(a_1a_2(h_1')^2) + O(\eps);\\
  &F_2(\th) = \frac{2}{\eta a_1}\int \frac{1}{h_2'}d\th + \log(a_1a_2(h_2')^2) + O(\eps).
\end{aligned}
\r.
\end{equation*}
Hence, the ratio of the stationary probabiliy at two local minimizers $\th_1<0, \th_2>0$ is 
\begin{equation*}
\begin{aligned}
  \frac{p_s(\th_1)}{p_s(\th_2)} =& \exp(-F(\th_1) + F(\th_2)) = \exp(-F_1(\th_1) + F_2(\th_2) + (F_1(0) - F_2(0)) )\\
  =&\exp\l(-\frac2{\eta a_2} g_1(\th_1) + \frac{2}{\eta a_1}g_2(\th_2) + \log\l(\frac{h_2'(\th_2)^2}{h_1'(\th_1)^2}\r)\r)\cdot\\
  &\exp \l(\frac2{\eta a_2} g_1(0) - \frac{2}{\eta a_1}g_2(0) + \log\l(\frac{h_1'(0)^2}{h_2'(0)^2}\r)\r) + O(\eps),
\end{aligned}
\end{equation*}
where $g_i(\th) = \int \frac{1}{h_i'}d\th, i = 1,2$. By the assumption that $g_1(\th_1) = g_2(\th_2)$ and $h'_1(\th_1) = h_2'(\th_2) $, $g_1(0) = g_2(0)$ and $h'_1(0) = h_2'(0) $ one has, 
\[
\frac{p_s(\th_1)}{p_s(\th_2)} 
  =\exp\l(\frac{2}{\eta} (g_1(0) - g_1(\th_1) )\l( \frac{1}{a_2} - \frac{1}{a_1}\r) \r)+ O(\eps),
\]
Since $a_2>a_1>0$, $\frac{1}{a_2} - \frac{1}{a_1} < 0$. Because of the strictly convexity of $h_1$, $h_1'(\th) > 0$ in $(\th_1, 0)$, therefore, one has $g_1(0) - g_1(\th_1) = \int_{\th_1}^0 \frac{1}{h_1'(\th)}d\th > 0$. Therefore
\[
\frac{p_s(\th_1)}{p_s(\th_2)} 
  =\exp\l(\frac{2}{\eta} (g_1(0) - g_1(\th_1) )\l( \frac{1}{a_2} - \frac{1}{a_1}\r) \r)+ O(\eps) < 1,
\]
\end{proof}

\begin{lemma}
\label{lemma: general_1d_w}
When $a_2>a_1$, $V(\th_2) < V(\th_1)$. Under the condition $\frac{f_1}{f_2} > \sqrt{\frac{a_1}{a_2}}$, the stationary distribution for resampling satisfies the relationship
  \[
  \frac{p_w(\th_2)}{p_w(\th_1)} =\exp\l(\frac{2}{\eta} \l( \frac{f_2}{f_1a_2} - \frac{f_1}{f_2a_1} \r)\int_{\th_1}^0\frac{1}{h_1'(\th)}d\th  \r)+ O(\eps) <1 .
  \]
\end{lemma}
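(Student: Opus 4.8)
The plan is to mirror the stationary Fokker--Planck computation used for resampling in Lemma~\ref{lemma: general_1d_s}, substituting the reweighting gradient statistics. First I would record the drift and variance of the stochastic gradient on the two intervals $(-\infty,0)$ and $(0,\infty)$. Since group $i$ is drawn with probability $f_i$ and contributes $\frac{a_i}{f_i}\nabla V_i$, the mean is $\mu(\th)=a_1 h_1'(\th)+O(\eps)$ on the left and $a_2 h_2'(\th)+O(\eps)$ on the right, identical to resampling because the expectations must agree. The two-point variance, however, is $\sigma(\th)=\frac{f_2}{f_1}a_1^2 h_1'(\th)^2+O(\eps)$ on the left and $\frac{f_1}{f_2}a_2^2 h_2'(\th)^2+O(\eps)$ on the right.

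Given $\mu,\sigma$, the stationary density solves the zero-flux equation $\mu p_w+\frac{\eta}{2}\pt_\th(\sigma p_w)=0$, with solution $p_w(\th)\propto\sigma(\th)^{-1}\exp\!\l(-\tfrac{2}{\eta}\int^{\th}\mu/\sigma\r)$; here $\int\mu/\sigma$ equals $\frac{f_1}{f_2 a_1}g_1(\th)$ on the left and $\frac{f_2}{f_1 a_2}g_2(\th)$ on the right, with $g_i=\int 1/h_i'$. The feature that is genuinely new relative to Lemma~\ref{lemma: general_1d_s} is that the prefactors $\frac{f_2}{f_1}a_1^2$ and $\frac{f_1}{f_2}a_2^2$ differ, so $\sigma$ has a true jump at $\th=0$, whereas for resampling it was continuous there. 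I therefore cannot glue the two branches into one continuous exponent; I must fix their relative normalization through the zero-flux matching that $\sigma p_w$ be continuous at $\th=0$, exactly as in the $Z_1/Z_2$ step of Lemma~\ref{lemma: eg2_2}. I expect this discontinuity matching to be the main obstacle, since it is what separates the reweighting answer from the resampling one.

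With the branches correctly normalized, I would evaluate $p_w(\th_2)/p_w(\th_1)$ and invoke the similarity assumptions $g_1(\th_1)=g_2(\th_2)$, $g_1(0)=g_2(0)$, $h_1'(\th_1)=h_2'(\th_2)$ and $h_1'(0)=h_2'(0)$. These make the $\sigma^{-1}$ factors at the two minima collapse and force $g_2(0)-g_2(\th_2)=g_1(0)-g_1(\th_1)$, so the two $g$-contributions combine into the single coefficient $\frac{2}{\eta}\l(\frac{f_2}{f_1 a_2}-\frac{f_1}{f_2 a_1}\r)\l(g_1(0)-g_1(\th_1)\r)$, with $g_1(0)-g_1(\th_1)=\int_{\th_1}^0 1/h_1'(\th)\,d\th$; this is the claimed exponent up to $O(\eps)$. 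The flux matching additionally produces the $O(1)$ prefactor $\frac{a_1^2/f_1^2}{a_2^2/f_2^2}$ familiar from Lemma~\ref{lemma: eg2_2}; this prefactor reduced to $1$ in the resampling case only because $\sigma$ was continuous there, and under the present hypothesis I would verify it is itself less than one, so it does not overturn the final comparison.

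Finally, for the strict inequality, strict convexity of $h_1$ gives $h_1'>0$ on $(\th_1,0)$, hence $\int_{\th_1}^0 1/h_1'\,d\th>0$. The hypothesis $f_1/f_2>\sqrt{a_1/a_2}$ is equivalent to $f_2^2 a_1/(f_1^2 a_2)<1$, i.e. $\frac{f_2}{f_1 a_2}<\frac{f_1}{f_2 a_1}$, so the coefficient in the exponent is negative and the exponential is less than one. Combined with the prefactor being less than one, this yields $p_w(\th_2)/p_w(\th_1)<1$, confirming that reweighting places lower probability on the global minimizer $\th_2$ once the sampling is sufficiently skewed, in sharp contrast to resampling.
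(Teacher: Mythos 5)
Your setup (the reweighting gradient statistics, the piecewise Gibbs solution of the stationary Fokker--Planck equation, and the use of the similarity assumptions $g_1(\th_1)=g_2(\th_2)$, $g_1(0)=g_2(0)$, $h_1'(\th_1)=h_2'(\th_2)$, $h_1'(0)=h_2'(0)$) coincides with the paper's proof, and your exponent is exactly the stated one. The single place you diverge is the interface condition at $\th=0$, and it changes the formula you land on. The paper glues the two branches by making the exponent $F$ --- which already absorbs the $\log\sigma$ term --- \emph{continuous} at $0$, i.e.\ it effectively imposes continuity of $p_w$ itself; with that convention the $\log\sigma$ contributions at $0$ and at the two minima cancel exactly, and the prefactor-free identity in the statement drops out. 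You instead impose continuity of $\sigma p_w$, as in the proof of Lemma~\ref{lemma: eg2_2}; since for reweighting $\sigma$ genuinely jumps at $0$ (from $\tfrac{f_2}{f_1}a_1^2h_1'(0)^2$ to $\tfrac{f_1}{f_2}a_2^2h_2'(0)^2$), this leaves the multiplicative factor $\tfrac{a_1^2f_2^2}{a_2^2f_1^2}$ in front of the exponential. So your derivation does not reproduce the stated equality verbatim: it proves the same exponent times an extra $O(1)$ prefactor. Your check that this prefactor is $<1$ under $f_1/f_2>\sqrt{a_1/a_2}$ is correct (it is equivalent to $f_1/f_2>a_1/a_2$, which is weaker than the hypothesis), so the inequality $p_w(\th_2)/p_w(\th_1)<1$ --- the substantive content --- survives. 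One small bookkeeping slip: the $\sigma^{-1}$ factors at the two minima do \emph{not} collapse for reweighting (their ratio is precisely $\tfrac{f_2^2a_1^2}{f_1^2a_2^2}$); the net prefactor you quote is nevertheless the right one, just attributed to the wrong step.

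To be clear about which side of the discrepancy is defensible: the zero-flux form $\mu p+\tfrac{\eta}{2}\partial_\th(\sigma p)=0$ forces $\sigma p$ to be continuous wherever $\mu p$ is bounded, which is your matching and the one the paper itself uses in Lemma~\ref{lemma: eg2_2} (whose answer carries exactly the prefactor you find). The proof of Lemma~\ref{lemma: general_1d_w} in the appendix instead normalizes the branches so that $F$, hence $p_w$, is continuous at $0$, which is how the prefactor disappears from the stated formula. You have therefore uncovered an internal inconsistency between the paper's two computations rather than made an error; adopting the paper's continuity-of-$F$ convention recovers the lemma as stated, while your convention yields the Lemma~\ref{lemma: eg2_2}-style version with the prefactor. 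Either way the claimed inequality holds under the stated hypothesis.
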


One sufficient condition such that $\frac{f_1}{f_2} > \sqrt{\frac{a_1}{a_2}}$ is when $f_1, f_2$ is significantly different from $a_1,a_2$ in the sense that $f_1>f_2$ when the actually population proportion $a_1<a_2$.

\begin{proof}
In reweighting, with probability $f_1$ the gradients over the two intervals $(-\infty,0), (0,\infty)$ are $\frac{a_1}{f_1}h_1'(\th), \frac{a_1}{f_1}\eps$ respectively; with probability $f_2$ the gradients are $-\frac{a_2}{f_2}\eps, \frac{a_2}{f_2}h_2'(\th)$ respectively. Therefore, the expectation of the gradients $\mu(\th)$ is $a_1h'_1(\th) + O(\eps)$ in $(-\infty, 0)$ and $a_2h'_2(\th) + O(\eps)$ in $(0,\infty)$. The variance of the gradients $\sigma(\th)$ is $\frac{f_2}{f_1}a_1^2h_1'(\th)^2 + O(\eps)$ in $(-\infty, 0)$ and $\frac{f_1}{f_2}a_2^2h'_2(\th)^2 + O(\eps)$ in $(0,\infty)$.
From the similar analysis as in Lemma \ref{lemma: general_1d_s}, the stationary distribution is $p_w(\th) = \frac{1}{Z}e^{-F(\th)}$ with the same $F(\th)$ defined in \eqref{eq: 1d_s}, but $F_1, F_2$ are defined as follows
\begin{equation*}
 \l\{
\begin{aligned}
  &F_1(\th) = \frac{2f_1}{\eta f_2a_1}\int \frac{1}{h_1'}d\th + \log\l(\frac{f_2a_1^2}{f_1}(h_1')^2\r) + O(\eps);\\
  &F_2(\th) = \frac{2f_2}{\eta f_1a_2}\int \frac{1}{h_2'}d\th + \log\l(\frac{f_1a_2^2}{f_2}(h_2')^2\r) + O(\eps).
\end{aligned}
\r.
\end{equation*}
Hence, the ratio of the stationary probabiliy at two local minimizers $\th_1<0, \th_2>0$ is 
\begin{equation*}
\begin{aligned}
  \frac{p_w(\th_1)}{p_w(\th_2)}  =& \exp(-F_1(\th_1) + F_2(\th_2) + (F_1(0) - F_2(0)) )\\
  =&\exp\l(-\frac{2f_1}{\eta f_2a_1} g_1(\th_1) + \frac{2f_2}{\eta f_1a_2}g_2(\th_2) + \log\l(\frac{f_1^2a_2^2}{f_2^2a_1^2}\frac{h_2'(\th_2)^2}{h_1'(\th_1)^2}\r)\r)\cdot\\
  &\exp \l(\frac{2f_1}{\eta f_2a_1}g_1(0) - \frac{2f_2}{\eta f_1a_2}g_2(0) + \log\l(\frac{f_2^2a_1^2}{f_1^2a_2^2}\frac{h_1'(0)^2}{h_2'(0)^2}\r)\r) + O(\eps),
\end{aligned}
\end{equation*}
where $g_i(\th) = \int \frac{1}{f_i'}d\th, i = 1,2$. By the assumption that $g_1(\th_1) = g_2(\th_2)$ and $h'_1(\th_1) = h_2'(\th_2) $, $g_1(0) = g_2(0)$ and $h'_1(0) = h_2'(0) $ one has, 
\[
\frac{p_w(\th_1)}{p_w(\th_2)} 
  =\exp\l(\frac{2}{\eta} (g_1(0) - g_1(\th_1) )\l( \frac{f_1}{f_2a_1} - \frac{f_2}{f_1a_2}\r) \r)+ O(\eps).
\]
Because of the strictly convexity of $h_1$, one has $g_1(0) - g_1(\th_1)  > 0$. By the assumption $\frac{f_1}{f_2} > \sqrt{\frac{a_1}{a_2}}$, then $\l( \frac{f_1}{f_2a_1} - \frac{f_2}{f_1a_2}\r) > 0$, which gives $\frac{p_s(\th_1)}{p_s(\th_2)} > 1$.
\end{proof}


\paragraph{Proof of Lemmas \ref{lemma: 1d_finite_s} and \ref{lemma: 1d_finite_w}} 
We can further extend the results in 1D for a finite number of local minima as presented in Lemmas \ref{lemma: 1d_finite_s} and \ref{lemma: 1d_finite_w}. In the same way as in the two local minima case, we assume that $h_i(\th)$ has a similar geometry at the minimizers and $h_{i}(\th), h_{i+1}(\th)$ are smooth enough at the disjoint point $\th_i$. In order to obtain the ratio of the stationary distribution at two arbitrary local minimizes, we take an additional assumption that $g_i(\th_{i-1}) = g_i(\th_i)$ for all $i$, where $g_i(\th)$ is the anti-derivative of $1/h_i'(\th)$. Intuitively, this assumption requires that each local minimum has an equal barrier on both sides. 
To be more specific, the assumptions we mentioned above are the following: at all the local minimizers, $h_i(\th_i^*) = h_j(\th_j^*)<0, h'_i(\th_i^*) = h'_j(\th_j^*)$, let $g_i(\th) = \int \frac{1}{h'_i(\th)}d\th$, then $g_i(\th^*_i)= g_j(\th^*_j)$ for any $i\neq j$; at all the disjoint points, $h_i'(\th_i) = h_{i+1}(\th_i), g_i(\th_{i-1}) = g_i(\th_i) = g_{i+1}(\th_i)$ for all $i$. 
Lemmas \ref{lemma: 1d_finite_s} and \ref{lemma: 1d_finite_w} are under the above assumptions.

\begin{proof} [Proof of Lemma \ref{lemma: 1d_finite_s}]
For resampling, with probability $a_i$, the gradient is  $h'_i(\th)$ for $\th \in (\th_{i-1},\th_i)$, and $O(\eps)$ for $\th \notin (\th_{i-1},\th_i)$. Therefore, the expectation and variance in $(\th_{i-1},\th_i)$ are $\mu = a_ih'_i(\th) + O(\eps)$ and $\sigma = a_i(1-a_i)h'_i(\th)^2 + O(\eps)$. The stationary solution is 
\[
p_s(\th) = \frac{1}{Z}e^{-F(\th)},\quad \text{with } F(\th) = F_i(\th) - F_i(\th_{i-1}) + \sum_{j=1}^{i-1} F_j(\th_j) - F_j(\th_{j-1}), \text{ for } \th\in(\th_{i-1},\th_i),
\]
where $Z = \int_{-\infty}^\infty e^{-F(\th)}$ is a normalization constant and 
\[
F_i(\th) = \frac{2}{\eta}\int \frac{1}{h_i'(\th)} d\th + \log\l(a_i(1-a_i)h_i'(\th)^2\r)+ O(\eps). 
\]
Therefore, the ratio of the stationary probability at any two local minimizers $\th_p^*, \th_q^*$ is 
\begin{equation*}
\begin{aligned}
  \frac{p_s(\th_p^*)}{p_s(\th_q^*)} =& \exp\l[-\l(F_p(\th_p^*) - F_p(\th_{p-1}) + \sum_{j=1}^{p-1}F_j(\th_j) - F_j(\th_{j-1}) \r)\r.\\
  &\l.+ \l(F_q(\th_q^*) - F_q(\th_{q-1}) + \sum_{j=1}^{q-1}F_j(\th_j) - F_j(\th_{j-1})\r)\r]\\
  = & \exp\l[ -F_p(\th_p^*) + F_q(\th_q^*) + \sum_{j=p}^{q-1} F_j(\th_j) - F_{j+1}(\th_j) \r]\\
  =&\exp\l( -\frac{2}{\eta(1-a_p)} g_p(\th_p^*) + \frac{2}{\eta(1-a_q)} g_q(\th_q^*) + \log\l(\frac{a_q(1-a_q)h_q'(\th_q^*)^2}{a_q(1-a_p)h_p'(\th_p^*)^2}\r)\r)\cdot\\
  &\exp\l( \sum_{j=p}^{q-1} \frac{2}{\eta(1-a_j)} g_j(\th_j) - \frac{2}{\eta(1-a_{j+1})} g_{j+1}(\th_j^*) + \log\l(\frac{a_j(1-a_j)h_j'(\th_j)^2}{a_{j+1}(1-a_{j+1})h_{j+1}'(\th_j)^2}\r)\r)+ O(\eps).
\end{aligned}
\end{equation*}
By the assumption that $g_p(\th_p^*) = g_q(\th_q^*), h_p'(\th_p^*) = h_q'(\th_q^*)$ and  $g_i(\th_{i-1}) = g_i(\th_i) = g_{i+1}(\th_i), h'_i(\th_i) = h_{i+1}'(\th_i)$ for all $i$, then the above ratio can be simplified to
\begin{equation*}
\frac{p_s(\th_p^*)}{p_s(\th_q^*)} = \exp\l[\frac2\eta\l(g_p(\th_p) - g_p(\th_p^*)\r)\l(\frac{1}{1-a_p} - \frac1{1-a_q}\r)\r] + O(\eps) = \l\{
\begin{aligned}
  &>1, \quad\text{if } a_p>a_q;\\
  &<1,  \quad\text{if } a_p<a_q,
\end{aligned}
\r.
\end{equation*}
where the last inequality can be easily derived from that $g_p(\th_p) - g_p(\th_p^*) = \int_{\th_p^*}^{\th_p}\frac{1}{h'_p(\th)}d\th > 0$ because of the strictly convexity of $h_p$.
\end{proof}

\begin{proof} [Proof of Lemma \ref{lemma: 1d_finite_w}]
For reweighting, with probability $f_i$, the gradient is  $\frac{a_i}{f_i}h'_i(\th)$ for $\th \in (\th_{i-1},\th_i)$, and $O(\eps)$ for $\th \notin (\th_{i-1},\th_i)$. Therefore, the expectation and variance in $(\th_{i-1},\th_i)$ are $\mu = a_ih'_i(\th) + O(\eps)$ and $\sigma = \frac{(1-f_i)a_i^2}{f_i} h'_i(\th)^2  + O(\eps)$. The stationary solution 
\[
p_w(\th) = \frac{1}{Z}e^{-F(\th)},\quad \text{with } F(\th) = F_i(\th) - F_i(\th_{i-1}) + \sum_{j=1}^{i-1} F_j(\th_j) - F_j(\th_{j-1}), \text{ for } \th\in(\th_{i-1},\th_i),
\]
where $Z = \int_{-\infty}^\infty e^{-F(\th)}$ is a normalization constant and 
\[
F_i(\th) = \frac{2f_i}{\eta a_i(1-f_i)}\int \frac{1}{h_i'(\th)} d\th + \log\l(\frac{(1-f_i)a_i^2}{f_i} h_i'(\th)^2\r) + O(\eps)
\]
Therefore, the ratio of the stationary probability at any two local minimizers $\th_p^*, \th_q^*$ is 
\begin{equation*}
\begin{aligned}
  &\frac{p_w(\th_p^*)}{p_w(\th_q^*)}
  =  \exp\l[ -F_p(\th_p^*) + F_q(\th_q^*) + \sum_{j=p}^{q-1} F_j(\th_j) - F_{j+1}(\th_j) \r]\\
  =&\exp\l( -\frac{2f_p}{\eta a_p(1-f_p)} g_p(\th_p^*) + \frac{2f_q}{\eta a_q(1-f_q)} g_q(\th_q^*) + \log\l(\frac{f_p(1-f_q)a_q^2 h_q'(\th_q^*)^2}{f_q(1-f_p)a_p^2h_p'(\th_p^*)^2}\r)\r)\cdot\\
  &\exp\l( \sum_{j=p}^{q-1} \frac{2}{\eta(1-a_j)} g_j(\th_j) - \frac{2}{\eta(1-a_{j+1})} g_{j+1}(\th_j^*) + \log\l(\frac{f_j(1-f_j)a_j^2h_j'(\th_j)^2}{f_{j+1}(1-f_{j+1})a_{j+1}^2h_{j+1}'(\th_j)^2}\r)\r)+ O(\eps)
\end{aligned}
\end{equation*}
By the assumption that $g_p(\th_p^*) = g_q(\th_q^*), h_p'(\th_p^*) = h_q'(\th_q^*)$ and  $g_i(\th_{i-1}) = g_i(\th_i) = g_{i+1}(\th_i), h'_i(\th_i) = h_{i+1}'(\th_i)$ for all $i$, then the above ratio can be simplified to
\begin{equation*}
\frac{p_w(\th_p^*)}{p_w(\th_q^*)} = \exp\l[\frac2\eta\l(g_p(\th_p) - g_p(\th_p^*)\r)\l(\frac{f_p}{a_p(1-f_p)} - \frac{f_q}{a_q(1-f_q)}\r)\r] + O(\eps).
\end{equation*}
\end{proof}

\subsection{Proof of Lemma \ref{lemma: multi-D}} 
\label{appedix: multi-D}
\begin{proof}
For resampling method, with probability $a_i$ and $1-a_i$, the $j$-th component of the gradient for the loss $\pt_{\th_j}V(\bth)$ is $\pm \k_i$ and $O(\eps)$ in $\O_i$.  Therefore, in $\O_i$ the expectation is $\pm \k_ia_i + O(\eps)(1-a_i)$, and the variance is $a_i(1-a_i)\k_i^2+O(\eps)$. This gives the approximated SDE for the resampling SGD process,
\begin{equation*}
    d\bTh_s = -\nb V(\bTh_s) dt +  \sqrt{\eta}\s_s(\bTh_s)^{1/2}I_d\ dB
\end{equation*}
where $\s_s(\bTh) =a_i(1-a_i)\k_i^2 + O(\eps)$ for $\bTh\in\O_i$, and $I_d$ is d-dimensional identity matrix. Therefore the p.d.f $p(t,\bth)$ of the stochastic process $\bTh_s(t)$ satisfies the following PDE,
\begin{equation*}
    \pt_tp_s(t, \bth) = \nb_{\bth}\cdot\l[\nb V(\bth) p_s(t,\bth) + \frac{\eta}{2}\nb_{\bth}(\s_s(\bth)I_d \ p_s(t,\bth))\r].
\end{equation*}
Hence, the stationary distribution of the resampling method is 
\begin{equation*}
    p_s(\bth) = \frac{1}{Z}\exp\l( -  \frac2\eta\frac{V(\bth)}{\s_s(\bth)}
    \r),
\end{equation*}
We assume that the value of $V$ makes $\s_sp_s$ continuous at the common boundary of two regions. This yields,
\begin{equation*}
\begin{aligned}
    \frac{p_s(\bth_p^*)}{p_s(\bth_q^*)}
    =& \exp\l( \frac2\eta
    \l(- \frac{V(\bth_p^*)}{\s_s(\bth_p^*)} + \frac{V(\bth_q^*)}{\s_s(\bth_q^*)}\r)
    \r) +O(\eps)\\
    =& \exp\l( \frac2\eta\l( \frac{a_p\beta_p}{a_p(1-a_p)\k_p^2} -  \frac{a_q\beta_q}{a_q(1-a_q)\k_q^2}\r) \r)+O(\eps)\\
    =& \exp\l( \frac2\eta\l( \frac{\beta_p}{(1-a_p)\k_p^2} -  \frac{\beta_q}{(1-a_q)\k_q^2}\r) \r)+O(\eps). 
\end{aligned}
\end{equation*}

For reweighting method, with probability $f_i$ and $1-f_i$, the $j$-th component of the gradient for the loss $\pt_{\th_j}V(\bth)$ is $\pm \frac{a_i}{f_i}\k_i$ and $\frac{1-a_i}{1-f_i}O(\eps)$ in $\O_i$.  Therefore, the expectation is $\pm \k_ia_i + O(\eps)(1-a_i)$, and the variance is $a_i^2(1 - f_i)/f_i\k_i^2 + O(\eps) $. This gives the approximated SDE for the reweighting SGD process,
\begin{equation*}
    d\bTh_w = -\nb V(\bTh_w) dt + \sqrt{\eta}\s_w(\bTh_w)^{1/2}I_d\,dB
\end{equation*}
where $\s_w(\bTh) = a_i^2(1-f_i)/f_i\k_i^2 + O(\eps)$ for $\bth\in\O_i$. Accordingly, the stationary distribution of the resampling method is 
\begin{equation*}
    p_w(\bth) = \frac{1}{Z}\exp\l( -\frac2\eta\frac{V(\bth)}{\s_w(\bth)}
    \r).
\end{equation*}
Again we assume $\s_wp_w$ is continuous on the common boundary of two regions.  This yields,
\begin{equation*}
\begin{aligned}
    \frac{p_w(\bth_p^*)}{p_w(\bth_q^*)}=& \exp\l( \frac2\eta\l( \frac{a_p\beta_p}{a_p^2(1-f_p)/f_p\k_p^2} -  \frac{a_q\beta_q}{a_q^2(1-f_q)/f_q\k_q^2}\r) \r) + O(\eps)\\
    =& \exp\l(\frac2\eta\l( \frac{f_p\beta_p}{a_p(1-f_p)\k_p^2} -  \frac{f_q\beta_q}{a_q(1-f_q)\k_q^2}\r) \r)+ O(\eps).  
\end{aligned}
\end{equation*}

\end{proof}

\begin{remark}
\label{rmk: multi-D}
Note that the stationary distribution for resampling is independent of the sampling proportions $f_i$, while the one for reweighting depends on $f_i$. To better understand how the sampling proportions influence the distribution, let us consider a simple case where $\k_i = \k, \beta_i = \beta>0$. Thus, the above ratio can be simplified to 
\[
\frac{p_s(\bth_p^*)}{p_s(\bth_q^*)} = \exp\l[\frac{2\beta}{\eta \k^2} \l(\frac{1}{1-a_p} - \frac{1}{1-a_q}\r) \r] + O(\eps),
\]
\[
\frac{p_w(\bth_p^*)}{p_w(\bth_q^*)} = \exp\l[\frac{2\beta}{\eta \k^2} \l(\frac{f_p}{a_p(1-a_p)} - \frac{f_q}{a_q(1-a_q)}\r) \r]+ O(\eps).
\]
The above two equations are similar to the results in Lemmas \ref{lemma: 1d_finite_s} and \ref{lemma: 1d_finite_w}, so one can draw the same conclusion as before that the stationary solution for resampling always has the highest probability at the global minimizer, while reweighting does not if the empirical proportions are significantly different from the population proportions. 
\end{remark}

\section{A Justification of the SDE approximation} 
\label{sec: SDE}

The stochastic differential equation approximation of SGD involving data-dependent covariance
coefficient Gaussian noise was first introduced in \citep{li2017stochastic} and justified in the
weak sense. Consider the SDE
\begin{align}\label{sde1}
  d\Th = b(\Th) dt + \sigma (\Th) dB.
\end{align}
The Euler-Maruyama discretization with time step $\eta$ results in
\begin{align}\label{em}
  \Th_{k+1} = \Th_k + \eta b(\Th_k) + \sqrt{\eta}\sigma(\Th_k)Z_k,~~Z_k\sim\mathcal{N}(0,1),~~\Th_0=\th_0.
\end{align}
In our case, $b(\cdot) = - V'(\cdot)$. When $b$ satisfies Lipschitz continuity and some technical
smoothness conditions, according to \citep{li2017stochastic} for any function $g$ from a smooth
class $\mathcal{M}$, there exists $C>0$ and $\alpha>0$ such that for all $k=0,1,2,\cdots, N$,
\begin{align*}
  |E[g(\Th_{k\eta})] -E[g(\th_k)] |\leq C\eta^{\alpha}.
\end{align*}
However, as the loss function considered in this paper has jump discontinuous in the first
derivative, the classical approximation error results for SDE do not apply. In fact, the problem
$V\notin C^1(\mathbb{R}^n)$ is a common issue in machine learning and deep neural networks, as many
loss functions involves non-smooth activation functions such as ReLU and leaky ReLU. In our case, we
need to justify the SDE approximation adopted in Section \ref{sec:stb}. It turns out that strong
approximation error can be obtained if
\begin{itemize}
\item the noise coefficient $\sigma$ is Lipschitz continuous and non-degenerate, and
\item the drift coefficient $b$ is piece-wise Lipschitz continuous, in the sense that $b$ has
  finitely many discontinuity points $-\infty= \xi_0<\xi_1<\cdots<\xi_m<\xi_{m+1}=\infty$ and in
  each interval $(\xi_{i-1},\xi_i)$, $b$ is Lipschitz continuous.
\end{itemize}
Under these conditions, the following approximation result holds: for all $k=0,1,2,\cdots, N$, there
exists $C>0$ such that
\begin{align}\label{est}
  E[|\Th_{k\eta} - \th_k|]\leq C\sqrt{\eta}.
\end{align}
Here $\Th_{k\eta}$ is the solution to SDE at time $k\eta$. The proof strategy closely follows from
\citep{muller2020performance}. The key is to construct a bijective mapping $G:\mathbb{R}\to
\mathbb{R}$ that transforms (\ref{sde1}) to SDE with Lipschitz continuous coefficients. With such a
bijection $G$, one can define a stochastic process $Z:[0,T]\times \Omega \to \mathbb{R}$ by $Z_t =
G(\Th_t)$ and the transformed SDE is
\begin{align}
  dZ_t &= \tilde{b}(Z_t) dt + \tilde{\sigma} dB_t, ~~ t\in [0,T], ~~ Z_0 = G(\Th_0),\\ \text{with
  }&~ \tilde{b} = (G'\cdot b+\frac{1}{2}G''\cdot \sigma^2)\circ G^{-1}~~
  \text{and}~~\tilde{\sigma} = (G'\cdot \sigma)\circ G^{-1}.
\end{align}
As the SGD updates can essentially be viewed as data from the Euler-Maruyama scheme, considering
$Z_k$ as updates from Euler-Maruyama scheme leads to
\begin{align*}
  \E[|\Th_{k\eta}-\th_k|]&\leq c_1 \E[|Z_{k\eta}-G\circ \th_k|] = c_1 \E[|Z_{k\eta}-Z_k+Z_k-G\circ
    \th_k|] \\ &\leq c_2\sqrt{\eta} + c_1\E[|Z_k - G\circ \th_k|].
\end{align*}
To control the second item, we introduce
\[
\th_{t} := \th_k + b(\th_k)(t-k\eta)+\sqrt{t-k\eta}\sigma(\th_k)Z_k,
\]
where $t\in [0,k\eta]$. Then as shown in \citep{muller2020performance},
\begin{align*}
  \E[|Z_k - G\circ \th_k|]\leq c\sqrt{\eta} + c\E\left[\left|\int_0^{k\eta} 1_B(\th_t, \th_k) dt\right|\right],
\end{align*}
with $B$ being the set of pairs $(y_1,y_2)\in\mathbb{R}^2$ where the joint Lipschitz estimate
$|b(y_1)-b(y_2)|$ does not apply due to at least one discontinuity. In
\citep{muller2020performance}, it is estimated by
\begin{align*}
    \E\left[\left|\int_0^{k\eta} 1_B(\th_t, \th_k) dt\right|\right] \leq c\sqrt{\eta},
\end{align*}
which leads us to (\ref{est}).

\section{Numerical comparisons with different learning rates} 
\label{sec: moreplots}
In this section, we present extensive numerical results to show the effect of learning rates in our toy examples. The Figure \ref{fig:images1} corresponds to the example in Section \ref{sec:stb}, and Figure \ref{fig:images2} corresponds to the example in Section \ref{sec:sde}.
\begin{figure}[htb]
    \centering 
\begin{subfigure}{0.25\textwidth}
  \includegraphics[width=\linewidth]{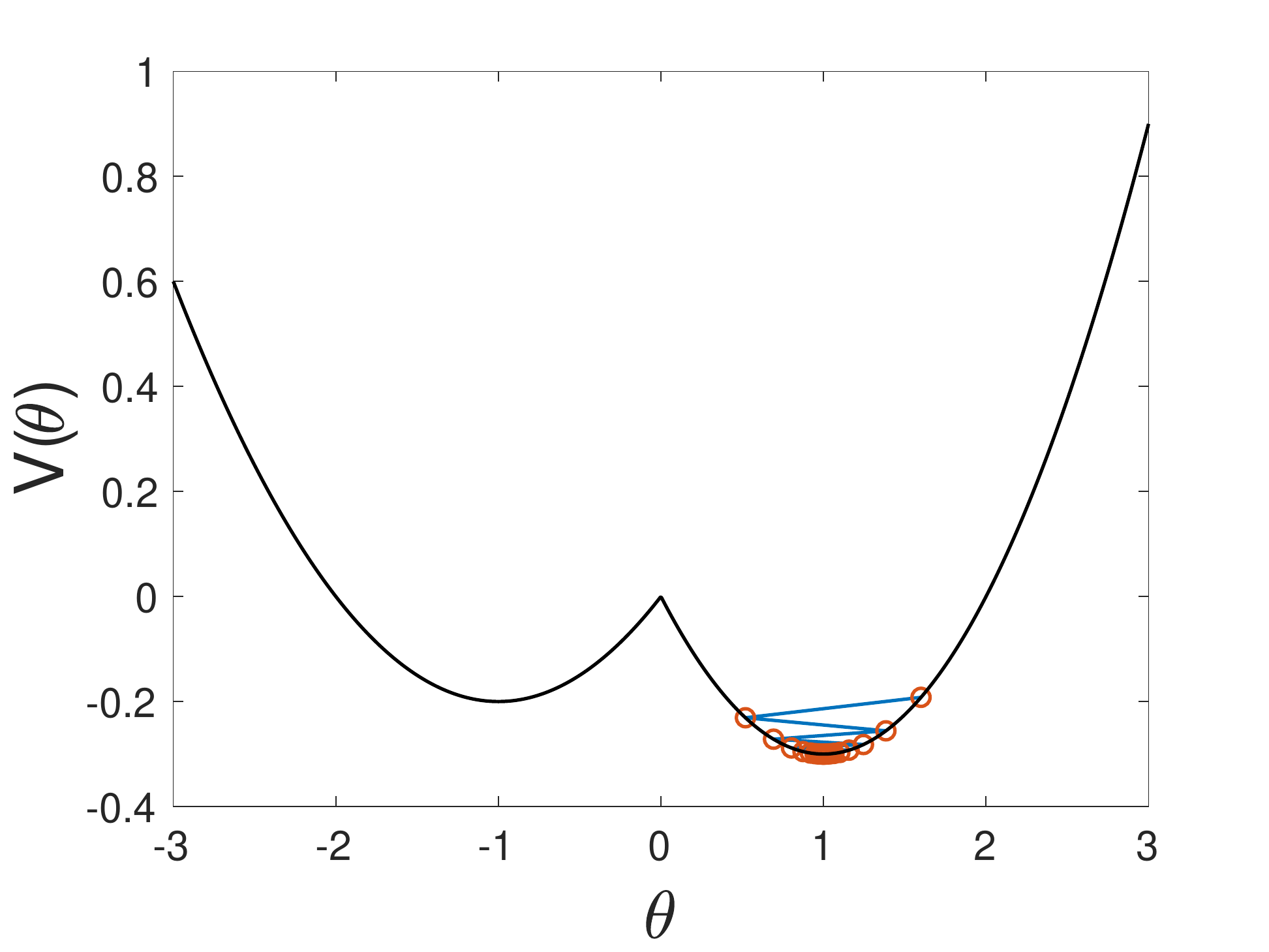}
\end{subfigure}\hfil 
\begin{subfigure}{0.25\textwidth}
  \includegraphics[width=\linewidth]{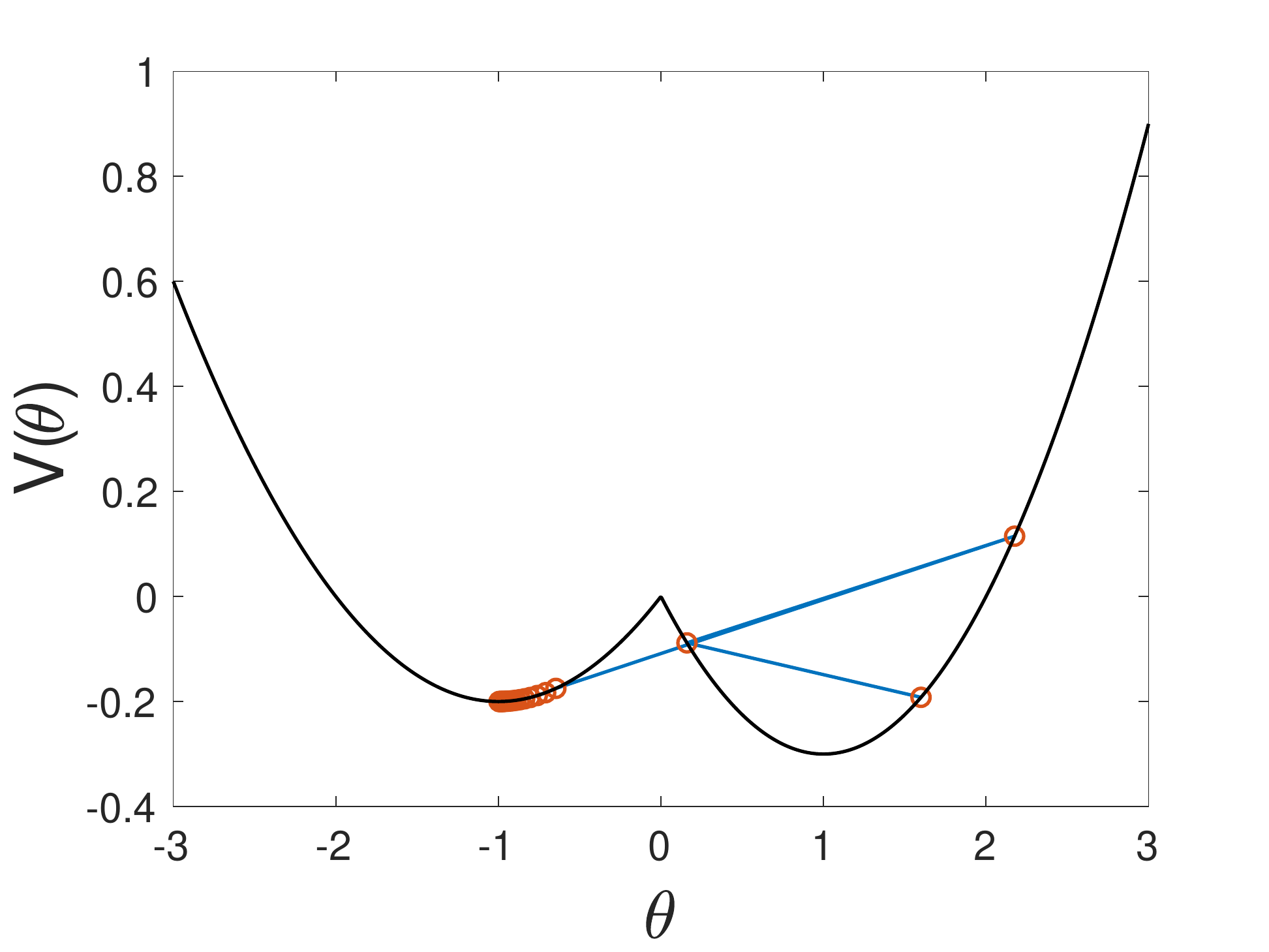}
\end{subfigure}\hfil 
\begin{subfigure}{0.25\textwidth}
  \includegraphics[width=\linewidth]{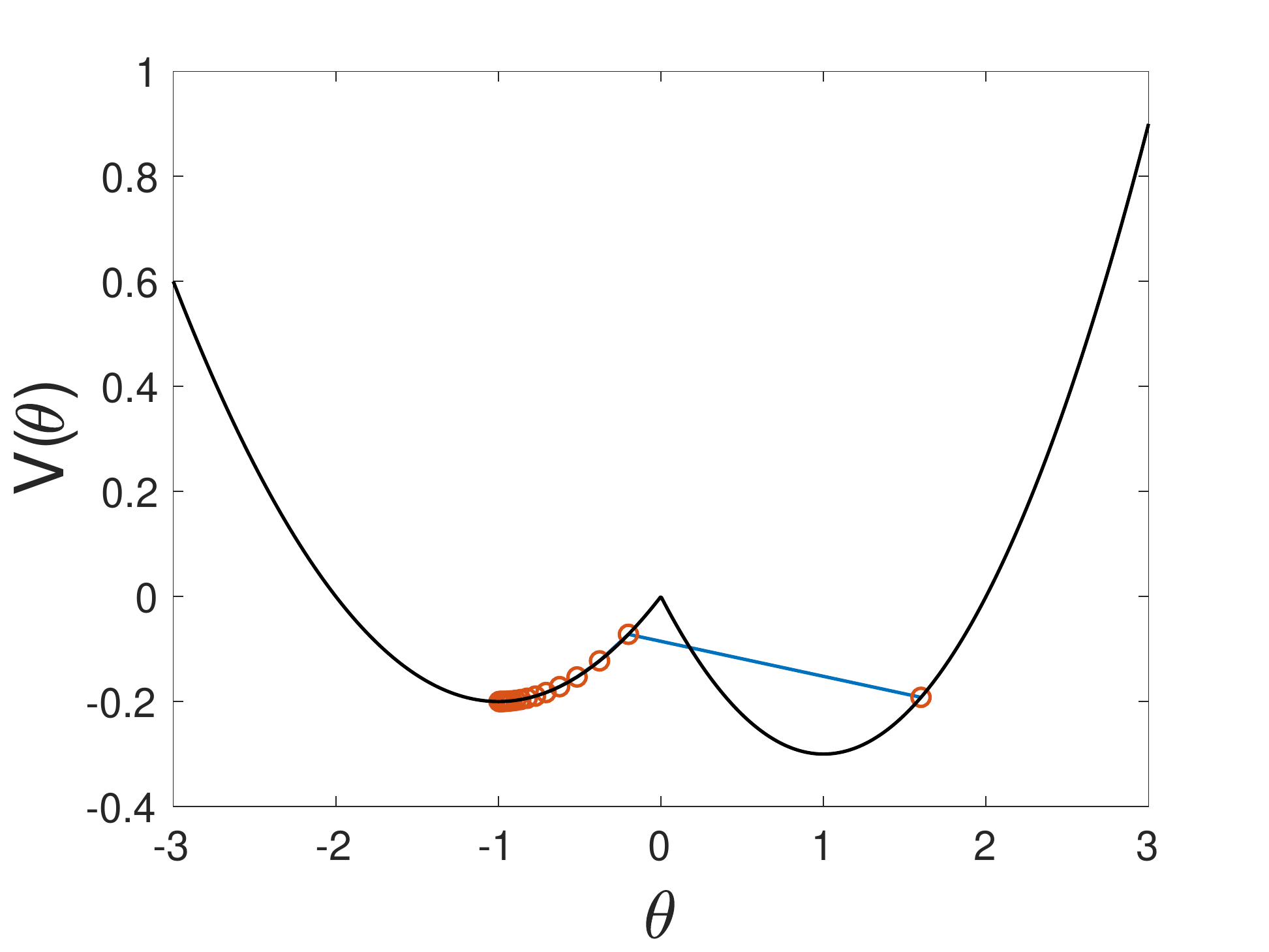}
\end{subfigure}\hfil 
\begin{subfigure}{0.25\textwidth}
  \includegraphics[width=\linewidth]{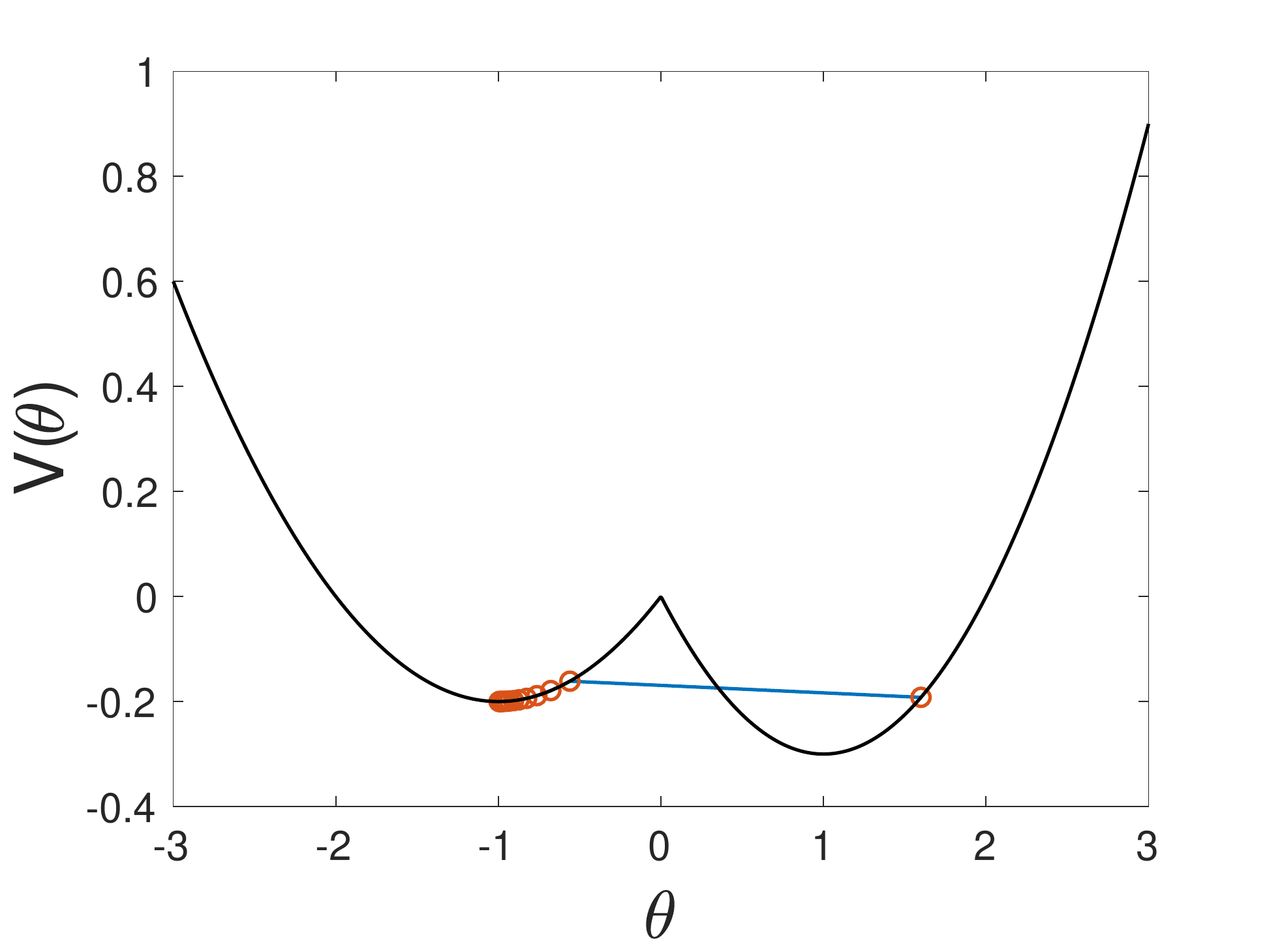}
\end{subfigure}

\medskip
\begin{subfigure}{0.25\textwidth}
  \includegraphics[width=\linewidth]{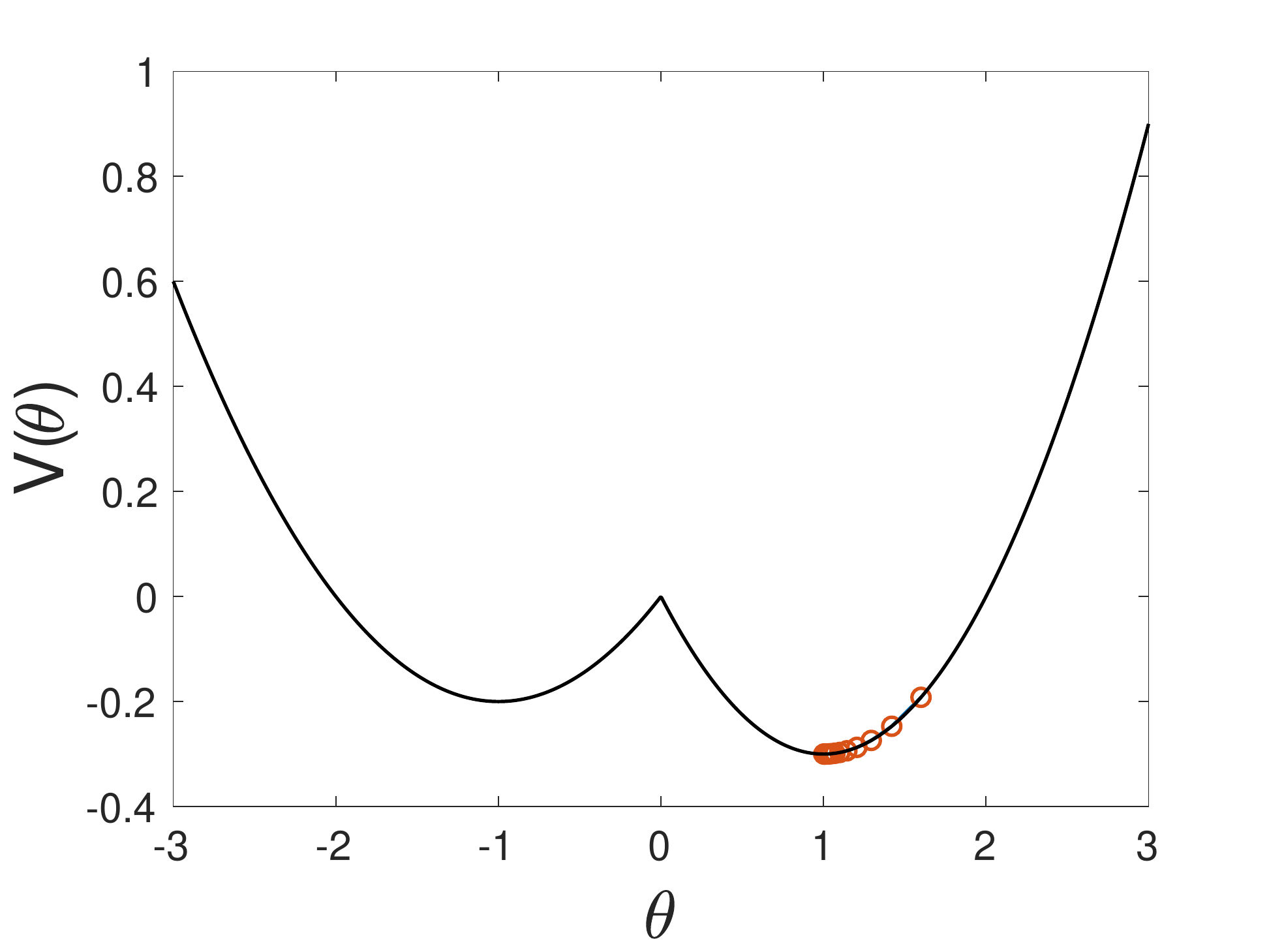}
  \caption{$\eta$ = 0.3}
\end{subfigure}\hfill 
\begin{subfigure}{0.25\textwidth}
  \includegraphics[width=\linewidth]{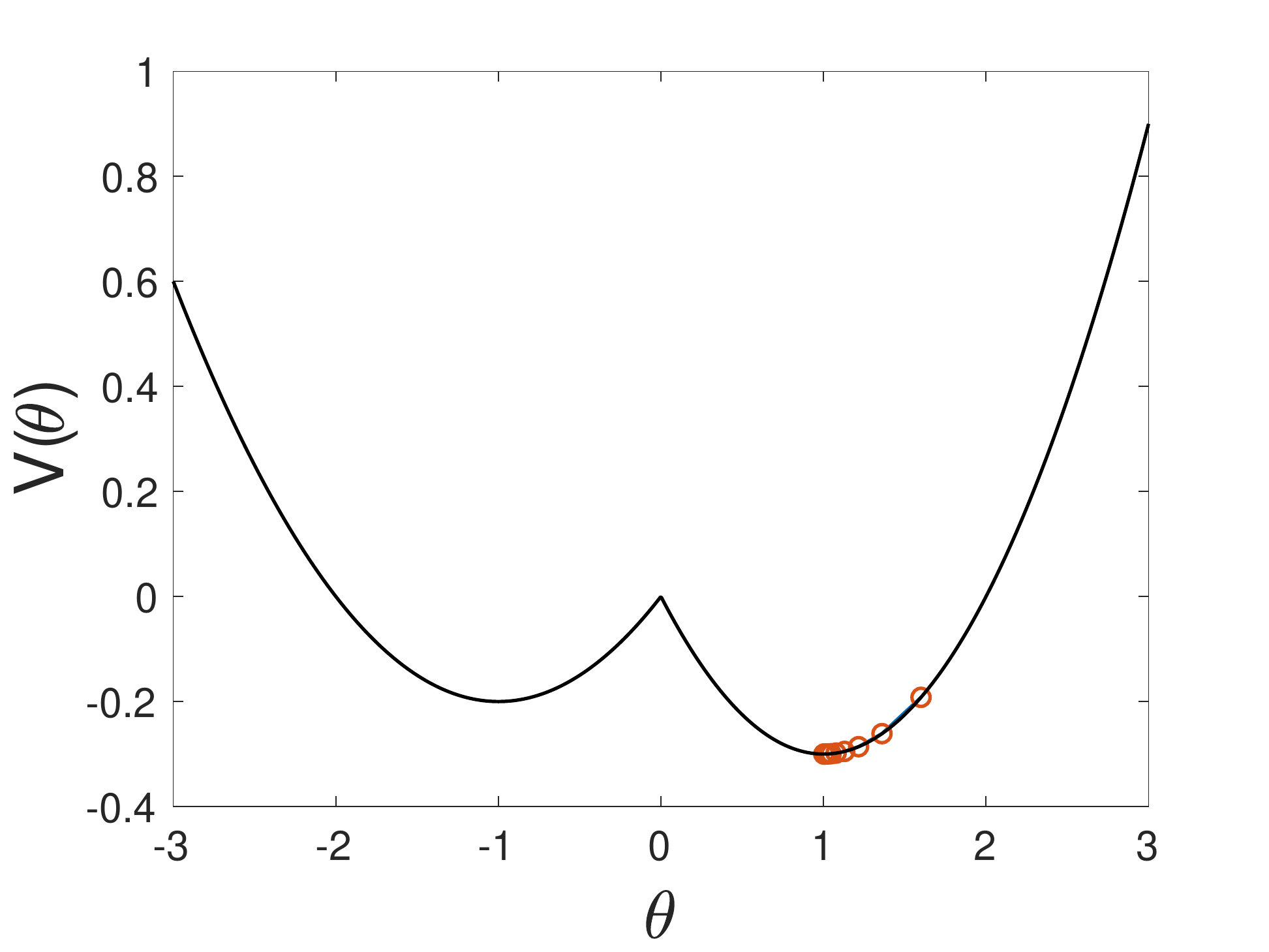}
  \caption{$\eta$ = 0.4}
\end{subfigure}\hfill 
\begin{subfigure}{0.25\textwidth}
  \includegraphics[width=\linewidth]{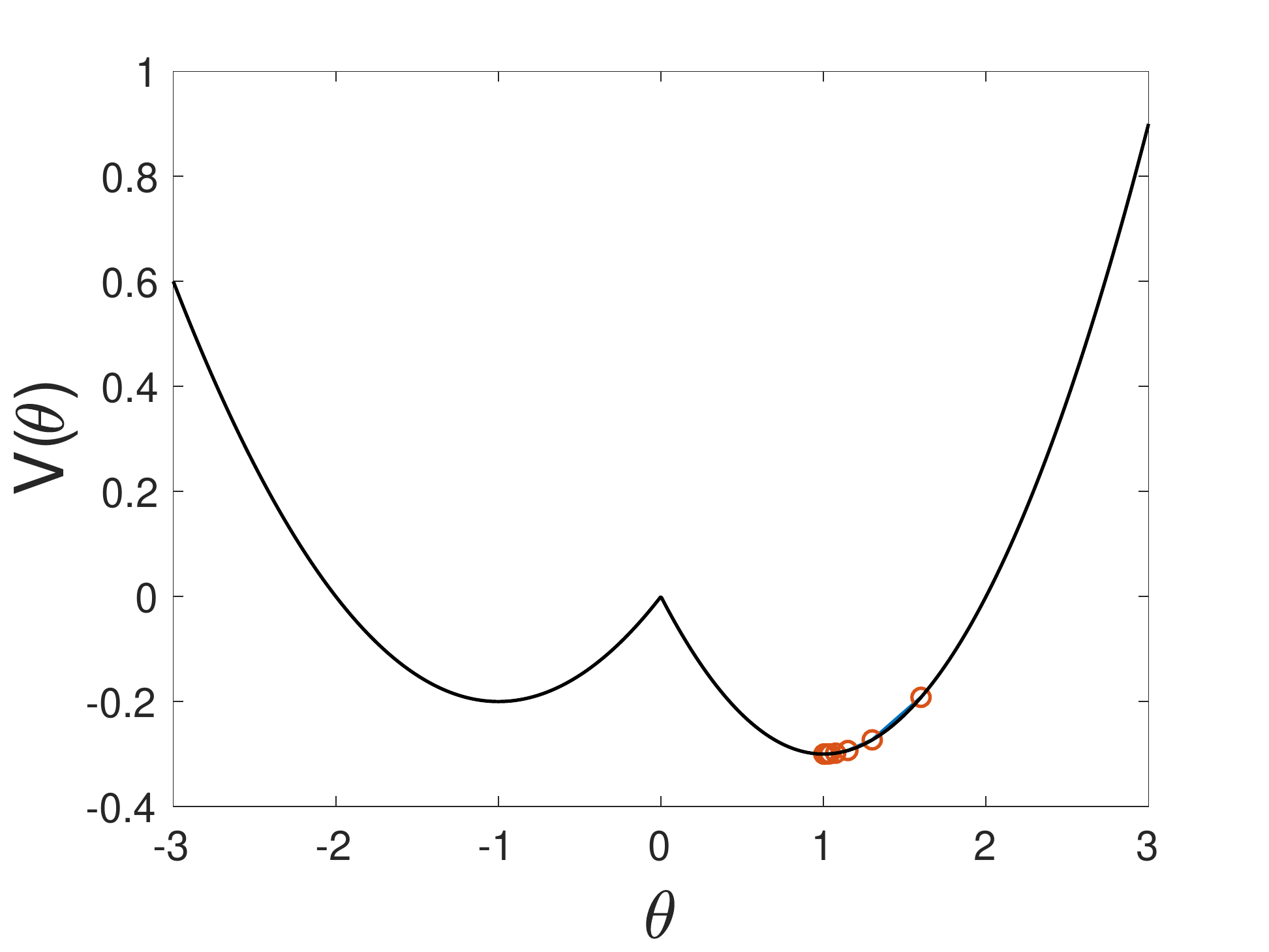}
  \caption{$\eta$ = 0.5}
\end{subfigure}\hfil
\begin{subfigure}{0.25\textwidth}
  \includegraphics[width=\linewidth]{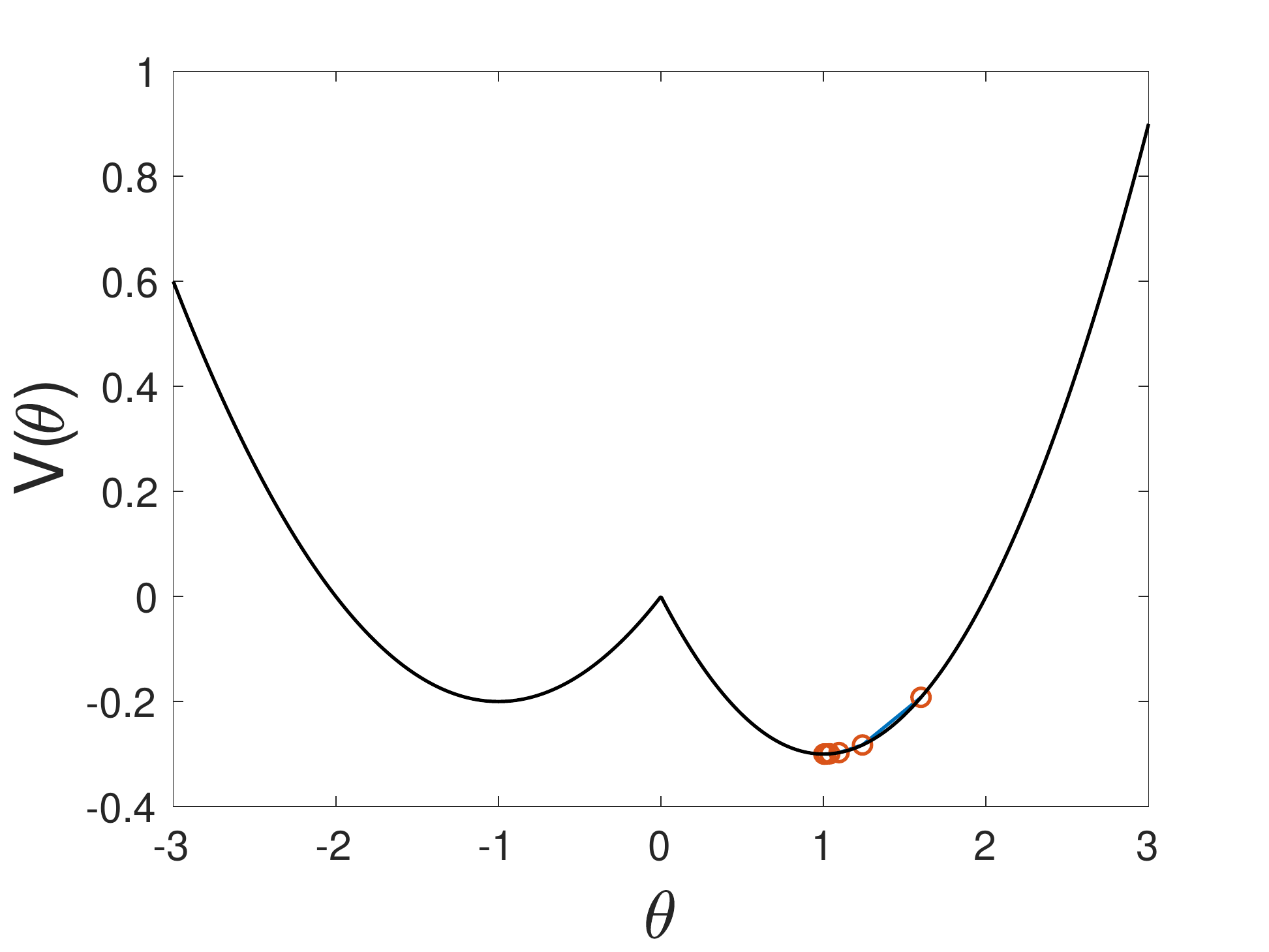}
  \caption{$\eta$ = 0.6}
\end{subfigure}
    \caption{A comparison of reweighting (upper row) and resampling (lower row) with $a_1/a_2 = 0.4/0.6$ and $f_1/f_2 = 0.9/0.1$ at various learning rates $\eta$. All experiments start at $\theta_0 = 1.6$. We can see that unless the learning rate $\eta<0.4$, resampling is more stable near the minimizer $\theta = 1$.}
\label{fig:images1}
\end{figure}

\begin{figure}[htb]
    \centering 
\begin{subfigure}{0.25\textwidth}
  \includegraphics[width=\linewidth]{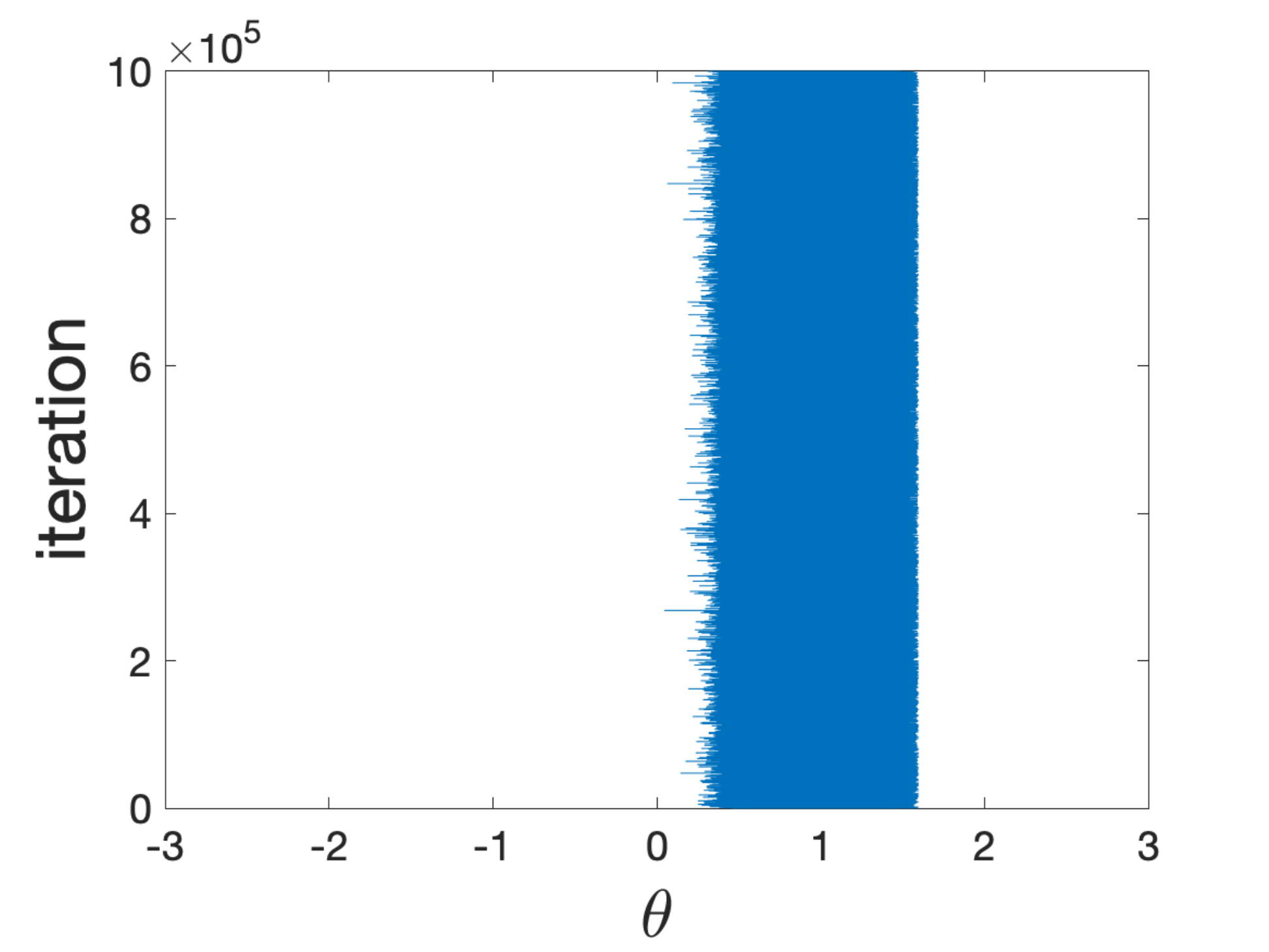}
\end{subfigure}\hfil 
\begin{subfigure}{0.25\textwidth}
  \includegraphics[width=\linewidth]{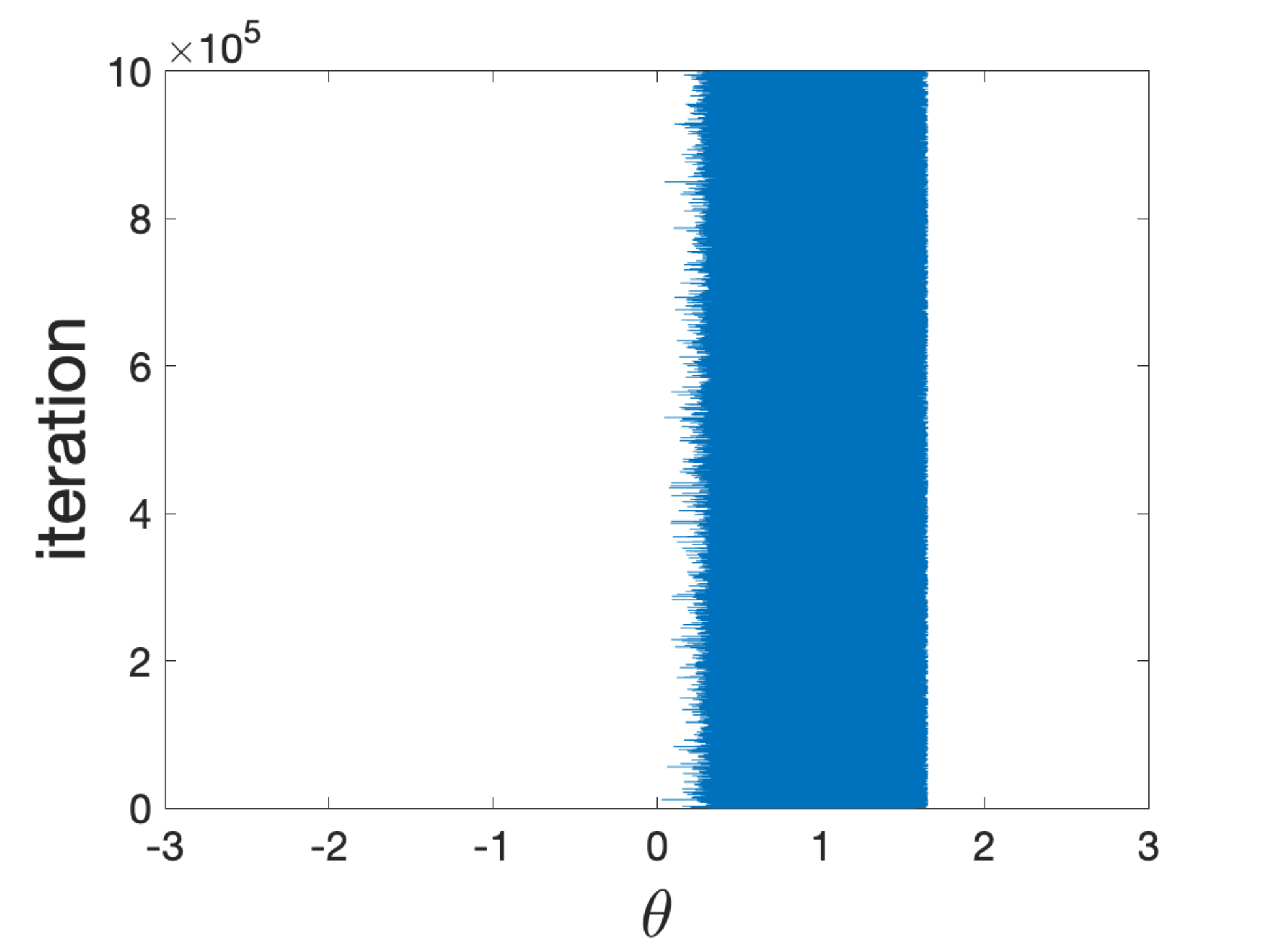}
\end{subfigure}\hfil 
\begin{subfigure}{0.25\textwidth}
  \includegraphics[width=\linewidth]{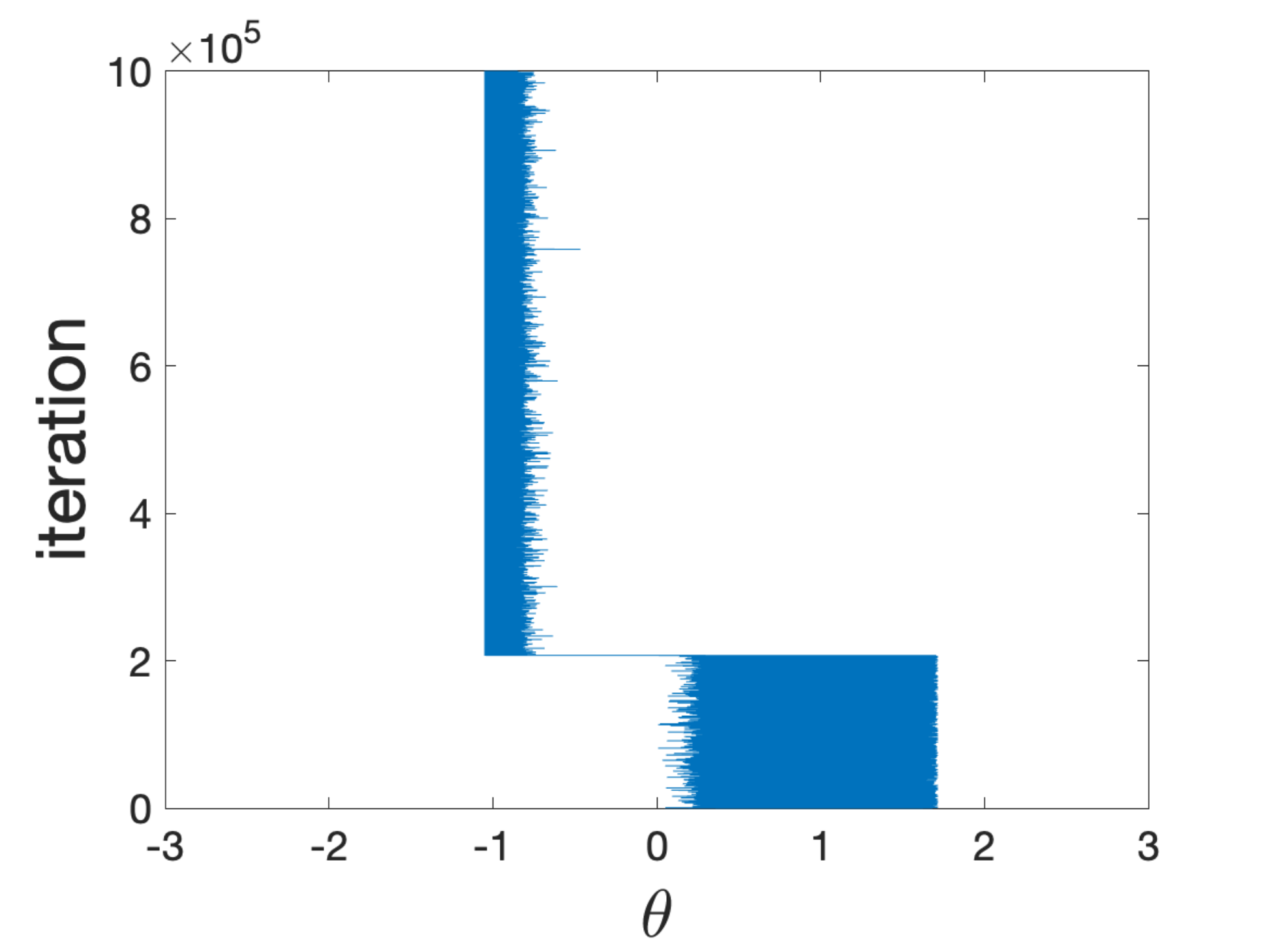}
\end{subfigure}\hfil 
\begin{subfigure}{0.25\textwidth}
  \includegraphics[width=\linewidth]{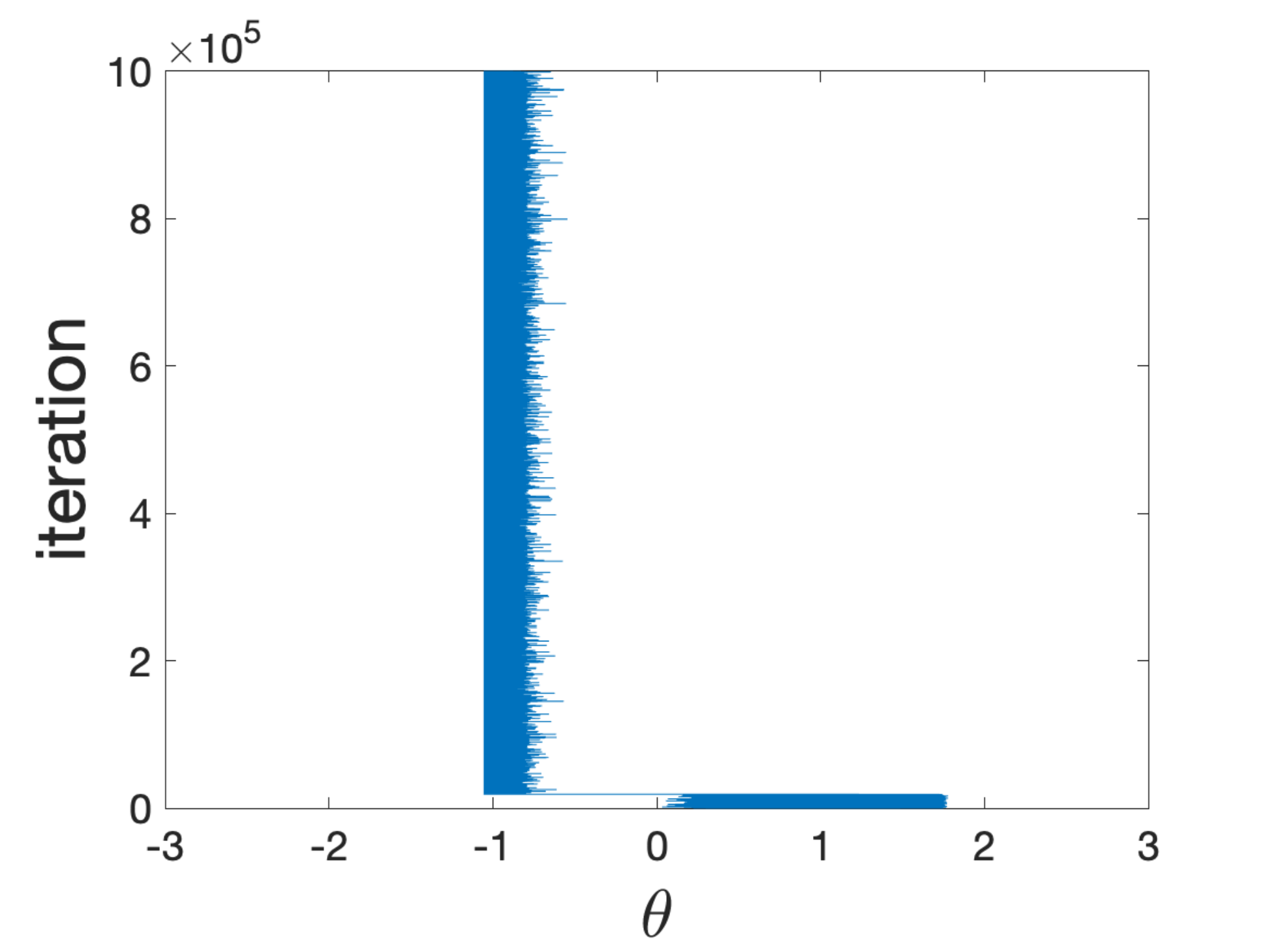}
\end{subfigure}

\medskip
\begin{subfigure}{0.25\textwidth}
  \includegraphics[width=\linewidth]{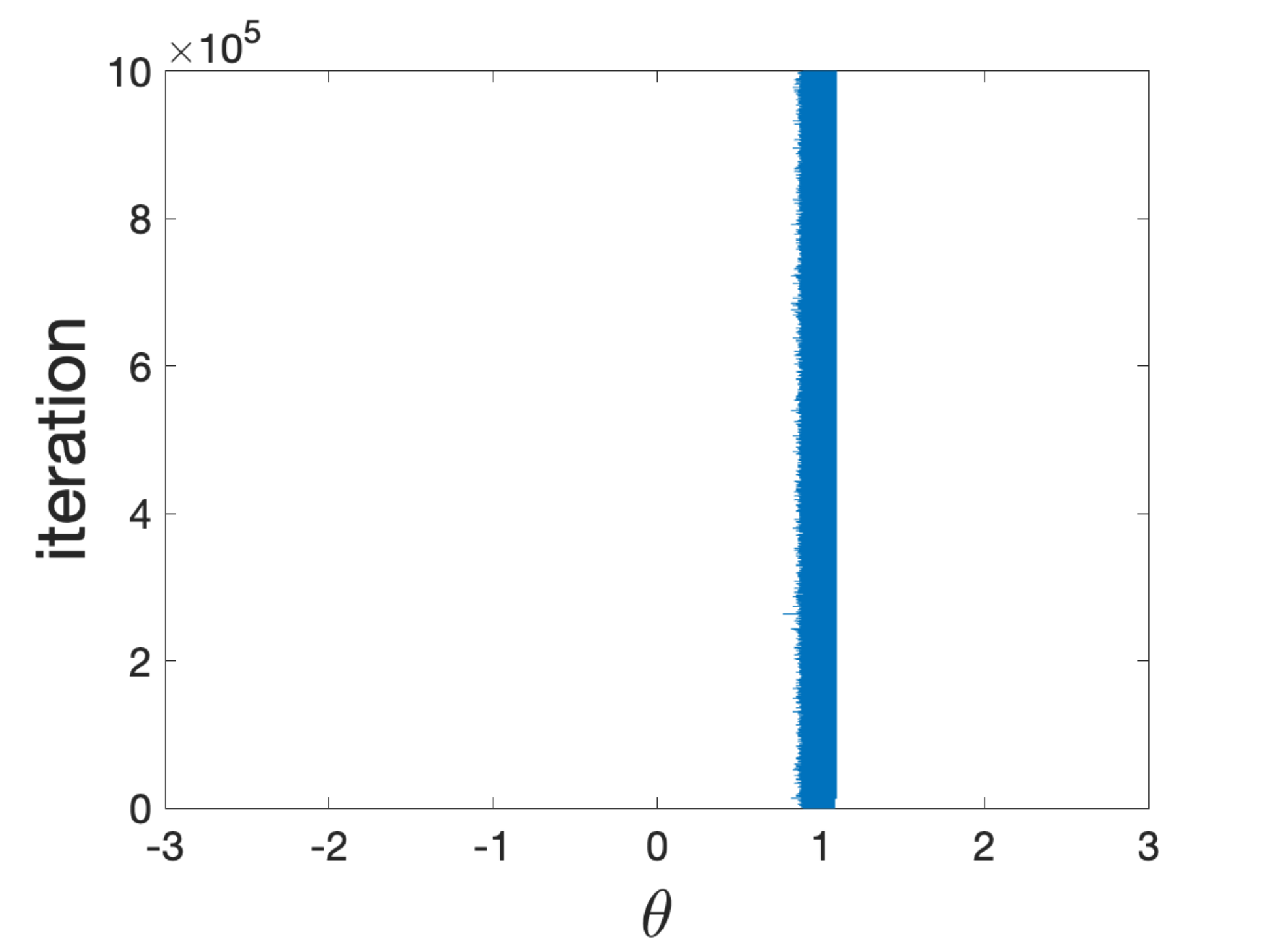}
  \caption{$\eta$ = 0.10}
\end{subfigure}\hfill 
\begin{subfigure}{0.25\textwidth}
  \includegraphics[width=\linewidth]{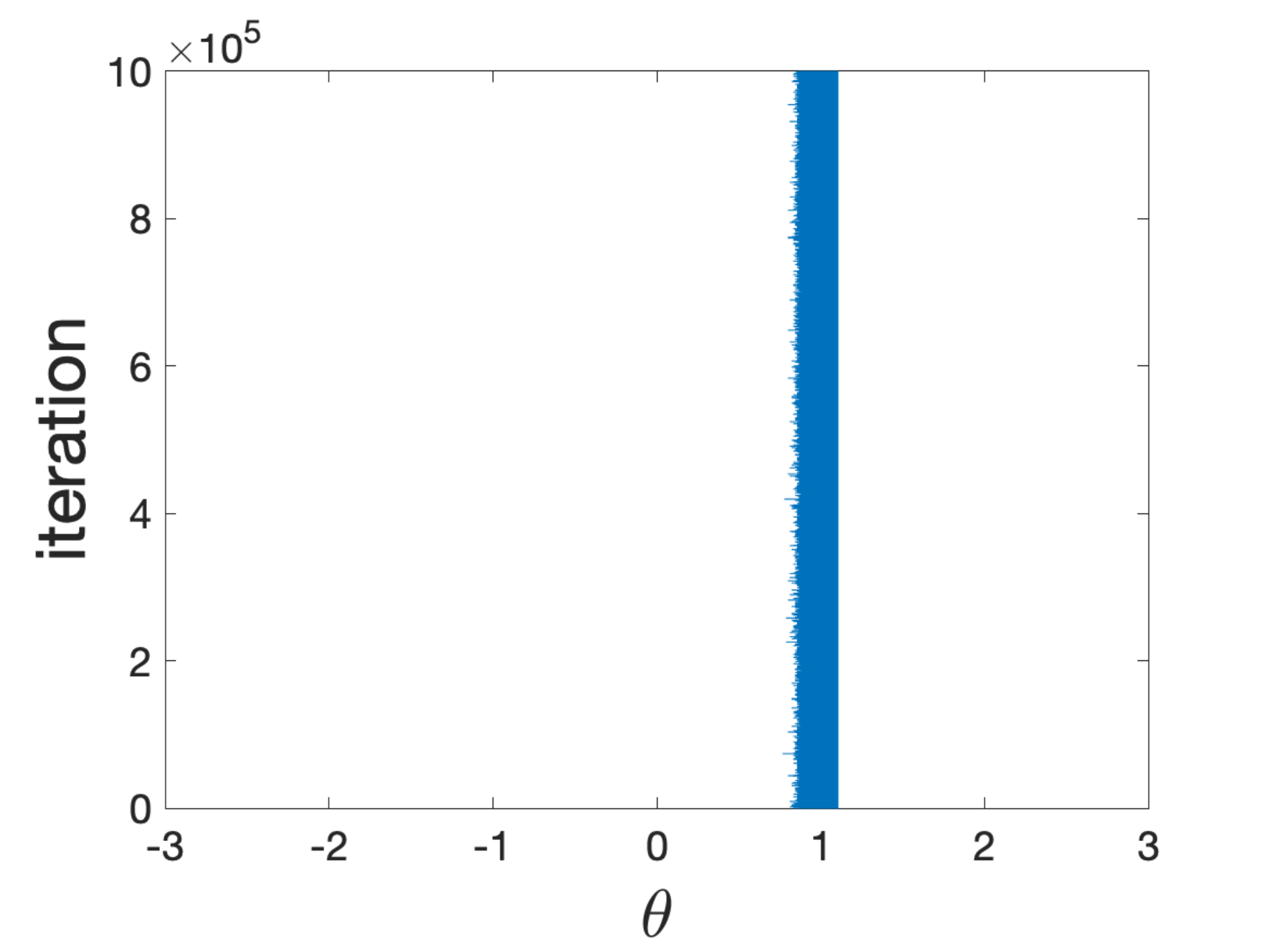}
  \caption{$\eta$ = 0.11}
\end{subfigure}\hfill 
\begin{subfigure}{0.25\textwidth}
  \includegraphics[width=\linewidth]{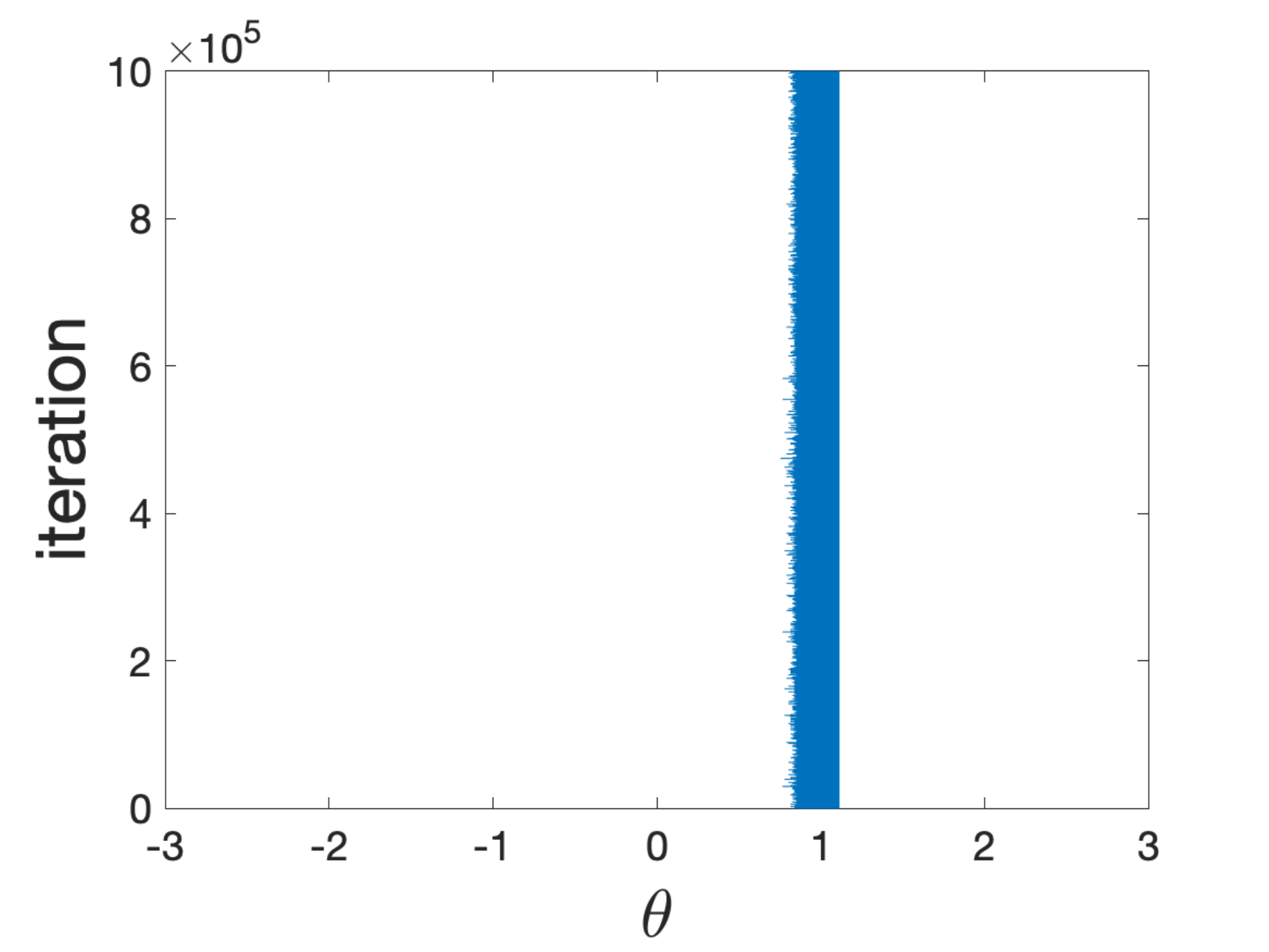}
  \caption{$\eta$ = 0.12}
\end{subfigure}\hfil
\begin{subfigure}{0.25\textwidth}
  \includegraphics[width=\linewidth]{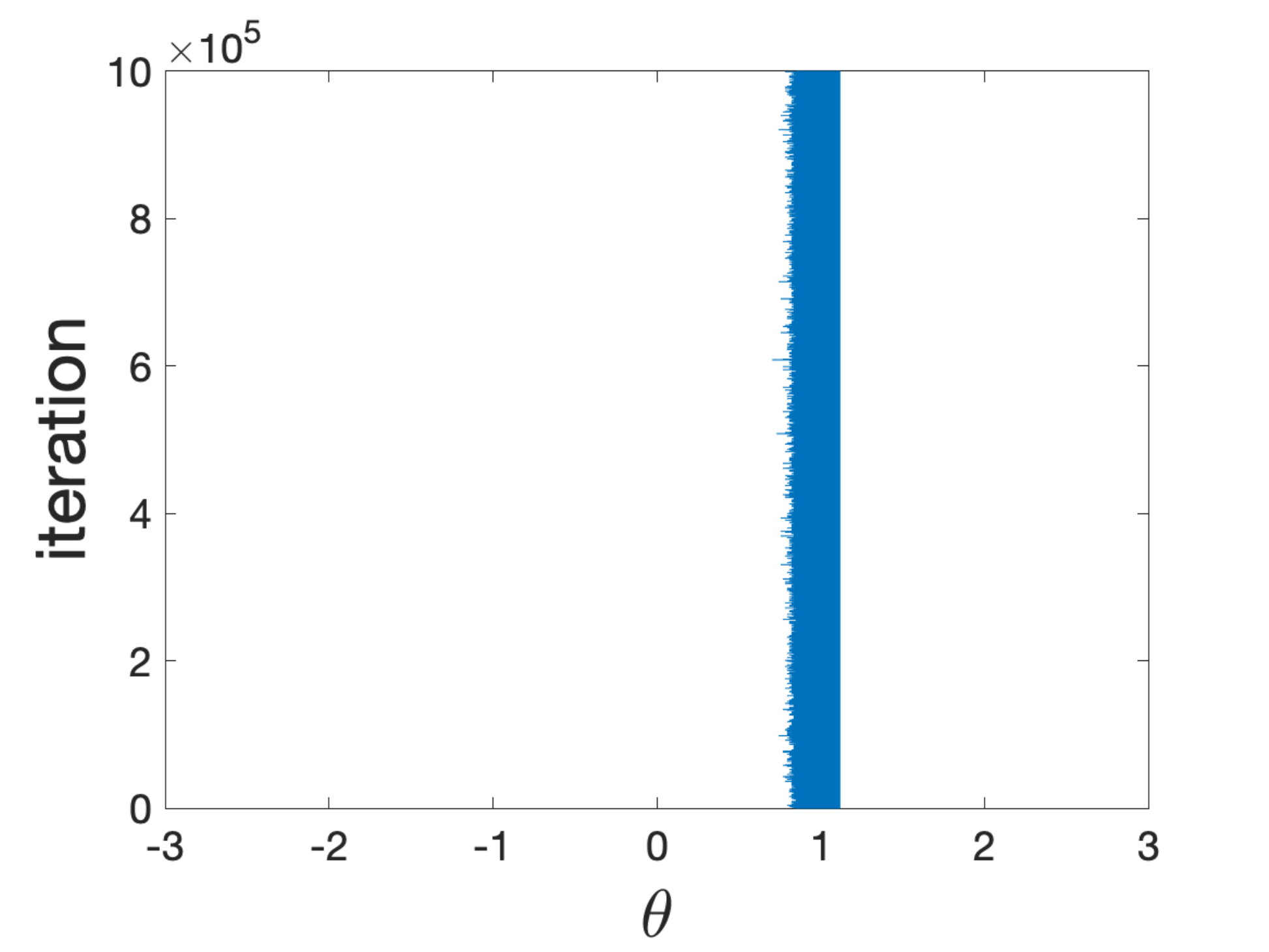}
  \caption{$\eta$ = 0.13}
\end{subfigure}
    \caption{A comparison of reweighting (upper row) and resampling (lower row) with $a_1/a_2 = 0.4/0.6$, $f_1/f_2 = 0.9/0.1$ and $\epsilon = 0.1$ at various learning rates $\eta$. All experiments start at $\theta_0 = 0.9$. We can see that unless the learning rate $\eta<0.12$, resampling is more reliable in the sense that its trajectory stays around the desired minimizer.}
\label{fig:images2}
\end{figure}
\end{document}